\documentclass[journal]{IEEEtran}
\usepackage{amsmath,amssymb,amsthm}
\usepackage{mathtools}
\usepackage[ruled,vlined,linesnumbered]{algorithm2e}
\usepackage{geometry}
\usepackage{hyperref}
\usepackage{booktabs}
\usepackage{natbib}
\usepackage{bm}
\newtheorem{theorem}{Theorem}[section]
\newtheorem{proposition}[theorem]{Proposition}

\newtheorem{assumption}{Assumption}
\newtheorem{remark}{Remark}[section]

\newcommand{\calT}{\mathcal{T}}
\newcommand{\innerprod}[2]{\left\langle #1,\, #2 \right\rangle}
\newcommand{\R}{\mathbb{R}}
\newcommand{\Sp}{\mathcal{S}^{p}_{++}}
\newcommand{\Spp}{\mathcal{S}_{++}^{p}}
\newcommand{\Dp}{\mathcal{D}^{p}_{++}}
\newcommand{\Or}{\mathcal{O}_{r}}
\newcommand{\Bpr}{\mathcal{B}^{p,r}}
\newcommand{\Mpr}{\mathcal{M}^{p,r}}
\newcommand{\Rpr}{\mathcal{R}_{p,r}}
\newcommand{\tr}{\operatorname{tr}}
\newcommand{\ddiag}{\operatorname{ddiag}}
\newcommand{\sym}{\operatorname{sym}}
\renewcommand{\skew}{\operatorname{skew}}

\newcommand{\grad}{\operatorname{grad}}
\newcommand{\Retr}{\operatorname{Retr}}
\newcommand{\PH}{\mathcal{P}^{\mathcal{H}}}
\newcommand{\dS}{d_{\mathcal{S}_{++}^p}}
\newcommand{\norm}[1]{\left\|#1\right\|}
\newcommand{\abs}[1]{\left|#1\right|}
\DeclareMathOperator{\diag}{diag}

\begin{document}

\title{Dynamic Elliptical Graph Factor Models via
Riemannian Optimization with Geodesic Temporal Regularization}

\author{Chuansen Peng and Xiaojing Shen
\thanks{The work was supported in part by the National Natural Science
 Foundation of China (NSFC) under Grant U2133208, 62203313. \textit{(Corresponding author: Xiaojing Shen)}\\
 
 Chuansen Peng and Xiaojing Shen are with School of Mathematics, Sichuan University, Chengdu, Sichuan, 610064, China. (e-mail: \href{mailto:pengchuansen@stu.scu.edu.cn}{pengchuansen@stu.scu.edu.cn}; \href{mailto:shenxj@scu.edu.cn}{shenxj@scu.edu.cn}).}
}



\maketitle

\begin{abstract}
Inferring time-varying graph structures from high-dimensional nodal
observations is a fundamental problem arising in neuroscience, finance,
climatology, and beyond.
Two intrinsic challenges govern this problem:
maintaining the \emph{temporal coherence} of the latent graph across
successive observation windows, and respecting the \emph{intrinsic
Riemannian geometry} of the symmetric positive definite manifold
on which precision matrices naturally reside, a curved space whose
geodesic structure departs fundamentally from that of the ambient
Euclidean space.
In this paper we propose dynamic estimation on the
Grassmann manifold with a factor model (\textsc{Degfm}), a novel algorithm that
jointly addresses both challenges.
We model the time-varying precision matrix sequence as a
low-rank-plus-diagonal structure governed by a latent elliptical graph factor
model, which drastically reduces the effective parameter count and
enables reliable estimation in the challenging small-sample regime.
Temporal coherence is enforced through a Riemannian geodesic penalty
defined on the Grassmann manifold, ensuring that the estimated graph
trajectory is smooth with respect to the intrinsic geometry rather
than the ambient Euclidean space. To solve the resulting non-convex optimization problem over
Grassmann-manifold-valued sequences subject to the LRaD constraint, we
derive an efficient Riemannian gradient descent algorithm that
respects the manifold structure at every iterate and rigorously
establish its convergence to a stationary point.
Extensive experiments on both synthetic benchmarks and real-world
datasets demonstrate that \textsc{Degfm} consistently outperforms
state-of-the-art baselines across all evaluation metrics,
confirming the practical effectiveness of the proposed framework. 
\end{abstract}

\begin{IEEEkeywords}
Graph learning, time-varying graph, Riemannian optimization, error bound.
\end{IEEEkeywords}

\section{Introduction}
\label{sec:intro}
 
Graphs provide a natural and powerful language for representing the
relational structure of complex systems: nodes encode entities of
interest, while edges capture pairwise interactions or statistical
dependencies among them~\cite{newman2003structure, watts1998collective, barabasi1999emergence, kolaczyk2014statistical}.
This representational flexibility has driven widespread adoption
across disciplines as diverse as social network analysis, brain
connectivity modelling, protein interaction mapping, power-grid
monitoring, and financial market analysis~\cite{shuman2013emerging, ortega2018graph, dong2019learning, mateos2019connecting,hamilton2017inductive}.
The recent confluence of graph signal processing~(GSP)
and geometric deep learning~\cite{defferrard2016convolutional, kipf2017semi, bruna2014spectral, bronstein2021geometric} has further
intensified interest in structured data representations, establishing
graphs as a first-class citizen in modern machine learning
pipelines~\cite{wu2022graph, ma2021deep}.
Yet real-world systems are rarely static: functional brain networks
reconfigure across cognitive states~\cite{monti2014estimating},
financial correlation structures shift dramatically around market
stress events~\cite{mantegna1999hierarchical, onnela2003dynamics},
gene regulatory programmes are rewired during cellular
differentiation, and climate teleconnections evolve with seasonal
and decadal forcing.
Capturing such temporal dynamics requires moving beyond fixed graph
models toward \emph{time-varying} graph representations that encode
evolving relational structure as a function of
time~\cite{kolar2010estimating, zhou2010time, ahmed2009recovering, hallac2017network,xuan2007modeling,kazemi2020representation, skarding2021foundations}.

In most practical settings, however, the graph topology is
\emph{latent}: one observes nodal signals or multivariate time
series, but the connectivity structure that generated them is unknown
and must be inferred from data.
This graph topology inference problem, variously termed graph
learning, network reconstruction, or structural equation
modelling, has emerged as a central challenge at the intersection of
signal processing, high-dimensional statistics, and
machine learning~\cite{kolaczyk2009statistical, lauritzen1996graphical, dong2016learning, mateos2019connecting, egilmez2017graph, pasdeloup2018characterization, navarro2024joint, zhu2021survey}.
The difficulty is compounded in the time-varying setting, where one
must simultaneously recover the graph at each instant and ensure
that the inferred sequence of graphs varies smoothly or according to
some structured prior, all from observations whose count per window
may be far smaller than the number of nodes.
 
The static counterpart of this problem has a rich history.
From a probabilistic graphical model perspective, the conditional
independence structure of a multivariate Gaussian distribution is
encoded in the sparsity pattern of its precision (inverse covariance)
matrix, and inferring the graph reduces to sparse precision matrix
estimation~\cite{dempster1972covariance, lauritzen1996graphical}.
This insight underpins the celebrated graphical
lasso~(\textsc{GLasso})~\cite{friedman2008sparse, banerjee2008model},
which solves an $\ell_1$-penalized log-likelihood problem and has
become a canonical tool for high-dimensional covariance
selection~\cite{meinshausen2006high, yuan2007model, tibshirani1996regression}.
Extensions to latent variable settings, where observed variables are
confounded by unobserved factors, were developed by
\cite{chandrasekaran2012latent}, who showed that the precision matrix
decomposes as a sparse component plus a low-rank perturbation,
recoverable via a convex nuclear-norm-plus-$\ell_1$ program.
Parallel developments in the GSP community recast graph learning as
a structured Laplacian estimation problem, exploiting the assumption
that graph signals are smooth over the inferred
topology~\cite{dong2016learning, kalofolias2016learn,
egilmez2017graph, segarra2017network}.
More recent work has leveraged spectral
templates~\cite{segarra2017network},
Laplacian-constrained optimization~\cite{egilmez2017graph,
kumar2020unified, ying2022does},
and diffusion-process models~\cite{pasdeloup2018characterization}
to impose physically motivated structural constraints on the learned
graph.
The growing intersection of graph learning and deep learning has
produced data-driven topology discovery methods that are
increasingly scalable~\cite{pu2021learning, fatemi2021slaps,
saboksayr2021accelerated},
and recent work has demonstrated that
combining structural constraints with spectral inductive
biases substantially improves sample
efficiency~\cite{cardoso2021nonconvex, saboksayr2021online,
berger2020graph}.
Estimation under non-Gaussian or heavy-tailed observations
has also received renewed attention, motivated by financial and
neuroimaging applications where the Gaussian assumption is
routinely violated~\cite{sun2022robust, ying2020nonconvex,shafipour2021network}.
Despite their diversity, all of these approaches produce a single,
\emph{time-invariant} graph estimate and are therefore unsuitable to
systems whose connectivity evolves over time.
 
The time-varying graph inference problem has attracted growing
attention over the past fifteen years.
Early statistical approaches employed kernel
smoothing~\cite{kolar2010estimating, zhou2010time} or local
likelihood~\cite{talih2005structural} to track slowly drifting
network structure, while \cite{ahmed2009recovering} framed the
problem as a sequence of penalized regressions linked by a
smoothness prior.
\cite{xuan2007modeling} proposed a Bayesian treatment based on
changepoint detection, enabling abrupt structural transitions.
A landmark contribution was the time-varying graphical
lasso~(\textsc{TVGL}) of \cite{hallac2017network}, which augments
the \textsc{GLasso} objective with a fused-lasso-type penalty
$\sum_t \|\Theta_t - \Theta_{t-1}\|_F$ that encourages successive
precision matrices to remain close in Frobenius norm.
Piecewise-constant models, in which the graph is assumed to change
only at unknown breakpoints, were studied by
\cite{gibberd2017regularized} via the group-fused graphical lasso.
In neuroscience, \cite{monti2014estimating} adapted sliding-window
and $\ell_1$-penalized estimators to track time-varying
functional connectivity from fMRI data.
More recently, factor-model-based formulations have been explored to
exploit low-dimensional latent structure in large-scale
systems~\cite{lam2012factor, stock2002forecasting, bai2002determining,
tipping1999probabilistic}, and Riemannian optimization on the SPD
manifold has been applied to static covariance
estimation~\cite{absil2008optimization, bonnabel2013stochastic,
sra2015conic, boumal2014manopt, boumal2023introduction}.
The past few years have witnessed a surge of activity in this area.
\cite{natali2022learning} developed an online algorithm for tracking
graph topology from streaming smooth signals, while
\cite{shafipour2020online} extended this line of work to
non-stationary environments.
High-dimensional changepoint detection in graphical models has been
addressed by \cite{safikhani2022joint} and \cite{padilla2022optimal},
who established optimal detection rates under sparsity constraints.
Tensor-based representations that simultaneously exploit
cross-sectional and temporal structure have been proposed
by \cite{lin2023tensor}, and deep learning architectures for
time-varying graph inference have been explored
in~\cite{cao2021spectral, deng2021graph, li2021dynamic}.
The interplay between dynamic graphical models and macroeconomic
factor structures has been studied by
\cite{barigozzi2015dynamic} and \cite{zheng2011estimation},
who highlighted the importance of low-rank components in capturing
systemic risk during financial crises.
Online Riemannian methods for tracking SPD-valued parameters have
been developed by \cite{jeuris2022survey} and
\cite{huang2022riemannian}, providing algorithmic foundations that
the present work builds upon. 

Despite this progress, the majority of existing time-varying graph
estimation methods operate within a Euclidean framework, and have
achieved substantial advances in modeling temporal dynamics through
well-established tools such as $\ell_1$ penalization and
Frobenius-norm-based smoothness regularization.
Yet precisely since the parameter space of precision matrices is
not flat, these Euclidean foundations give rise to fundamental
challenges in practically demanding settings that have not yet been
fully resolved.
First, the affine-invariant geodesic distance between precision
matrices can differ substantially from their Frobenius-norm
proximity, so any regularizer that penalizes Euclidean deviations
between successive estimates implicitly distorts the intrinsic
geometry of the SPD manifold, leading to estimation bias and
unreliable interpolation of the graph trajectory.
Second, high-dimensional applications, such as the quarterly
S\&P~500 dataset considered here, where the sample-to-dimension
ratio $n/p \approx 0.16$, routinely operate in the regime $n \ll p$,
in which unregularized sample covariance matrices are severely
ill-conditioned or singular, rendering standard maximum-likelihood
estimation statistically unreliable.
Third, when temporal dependencies among successive graphs are
strong, any framework that estimates each observation window
independently inevitably discards cross-window information,
yielding graph sequences that are temporally incoherent and
difficult to interpret or exploit for downstream inference.

To address these challenges simultaneously, we propose Dynamic Elliptical Graph Factor Models,
termed \textsc{Degfm}, whose main contributions are as follows.

\begin{itemize}
    \item We formulate the time-varying graph inference problem as
optimization over a sequence of SPD matrices subject to a
low-rank-plus-diagonal~(LRaD) constraint, which encodes the
hypothesis that node interactions are mediated by a small number of
latent factors; this constraint reduces the effective parameter count
from $\mathcal{O}(p^2)$ to $\mathcal{O}(pr)$, enabling reliable estimation when
$n \ll p$.
    \item We introduce a Riemannian geodesic temporal regularizer
$\mu \sum_t \dS^2(\Theta_t, \Theta_{t-1})$, where
$\dS^2(\mathbf{A},\mathbf{B}) = \|\log(\mathbf{A}^{-1/2}\mathbf{B}\mathbf{A}^{-1/2})\|_F$ is the
affine-invariant geodesic distance on the SPD manifold, ensuring
that the estimated graph trajectory is geometrically smooth with
respect to the intrinsic curvature of the parameter space.
    \item We derive an efficient Riemannian gradient descent
optimization algorithm that respects the manifold structure at every
iterate, and prove convergence to a stationary point of the
non-convex objective under mild regularity conditions.
    \item We provide a thorough theoretical analysis including
bounds on the estimation error and sample complexity that explicitly
characterize the benefit of the LRaD constraint in the small-$n$
regime.
    \item Extensive experiments on both synthetic benchmarks and real-world
datasets demonstrate that \textsc{Degfm} consistently outperforms
state-of-the-art baselines across all evaluation metrics,
confirming the practical effectiveness of the proposed framework.
\end{itemize}

The remainder of the paper is organized as follows.
Section~\ref{sec:background} introduces the necessary background on
Gaussian graphical models, elliptical distributions, the LRaD
parameterization of precision matrices, and Riemannian optimization
on matrix manifolds.
Section~\ref{sec:problem} formulates the \textsc{Degfm} objective,
comprising a penalized per-window data-fit term and a geodesic
temporal regularizer, posed jointly on a product quotient manifold.
Section~\ref{sec:geometry} constructs the quotient manifold
$\mathcal{B}_{p,r}/\mathcal{O}_r$, derives the associated
Riemannian gradient and retraction, and assembles an efficient
Riemannian conjugate gradient solver with complexity analysis.
Section~\ref{sec:theory} establishes convergence of the solver to a
first-order stationary point, a finite-sample estimation bound, and
consistent edge recovery under a beta-min condition.
Section~\ref{sec:experiments} reports empirical results on synthetic
and real-world datasets, and Section~\ref{sec:conclusion} concludes.

\paragraph{Notation.}
$\Spp$ ($\mathcal{S}_+^{p,r}$) denotes $p\times p$ symmetric positive
definite (positive semidefinite of rank $r$) matrices.
$\Dp$ is the set of $p\times p$ diagonal positive definite matrices.
$\Rpr:=\{\mathbf{Y}\in\R^{p\times r}:\det(\mathbf{Y}^\top \mathbf{Y})\neq 0\}$ is the set of full-rank
$p\times r$ matrices.
$\Or$ is the orthogonal group in dimension $r$.
For a matrix $\mathbf{A}$: $\sym(\mathbf{A})=\tfrac{1}{2}(\mathbf{A}+\mathbf{A}^\top)$,
$\skew(\mathbf{A})=\tfrac{1}{2}(\mathbf{A}-\mathbf{A}^\top)$, $\ddiag(\mathbf{A})$ sets all off-diagonal entries to zero. $\mathbb{R}_+$ denotes the set of all positive real numbers. 
$\lambda_j(\mathbf{A})$ denotes the $j$-th eigenvalue of the matrix $\mathbf{A}$, arranged in non-increasing order. Finally, $\norm{\cdot}_{0,\mathrm{off}}$ counts the number of nonzero off-diagonal entries.


\section{Preliminaries}
\label{sec:background}
This section collects the background needed for the proposed dynamic elliptical graph factor model. We first review the standard Gaussian graphical model formulation and then introduce elliptical distributions as a robust generalization of Gaussian observations. We next describe the low-rank-plus-diagonal (LRaD) parameterization on precision matrices, which is the structural core of our model. Finally, we recall the basic notions of Riemannian optimization that will be used to solve the resulting constrained problem on matrix manifolds.
\subsection{Gaussian Graphical Models and Graph Learning}

In a Gaussian graphical model (GGM), an undirected graph $\mathcal{G}=(V,E)$ with $|V|=p$ vertices encodes conditional independence relations through the precision matrix $\Theta=\Sigma^{-1}$. Specifically, $(i,j)\in E$ if and only if $\Theta_{ij}\neq 0$. Let $\mathbf{x}_i\in\mathbb{R}^p$ denote the $i$-th zero-mean graph signal defined on the $p$ vertices of $\mathcal{G}$, where each entry corresponds to the signal value associated with one node. Given $n$ i.i.d.\ samples, graph learning is commonly formulated as a penalized maximum likelihood problem \cite{dong2016learning}:
\[
\min_{\Theta\in\mathcal{S}_{++}^p}
\Bigl[\operatorname{tr}(\mathbf{S}\Theta)-\log\det(\Theta)+\lambda\,h(\Theta)\Bigr],
\]
where $\mathbf{S}=\frac{1}{n}\sum_{i=1}^n \mathbf{x}_i\mathbf{x}_i^\top$ is the sample covariance matrix, $\lambda\in\mathbb{R}_+$ is a regularization parameter, and $h(\Theta)=\sum_{i\neq j}|\Theta_{ij}|$ is the $\ell_1$ penalty used in graphical Lasso-type formulations. This convex formulation is attractive, but it does not exploit latent low-dimensional structure when present in the precision matrix \cite{zhang2025graph}. Moreover, the Gaussian assumption renders it sensitive to outliers and heavy-tailed observations, a limitation addressed by the distributional generalization described next.

\subsection{Elliptical Distributions}

Let $\mathbf{x}\in\mathbb{R}^p$ be a zero-mean random vector. We say that $\mathbf{x}$ follows an elliptically symmetric distribution with scatter matrix $\Sigma\in\mathcal{S}_{++}^p$ and density generator $g:(0,\infty)\to(0,\infty)$, written as $\mathbf{x}\sim \mathcal{ES}(0,\Sigma,g)$, if its density can be expressed as
\[
f(\mathbf{x};\Sigma)\propto \det(\Sigma)^{-1/2}\,g\!\left(\mathbf{x}^\top \Sigma^{-1}\mathbf{x}\right).
\]
This family strictly generalizes the Gaussian model. In particular, choosing $g(s)=e^{-s/2}$ recovers the multivariate normal distribution, while the multivariate $t$ distribution with $\nu>2$ degrees of freedom corresponds to $g(s)=(1+s/\nu)^{-(\nu+p)/2}$. Throughout the paper, we write $\rho=-\log g$ and $u(s)=-2g'(s)/g(s)=2\rho'(s)$ for the associated influence function. Under the Gaussian model, $u\equiv 1$; for the $t$ distribution, $u(s)=(\nu+p)/(\nu+s)$, which downweights large quadratic forms and yields robustness to heavy-tailed observations. 

While the elliptical model broadens the distributional scope of GGMs, the high-dimensional regime introduces a separate challenge: the precision matrix has $\mathcal{O}(p^2)$ free parameters. The following subsection introduces a structured low-rank-plus-diagonal parameterization that reduces the effective dimensionality to $\mathcal{O}(pr)$ ($r\ll p$) while preserving the positive-definiteness constraint.

\subsection{Low-Rank-Plus-Diagonal Structure on Precision Matrices}
\label{sec:lrad}

We assume that the precision matrix admits a \emph{low-rank-plus-diagonal}(LRaD) decomposition
\[
\Theta=\mathbf{L}+\mathbf{D},\qquad \mathbf{D}\in\mathcal{D}_{++}^p,\quad \mathbf{L}\in\mathcal{S}_{+}^{p,r},
\]
where $\mathbf{L}$ is positive semidefinite with rank $r\ll p$. This is the low-rank-plus-diagonal precision structure \cite{zhang2025graph}. Since every rank-$r$ positive semidefinite matrix can be factorized as $\mathbf{L}=\mathbf{Y}\mathbf{Y}^\top$ with $\mathbf{Y}\in\mathbb{R}^{p\times r}$ full column rank, we may equivalently parameterize
\[
\Theta=\mathbf{Y}\mathbf{Y}^\top+\mathbf{D}.
\]
This representation reduces the effective degrees of freedom from $p(p+1)/2$ to $p(r+1)$ and enforces positive definiteness by construction.

\begin{remark}[Latent factor interpretation]
    The factorization $\Theta=\mathbf{Y}\mathbf{Y}^\top+\mathbf{D}$ admits a direct latent-variable interpretation. In particular, the corresponding Gaussian model can be viewed as a precision factor model in which the observed vector is driven by a small number of latent factors together with independent idiosyncratic noise \cite[Proposition~1]{chandra2021bayesian}. This interpretation provides structural transparency and makes the LRaD model appealing for high-dimensional graph learning.
\end{remark}

\begin{remark}[Relation to covariance factor models]
    Earlier factor-based graph learning methods \cite{hippert2023learning} typically impose low-rank structure on the covariance matrix and recover the precision matrix only after inversion. In contrast, our formulation parameterizes the precision matrix directly, which is more natural for conditional independence learning, avoids an explicit matrix inversion step, and yields a model that is better aligned with graphical structure.
\end{remark}

The LRaD parameterization naturally defines a manifold of feasible precision matrices.
Optimizing over this manifold requires tools from Riemannian geometry, which we now briefly introduce.
 
\begin{figure}[!htbp]
    \centering
    \includegraphics[width=0.8\linewidth]{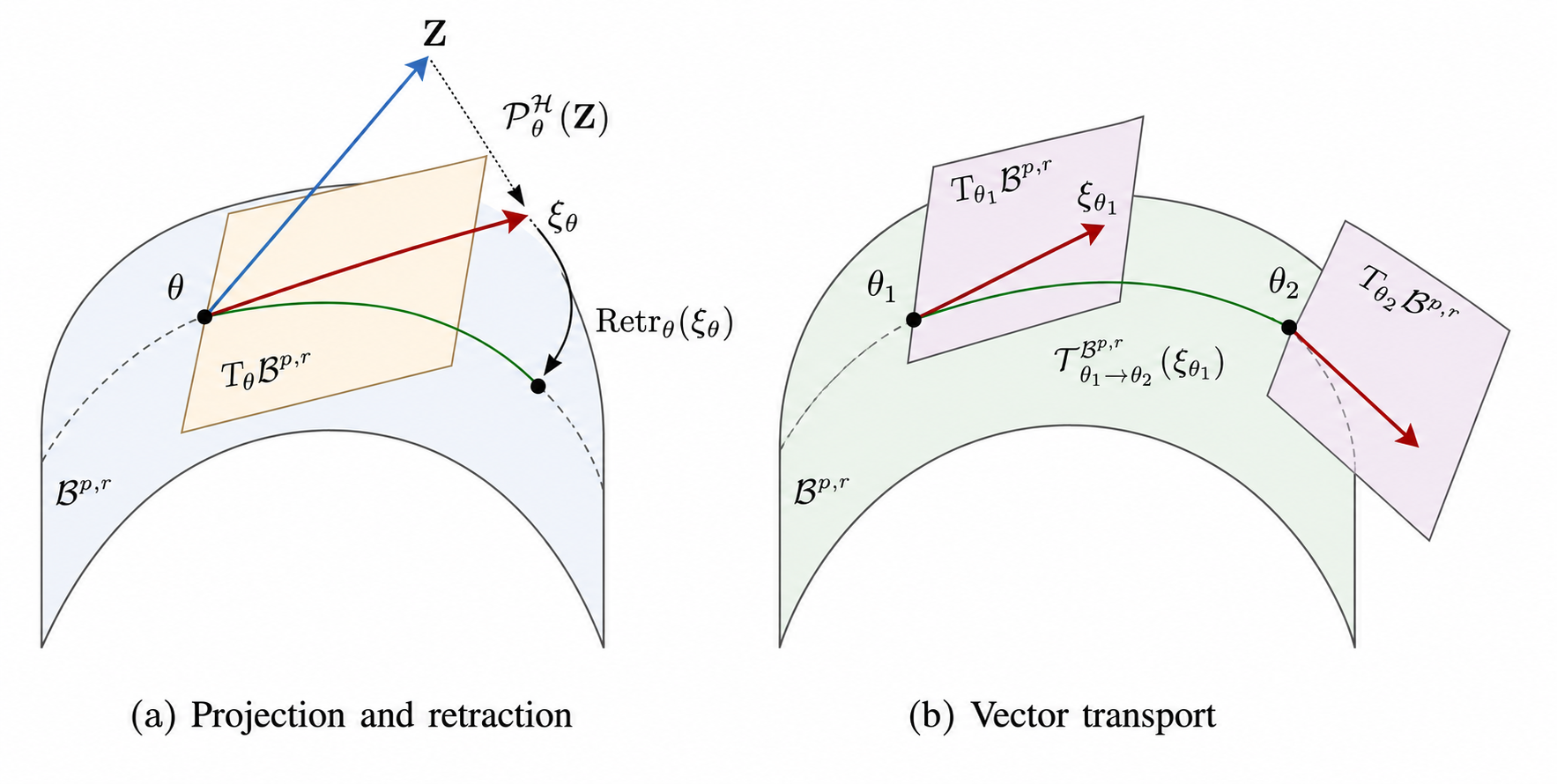}
    \caption{Illustration of the key geometric operations in Riemannian optimization on $\mathcal{B}^{p,r}$. The illustration follows the construction in \cite{zhang2025graph}.}
    \label{fig:riemannian_geometry}
\end{figure}
\subsection{Riemannian Optimization: Key Concepts}

Let $\mathcal{M}$ be a smooth Riemannian manifold equipped with a \emph{Riemannian metric} $\langle\cdot,\cdot\rangle_x$ on each \emph{tangent space} $T_x\mathcal{M}$. For a smooth function $f:\mathcal{M}\to\mathbb{R}$, the \emph{Riemannian gradient} $\operatorname{grad} f(x)\in T_x\mathcal{M}$ is the unique tangent vector satisfying
\[
\langle \operatorname{grad} f(x),\xi\rangle_x = Df(x)[\xi],
\qquad \forall\,\xi\in T_x\mathcal{M}.
\]
A \emph{retraction} $\mathrm{Retr}_x:T_x\mathcal{M}\to\mathcal{M}$ maps a tangent vector back to the manifold and agrees with the exponential map to first order at the origin. A \emph{vector transport} $\mathcal{T}_{x\to y}:T_x\mathcal{M}\to T_y\mathcal{M}$ provides a linear mechanism for moving tangent vectors between nearby points on the manifold. These constructions allow classical optimization methods to be extended to nonlinear matrix manifolds in a principled way \cite{absil2008optimization}. Figure~\ref{fig:riemannian_geometry} provides an intuitive illustration of projection, retraction, and vector transport on the manifold $\mathcal{B}^{p,r}$.

\section{Problem Formulation}
\label{sec:problem}
Building on the statistical and geometric foundations of Section~\ref{sec:background},
this section constructs the DEGFM optimization problem in three steps. We first specify the data model and the penalized per-window objective. We then introduce the geodesic temporal regularizer that couples consecutive precision matrices in a geometry-aware manner. Finally, we combine both components into a single optimization problem on a product quotient manifold.

\subsection{Data Model}

Let $T$ denote the number of time windows. In window $t\in\{1,\ldots,T\}$, we observe $n_t$ i.i.d.\ samples
$\{\mathbf{x}_{t,i}\}_{i=1}^{n_t}\subset\R^p$ generated from
\begin{equation}
  \mathbf{x}_{t,i}\sim\mathcal{ES}\!\left(0,\,\Theta_t^{-1},\,g_t\right),
  \qquad \Theta_t\in\Mpr,
  \label{eq:data_model}
\end{equation}
where $\Theta_t$ is the unknown precision matrix at time $t$. The model class $
\Mpr := \left\{\Theta\in\Spp:\Theta=\mathbf{Y}\mathbf{Y}^\top+\mathbf{D},
\mathbf{Y}\in\Rpr, \mathbf{D}\in\Dp\right\}$ collects all low-rank-plus-diagonal (LRaD) precision matrices, and $ \Rpr:=\{\mathbf{Y}\in\R^{p\times r}:\det(\mathbf{Y}^\top \mathbf{Y})\neq 0\}$
denotes the set of full-column-rank factor matrices. Each $\Theta_t$ is parameterized by
$\theta_t=(\mathbf{Y}_t,\mathbf{D}_t)\in\Bpr:=\Rpr\times\Dp$ through the surjection
\begin{equation}
  \varphi:\Bpr\to\Mpr,\qquad \varphi(\mathbf{Y},\mathbf{D})=\mathbf{Y}\mathbf{Y}^\top+\mathbf{D}.
  \label{eq:phi}
\end{equation}
This parameterization reduces the effective dimension from $p(p+1)/2$ to $p(r+1)$ and guarantees positive definiteness by construction.

In the sequel, we regard $\Bpr$ as a smooth product manifold and perform the estimation of $\{\theta_t\}_{t=1}^T$ directly on $\Pi_{t=1}^T\Bpr$. This geometric formulation yields an unconstrained optimization problem on a manifold, with the associated Riemannian machinery deferred to Section~\ref{sec:geometry}.

\subsection{Per-Window Robust Likelihood}

For each time window $t$, we define the penalized negative log-likelihood
\begin{equation}
  f_t(\theta_t)
  :=
  \mathcal{L}_t(\varphi(\theta_t))
  + \lambda\, h(\varphi(\theta_t)),
  \label{eq:ft}
\end{equation}
where $\lambda>0$ is the sparsity regularization parameter. The robust elliptical negative log-likelihood is
\begin{equation}
  \mathcal{L}_t(\Theta_t)
  =
  -\frac{1}{2}\log\det(\Theta_t)
  + \frac{1}{n_t}\sum_{i=1}^{n_t}
    \rho_t\!\left(\mathbf{x}_{t,i}^\top \Theta_t \mathbf{x}_{t,i}\right),
  \label{eq:likelihood}
\end{equation}
with $\rho_t(s)=-\log g_t(s)$. This reduces to the Gaussian likelihood when
$g_t(s)=e^{-s/2}$.

To promote sparsity in the off-diagonal entries while retaining smoothness for optimization, we use the differentiable surrogate \cite{hippert2023learning}
\begin{equation}
  h(\Theta_t)
  =
  \sum_{q\neq \ell}\psi_\varepsilon\!\left([\Theta_t]_{q\ell}\right),
  \qquad
  \psi_\varepsilon(s)=\varepsilon\log\cosh(s/\varepsilon),
  \label{eq:penalty}
\end{equation}
where $\varepsilon>0$ is a small smoothing parameter. As $\varepsilon\to 0$, $\psi_\varepsilon(s)\to |s|$, so $h(\Theta_t)$ approximates the $\ell_1$ penalty while remaining continuously differentiable. Its gradient is given entrywise by
\[
\bigl[\nabla_{\Theta_t} h\bigr]_{q\ell}
=
\begin{cases}
\tanh\!\left([\Theta_t]_{q\ell}/\varepsilon\right), & q\neq \ell,\\[2mm]
0, & q=\ell.
\end{cases}
\]

The Euclidean gradient of $\mathcal{L}_t$ with respect to $\Theta_t$ is
\begin{equation}
  \nabla_{\Theta_t}\mathcal{L}_t
  =
  -\frac{1}{2}\Theta_t^{-1}
  + \frac{1}{2n_t}\sum_{i=1}^{n_t}
    u_t\!\left(\mathbf{x}_{t,i}^\top\Theta_t \mathbf{x}_{t,i}\right)
    \mathbf{x}_{t,i}\mathbf{x}_{t,i}^\top,
  \label{eq:likelihood_grad}
\end{equation}
where
\[
u_t(s):=-2\frac{g_t'(s)}{g_t(s)}.
\]
Hence the total Euclidean gradient of $f_t$ is
\begin{equation}
  \nabla_{\Theta_t} f_t
  =
  \nabla_{\Theta_t}\mathcal{L}_t
  + \lambda\,\nabla_{\Theta_t} h.
  \label{eq:grad_ft}
\end{equation}

The per-window objective $f_t$ captures local fit and sparsity at each time step independently.
To couple the sequence $\{\theta_t\}$ and promote temporal smoothness, we now introduce a regularizer based on the intrinsic geometry of $\Spp$.

\subsection{Geodesic Temporal Regularization}

To couple consecutive time windows, we penalize changes in the intrinsic geometry of the precision matrices using the affine-invariant Riemannian distance on $\Spp$:
\begin{align}
  \dS^2(\Theta,\widetilde\Theta)
  &=
  \norm{\log\!\left(\Theta^{-1/2}\widetilde\Theta\,\Theta^{-1/2}\right)}_F^2\nonumber\\
  &
  =
  \sum_{j=1}^p \log^2\!\lambda_j\!\left(\Theta^{-1}\widetilde\Theta\right).
  \label{eq:geo_dist}
\end{align}
This metric is affine invariant, i.e.,
\[
\dS(\Theta,\widetilde\Theta)
=
\dS(\mathbf{A}\Theta\mathbf{A}^\top,\mathbf{A}\widetilde\Theta\mathbf{A}^\top)
\]
for any invertible matrix $\mathbf{A}$. It also admits the geodesic interpolation
\[
\gamma(s)
=
\Theta^{1/2}
\left(\Theta^{-1/2}\widetilde\Theta\,\Theta^{-1/2}\right)^s
\Theta^{1/2},\qquad s\in[0,1],
\]
which remains in $\Spp$ for all $s$. This intrinsic formulation is therefore better suited to covariance- and precision-matrix evolution than a Frobenius penalty.

Having defined both the per-window likelihood and the geodesic temporal penalty, we now assemble the complete DEGFM formulation.

\subsection{The DEGFM Optimization Problem}

We now define the \emph{Dynamic Elliptical Graph Factor Model} (DEGFM):
\begin{equation}
  \min_{\{\theta_t\}_{t=1}^T\subset\Bpr}
  \;
  F(\{\theta_t\})
  :=
  \sum_{t=1}^T f_t(\theta_t)
  +
  \mu\sum_{t=1}^{T-1}
  c_t(\theta_t,\theta_{t+1}),
  \label{eq:DEPFM}
\end{equation}
where $\mu>0$ controls temporal regularization and
\[
c_t(\theta_t,\theta_{t+1})
:=
\dS^2\!\left(\varphi(\theta_t),\varphi(\theta_{t+1})\right).
\]
Thus, each window is fitted by a robust elliptical likelihood with sparse off-diagonal structure, while successive precision matrices are encouraged to move smoothly along the affine-invariant geodesic.

\begin{remark}[Comparison with TVGL and DEGFM]
Temporal graphical Lasso (TVGL) \cite{hallac2017network} uses a Frobenius penalty between consecutive precision matrices and is typically formulated under a Gaussian likelihood. In contrast, DEGFM directly parameterizes the precision matrix in LRaD form, allows elliptical heavy-tailed observations, and regularizes temporal variation through the intrinsic geodesic distance on $\Spp$.
\end{remark}

\section{Dynamic Elliptical Graph Factor Models via Riemannian Optimization}
\label{sec:geometry}
This section develops the complete Riemannian optimization machinery for solving~\eqref{eq:DEPFM}.
The parameter space $\Bpr = \Rpr\times\Dp$ is an open product manifold, but the map $\varphi$ introduces a gauge redundancy indexed by $\Or$.
subsection~\ref{sec:geom_manifold} constructs the appropriate quotient manifold $\Bpr/\Or$, equips it with a Riemannian metric, and derives all differential-geometric objects required by an optimization algorithm: the horizontal projection, the Riemannian gradient, the retraction, and the vector transport.
subsection~\ref{sec:algorithm} then assembles these ingredients into an efficient Riemannian conjugate gradient (RCG) solver and analyzes its per-iteration complexity in subsection \ref{complexity}.


\subsection{Riemannian Geometry of \texorpdfstring{$\Bpr$}{B^{p,r}}}
\label{sec:geom_manifold}

We develop the geometry of $\Bpr$ in a natural logical order: we first identify the gauge freedom in the parameterization $\varphi$ and construct the quotient manifold that eliminates it; we then introduce the Riemannian metric and decompose the tangent space into vertical and horizontal components; and we finally derive the horizontal projection and the Riemannian gradient formula, which together enable optimization on the quotient space.
The retraction and vector transport, which map gradient steps back to the manifold, are given at the end of this subsection.

\emph{(a) Quotient Manifold Structure:} The mapping $\varphi$ is not injective: for any $\mathbf{O}\in\Or$,
$\varphi(\mathbf{Y}\mathbf{O},\mathbf{D}) = \mathbf{Y}\mathbf{O}\mathbf{O}^\top \mathbf{Y}^\top + \mathbf{D} = \mathbf{Y}\mathbf{Y}^\top + \mathbf{D} = \varphi(\mathbf{Y},\mathbf{D})$.
The invariance property renders that the solution of \ref{eq:DEPFM} is not isolated. Hence the natural parameter space accounting for this gauge freedom is the
quotient set
\begin{align}
  \Bpr/\Or &:= \{[\theta]:\theta\in\Bpr\},
  \quad [\theta]:=\{\theta * \mathbf{O}:\mathbf{O}\in\Or\},\nonumber\\
  \theta * \mathbf{O} &:= (\mathbf{Y}\mathbf{O},\mathbf{D}).
\end{align}
The quotient map is $\pi:\Bpr\to\Bpr/\Or$, $\pi(\theta)=[\theta]$.
In practice, we work with representatives $\theta\in\Bpr$ and eliminate
gauge freedom via horizontal-space projection.

\emph{(b) Tangent Space and Riemannian Metric:} The tangent space at $\theta=(\mathbf{Y},\mathbf{D})\in\Bpr$ is
\begin{equation}
  T_\theta\Bpr = \{(\xi_{\mathbf{Y}},\xi_{\mathbf{D}}): \xi_{\mathbf{Y}}\in\R^{p\times r},\;\xi_{\mathbf{D}}\in\mathcal{D}^p\},
\end{equation}
where $\mathcal{D}^p$ denotes the space of $p\times p$ diagonal matrices.

We equip $\Bpr$ with the direct-sum metric
\begin{equation}
  \langle\xi,\zeta\rangle^{\Bpr}_\theta
  = \tr(\xi_{\mathbf{Y}}^\top\zeta_{\mathbf{Y}})
  + \tr(\mathbf{D}^{-1}\xi_{\mathbf{D}} \mathbf{D}^{-1}\zeta_{\mathbf{D}}),
  \label{eq:metric}
\end{equation}
where the first term is the standard Euclidean metric on $\R^{p\times r}$
(adapted from \cite{zheng2025riemannian}) and the second is the
affine-invariant metric on $\Dp$ \cite{bhatia2009positive}.

\emph{(c) Vertical and Horizontal Spaces:} The equivalence class $[\theta]$ is a subspace of $\Bpr$, and its tangent space $T_\theta[\theta]$ is a subspace of $T_\theta\Bpr$, which is known as vertical space $\mathcal{V}_\theta$. The vertical space at $\theta$ comprises directions tangent to the orbit
$[\theta]$:
\begin{equation}
  \mathcal{V}_\theta = \{(\mathbf{Y}\Omega,\mathbf{0}):\Omega\in\mathfrak{o}_r\},
  \label{eq:vertical}
\end{equation}
where $\mathfrak{o}_r$ is the space of $r\times r$ skew-symmetric matrices. The space $\mathcal{V}_\theta$ contains directions moving along the equivalence class $[\theta]$, which should be eliminated. Thus, the vectors of interest in $T_\theta\Bpr$ lie in the orthogonal complement of $\mathcal{V}_\theta$. Its orthogonal complement under \eqref{eq:metric} is the horizontal space:
\begin{equation}
  \mathcal{H}_\theta = \{(\mathbf{Z}_1,\mathbf{Z}_2)\in T_\theta\Bpr:
    \mathbf{Y}^\top \mathbf{Z}_1 = \mathbf{Z}_1^\top \mathbf{Y},\; \mathbf{Z}_2\in\mathcal{D}^p\}.
  \label{eq:horizontal}
\end{equation}
We can eliminate the effect of equivalence classes by projecting points onto $\mathcal{H}_\theta$. The projection is defined as the following Proposition:

\begin{proposition}[Horizontal projection]
\label{prop1}
For any ambient vector $\mathbf{Z}=(\mathbf{Z}_1,\mathbf{Z}_2)\in\R^{p\times r}\times\mathcal{D}^p$,
the projection onto $\mathcal{H}_\theta$ is
\begin{equation}
  \PH_\theta(\mathbf{Z}) = (\mathbf{Z}_1 - \mathbf{Y}\Omega,\;\ddiag(\mathbf{Z}_2)),
  \label{eq:proj}
\end{equation}
where $\Omega\in\mathfrak{o}_r$ is the unique skew-symmetric solution to the
Sylvester equation
\begin{equation}
  \Omega\cdot(\mathbf{Y}^\top \mathbf{Y}) + (\mathbf{Y}^\top \mathbf{Y})\cdot\Omega = \mathbf{Y}^\top \mathbf{Z}_1 - \mathbf{Z}_1^\top \mathbf{Y}.
  \label{eq:sylvester}
\end{equation}
\end{proposition}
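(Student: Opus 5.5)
The argument characterizes the orthogonal projection onto $\mathcal{H}_\theta$ as the unique decomposition $\mathbf{Z} = \PH_\theta(\mathbf{Z}) + v$ with $v\in\mathcal{V}_\theta$ and $\PH_\theta(\mathbf{Z})\in\mathcal{H}_\theta$. The metric \eqref{eq:metric} is a direct sum, and the vertical space \eqref{eq:vertical} has zero $\mathbf{D}$-component, so the problem separates into its $\mathbf{Y}$-part and $\mathbf{D}$-part. The $\mathbf{D}$-part is immediate: $\mathbf{Z}_2$ is already diagonal and is orthogonal to every vertical vector, so its component is $\ddiag(\mathbf{Z}_2)=\mathbf{Z}_2$. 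For the $\mathbf{Y}$-part, I write $\mathbf{Z}_1 = \mathbf{W} + \mathbf{Y}\Omega$ with $\Omega\in\mathfrak{o}_r$, and the requirement is that $\mathbf{W}=\mathbf{Z}_1-\mathbf{Y}\Omega$ satisfy the horizontality condition $\mathbf{Y}^\top\mathbf{W} = \mathbf{W}^\top\mathbf{Y}$.

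First I verify that \eqref{eq:horizontal} really is the orthogonal complement of $\mathcal{V}_\theta$. For $(\mathbf{Z}_1,\mathbf{Z}_2)$, orthogonality to every $(\mathbf{Y}\Omega,\mathbf{0})$ means $\tr(\Omega^\top\mathbf{Y}^\top\mathbf{Z}_1)=0$ for all skew $\Omega$. This holds iff $\skew(\mathbf{Y}^\top\mathbf{Z}_1)=0$, i.e.\ iff $\mathbf{Y}^\top\mathbf{Z}_1$ is symmetric. As a consequence, the orthogonal projection is characterized by the two requirements already stated: $\mathbf{Z}_1-\mathbf{Y}\Omega$ is horizontal and $\mathbf{Y}\Omega$ is vertical.

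Next I impose horizontality on $\mathbf{W}=\mathbf{Z}_1-\mathbf{Y}\Omega$. Using $\Omega^\top=-\Omega$, the condition $\mathbf{Y}^\top\mathbf{Z}_1 - \mathbf{Y}^\top\mathbf{Y}\Omega = \mathbf{Z}_1^\top\mathbf{Y} - \Omega^\top\mathbf{Y}^\top\mathbf{Y}$ rearranges to exactly \eqref{eq:sylvester}, namely $\Omega\mathbf{Y}^\top\mathbf{Y} + \mathbf{Y}^\top\mathbf{Y}\Omega = \mathbf{Y}^\top\mathbf{Z}_1 - \mathbf{Z}_1^\top\mathbf{Y}$. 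So the formula \eqref{eq:proj} holds as soon as this equation has a unique skew-symmetric solution.

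The main step is that existence and uniqueness. Since $\mathbf{Y}\in\Rpr$, the matrix $\mathbf{G}=\mathbf{Y}^\top\mathbf{Y}$ is symmetric positive definite. The Sylvester operator $\Omega\mapsto \Omega\mathbf{G}+\mathbf{G}\Omega$ has eigenvalues $\lambda_i(\mathbf{G})+\lambda_j(\mathbf{G})>0$, so it is invertible on $\R^{r\times r}$ and the equation has a unique solution $\Omega$ in the full matrix space. To see that this solution is skew, I transpose the equation: because $\mathbf{G}$ is symmetric and the right-hand side is skew, $-\Omega^\top$ solves the same equation, and uniqueness gives $\Omega=-\Omega^\top$. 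Finally, the vertical map $\Omega\mapsto\mathbf{Y}\Omega$ is injective because $\mathbf{Y}$ has full column rank. Hence the decomposition is unique, and $\PH_\theta$ is the orthogonal projection described in Proposition~\ref{prop1}.
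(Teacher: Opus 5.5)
Your proof is correct and follows the same route as the paper's: you characterize the projection through the decomposition $\mathbf{Z}=\PH_\theta(\mathbf{Z})+v$ with $v\in\mathcal{V}_\theta$, derive the Sylvester equation from the horizontality condition $\mathbf{Y}^\top(\mathbf{Z}_1-\mathbf{Y}\Omega)=(\mathbf{Z}_1-\mathbf{Y}\Omega)^\top\mathbf{Y}$, and then show the skew-symmetric solution is unique. The differences are minor. The paper diagonalizes $\mathbf{Y}^\top\mathbf{Y}$ and solves entrywise, $\widetilde\Omega_{ij}=\widetilde R_{ij}/(\lambda_i+\lambda_j)$, which also gives an explicit formula; you instead use invertibility of the Sylvester operator together with a transposition-and-uniqueness argument for skewness. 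You also explicitly verify that \eqref{eq:horizontal} is $\mathcal{V}_\theta^\perp$, which the paper only asserts.
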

\begin{proof}
    See Appendix \ref{app1}.
\end{proof}

\emph{(d) Riemannian Gradient:} The objective in~\eqref{eq:DEPFM} is a sum of a data-fitting term, a sparsity penalty, and a temporal geodesic regularizer. Accordingly, for each $t$, we first collect the Euclidean gradient with respect to $\Theta_t$ as
\begin{equation}
  \Gamma_t
  :=
  \nabla_{\Theta_t}\mathcal{L}_t
  + \lambda \nabla_{\Theta_t} h
  + \mu\,\mathcal{G}_t,
  \label{eq:full_theta_gradient}
\end{equation}
where the temporal contribution is
\begin{equation}
  \mathcal{G}_t
  =
  \begin{cases}
    \nabla_{\Theta_1} d_{\mathcal S}^2(\Theta_1,\Theta_2), & t=1,\\[1mm]
    \nabla_{\Theta_t} d_{\mathcal S}^2(\Theta_t,\Theta_{t+1})
    +
    \nabla_{\widetilde\Theta_t} d_{\mathcal S}^2(\Theta_{t-1},\Theta_t), & 1<t<T,\\[1mm]
    \nabla_{\widetilde\Theta_T} d_{\mathcal S}^2(\Theta_{T-1},\Theta_T), & t=T.
  \end{cases}
  \label{eq:temporal_block_gradient}
\end{equation}
Here $\nabla_{\Theta_t}$ and $\nabla_{\widetilde\Theta_t}$ denote derivatives with respect to the first and second arguments of the geodesic distance, respectively.

Under the metric~\eqref{eq:metric}, the Riemannian gradient is characterized in the following proposition.
\begin{proposition}[Riemannian gradient formula\footnote{This expression is horizontal by construction, so no additional gauge-fixing step is required.}]
\label{prop:riem_grad}
Let $f:\Spp\to\R$ be smooth with Euclidean gradient $\nabla_\Theta f$.
Define $\tilde{f}=f\circ\varphi:\Bpr\to\R$.
The Euclidean gradient $\nabla_\theta\tilde{f}=(G_{\mathbf{Y}},G_{\mathbf{D}})$ is
\begin{equation}
  G_{\mathbf{Y}} = 2\nabla_\Theta f\cdot \mathbf{Y} \in\R^{p\times r},
  \qquad
  G_{\mathbf{D}} = \ddiag(\nabla_\Theta f)\in\mathcal{D}^p.
  \label{eq:eucl_grad}
\end{equation}
The Riemannian gradient with respect to metric \eqref{eq:metric} is
\begin{equation}
  \grad\tilde{f}(\theta) = \bigl(G_{\mathbf{Y}},\; \mathbf{D}\,G_{\mathbf{D}}\,\mathbf{D}\bigr),
  \label{eq:riem_grad}
\end{equation}
and $\grad\tilde{f}(\theta)\in\mathcal{H}_\theta$ automatically.
\end{proposition}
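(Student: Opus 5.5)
The proof has three steps, each a direct computation: differentiate $\tilde f=f\circ\varphi$ by the chain rule, convert the Euclidean gradient to the Riemannian one using the metric~\eqref{eq:metric}, and verify the horizontality condition in~\eqref{eq:horizontal}.

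\textbf{Euclidean gradient.} Take a tangent direction $\xi=(\xi_{\mathbf{Y}},\xi_{\mathbf{D}})\in T_\theta\Bpr$. The differential of $\varphi$ is
\[
D\varphi(\theta)[\xi]=\xi_{\mathbf{Y}}\mathbf{Y}^\top+\mathbf{Y}\xi_{\mathbf{Y}}^\top+\xi_{\mathbf{D}}.
\]
Hence
\[
D\tilde f(\theta)[\xi]=\tr\!\bigl(\nabla_\Theta f\,(\xi_{\mathbf{Y}}\mathbf{Y}^\top+\mathbf{Y}\xi_{\mathbf{Y}}^\top)\bigr)+\tr(\nabla_\Theta f\,\xi_{\mathbf{D}}).
\]
Because $\nabla_\Theta f$ is symmetric, cyclicity of the trace gives
\[
\tr\!\bigl(\nabla_\Theta f\,(\xi_{\mathbf{Y}}\mathbf{Y}^\top+\mathbf{Y}\xi_{\mathbf{Y}}^\top)\bigr)=2\tr\!\bigl((\nabla_\Theta f\,\mathbf{Y})^\top\xi_{\mathbf{Y}}\bigr).
\]
Because $\xi_{\mathbf{D}}$ is diagonal,
\[
\tr(\nabla_\Theta f\,\xi_{\mathbf{D}})=\tr\!\bigl(\ddiag(\nabla_\Theta f)\,\xi_{\mathbf{D}}\bigr).
\]
Reading these off against the Frobenius inner product on $\R^{p\times r}\times\mathcal{D}^p$ yields~\eqref{eq:eucl_grad}. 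The one point to handle carefully is symmetry: if $f$ is extended to non-symmetric arguments, we should use its symmetrized gradient. All gradients in~\eqref{eq:full_theta_gradient} are symmetric, so this causes no difficulty.

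\textbf{Riemannian gradient.} We need $\grad\tilde f(\theta)=(\eta_{\mathbf{Y}},\eta_{\mathbf{D}})$ satisfying
\[
\langle \grad\tilde f(\theta),\xi\rangle_\theta^{\Bpr}=D\tilde f(\theta)[\xi]\qquad\text{for all }\xi.
\]
The metric is a direct sum, so we can match the two blocks separately. The $\mathbf{Y}$-block is Euclidean, which forces $\eta_{\mathbf{Y}}=G_{\mathbf{Y}}$. For the $\mathbf{D}$-block we need
\[
\tr(\mathbf{D}^{-1}\eta_{\mathbf{D}}\mathbf{D}^{-1}\xi_{\mathbf{D}})=\tr(G_{\mathbf{D}}\xi_{\mathbf{D}})\qquad\text{for every diagonal }\xi_{\mathbf{D}}.
\]
This forces $\mathbf{D}^{-1}\eta_{\mathbf{D}}\mathbf{D}^{-1}=G_{\mathbf{D}}$, since both sides are diagonal. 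Therefore $\eta_{\mathbf{D}}=\mathbf{D}G_{\mathbf{D}}\mathbf{D}$, which is again diagonal and so lies in $\mathcal{D}^p$. This gives~\eqref{eq:riem_grad}. Uniqueness follows from nondegeneracy of the metric.

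\textbf{Horizontality.} By~\eqref{eq:horizontal} we must check that $\mathbf{Y}^\top G_{\mathbf{Y}}$ is symmetric. We have
\[
\mathbf{Y}^\top G_{\mathbf{Y}}=2\mathbf{Y}^\top\nabla_\Theta f\,\mathbf{Y},
\]
which is symmetric because $\nabla_\Theta f$ is. The second component is diagonal, so it meets the second condition.

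A cleaner alternative argument also works. Since $\tilde f$ is constant along orbits, $D\tilde f(\theta)[(\mathbf{Y}\Omega,0)]=0$ for every $\Omega\in\mathfrak{o}_r$. The gradient is therefore orthogonal to $\mathcal{V}_\theta$, hence horizontal, and the projection of Proposition~\ref{prop1} returns it unchanged ($\Omega=0$ solves~\eqref{eq:sylvester}).

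I expect no real obstacle. The only care needed is the symmetry of $\nabla_\Theta f$, which the horizontality argument relies on.
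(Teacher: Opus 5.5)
Your proposal is correct and follows the paper's proof essentially step for step. You linearize $\varphi$ and use trace cyclicity plus symmetry of $\nabla_\Theta f$ to read off $(G_{\mathbf{Y}},G_{\mathbf{D}})$, match blocks against the direct-sum metric to get $(G_{\mathbf{Y}},\mathbf{D}G_{\mathbf{D}}\mathbf{D})$, and deduce horizontality from the symmetry of $\mathbf{Y}^\top\nabla_\Theta f\,\mathbf{Y}$; the paper even gives the same alternative via orthogonality to $\mathcal{V}_\theta$. Your orbit-invariance version of that alternative is slightly cleaner than the paper's: the paper argues via $\tr(\mathbf{B}\Omega)=0$ and along the way wrongly asserts that $\mathbf{B}\Omega$ is skew-symmetric, although the vanishing trace it needs is still true.
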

\begin{proof}
    See Appendix \ref{app2}.
\end{proof}

\emph{(e) Gradients of Individual Terms:} From~\eqref{eq:likelihood_grad}, the Euclidean gradient of $\mathcal{L}_t$ with
respect to $\Theta_t$ is
\begin{equation}
  \nabla_{\Theta_t}\mathcal{L}_t
  = -\frac{1}{2}\Theta_t^{-1}
    + \frac{1}{2n_t}\sum_{i=1}^{n_t}
      u_t\!\left(\mathbf{x}_{t,i}^\top\Theta_t\,\mathbf{x}_{t,i}\right)
      \mathbf{x}_{t,i}\mathbf{x}_{t,i}^\top.
  \label{eq:grad_likelihood_Theta}
\end{equation}
The quadratic form $x_{t,i}^\top\Theta_t\,x_{t,i}$ exploits the LRaD structure
directly:
\begin{equation}
  \mathbf{x}_{t,i}^\top\Theta_t\,\mathbf{x}_{t,i}
  = \norm{\mathbf{Y}_t^\top \mathbf{x}_{t,i}}^2 + \mathbf{x}_{t,i}^\top \mathbf{D}_t\,\mathbf{x}_{t,i},
  \label{eq:quad_form}
\end{equation}
which costs $\mathcal{O}(pr)$ instead of $\mathcal{O}(p^2)$.

From~\eqref{eq:penalty}, $\nabla_{\Theta_t}h$ has entries
\begin{equation}
  \bigl[\nabla_{\Theta_t}h\bigr]_{q\ell}
  = \begin{cases}
    \tanh\!\left([\Theta_t]_{q\ell}/\varepsilon\right) & q\neq\ell, \\
    0 & q=\ell.
  \end{cases}
  \label{eq:grad_penalty}
\end{equation}

The temporal regularizer is based on the affine-invariant geodesic distance on $\mathcal{S}_{++}^p$. We have the following Proposition:
\begin{proposition}[Gradient of geodesic distance on $\Spp$]
\label{prop:geo_grad}
Let $\Theta,\widetilde\Theta\in\Spp$ and
$\mathbf{M}=\Theta^{-1/2}\widetilde\Theta\,\Theta^{-1/2}$.
Then
\begin{align}
  \nabla_\Theta\,\dS^2(\Theta,\widetilde\Theta)
  &= -2\,\Theta^{-1}\log(\mathbf{M})\,\Theta^{-1},
  \label{eq:geo_grad_1} \\
  \nabla_{\widetilde\Theta}\,\dS^2(\Theta,\widetilde\Theta)
  &= 2\,\widetilde\Theta^{-1}\log\!\left(\widetilde\Theta^{-1/2}\Theta\,\widetilde\Theta^{-1/2}\right).
  \label{eq:geo_grad_2}
\end{align}
\end{proposition}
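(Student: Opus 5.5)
The plan is to compute the first-order variation of $\dS^2$ in each argument with the eigenvalue form in \eqref{eq:geo_dist}, then read off the gradient representations displayed in \eqref{eq:geo_grad_1}--\eqref{eq:geo_grad_2}. I treat the first argument directly and obtain the second from the symmetry of the distance, rather than differentiating twice.

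\emph{Step 1 (differential in the first argument).} Write $\dS^2(\Theta,\widetilde\Theta)=\tr\bigl(\log^2(\Theta^{-1}\widetilde\Theta)\bigr)$. This holds because $\Theta^{-1}\widetilde\Theta$ is similar to $\mathbf{M}$, with $\Theta^{-1}\widetilde\Theta=\Theta^{-1/2}\mathbf{M}\Theta^{1/2}$, so its eigenvalues are real and positive. The key tool is the trace identity $D\,\tr\phi(\mathbf{X})[\mathbf{H}]=\tr\bigl(\phi'(\mathbf{X})\mathbf{H}\bigr)$ for diagonalizable $\mathbf{X}$, applied with $\phi(x)=\log^2 x$ and $\phi'(x)=2\log(x)/x$. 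It avoids differentiating the matrix logarithm itself, since all the Fréchet-derivative terms of $\log$ collapse under the trace. With $\mathbf{X}=\Theta^{-1}\widetilde\Theta$ and $d\mathbf{X}=-\Theta^{-1}(d\Theta)\Theta^{-1}\widetilde\Theta$, this gives
\[
D_\Theta\dS^2[d\Theta]=-2\,\tr\bigl(\log(\Theta^{-1}\widetilde\Theta)\,\Theta^{-1}d\Theta\bigr).
\]
Substituting $\log(\Theta^{-1}\widetilde\Theta)=\Theta^{-1/2}\log(\mathbf{M})\Theta^{1/2}$ gives $D_\Theta\dS^2[d\Theta]=-2\tr\bigl(\Theta^{-1/2}\log(\mathbf{M})\Theta^{-1/2}\,d\Theta\bigr)$, an expression symmetric in $d\Theta$.

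\emph{Step 2 (identification with the stated form).} Next I express this linear functional through the representing matrix used in the paper, namely \eqref{eq:geo_grad_1}. I would do this by writing the pairing $\tr(\mathbf{G}\,d\Theta)$ in the whitened variable $\Theta^{-1/2}(\cdot)\Theta^{-1/2}$ natural to the affine-invariant geometry of Section~\ref{sec:background}. The aim is to show that the factor $\Theta^{-1/2}\log(\mathbf{M})\Theta^{-1/2}$ becomes $\Theta^{-1}\log(\mathbf{M})\Theta^{-1}$ after this change of representation. This is the step I expect to be the main obstacle. The two sandwichings differ by an extra $\Theta^{-1/2}$ on each side, so the exact gradient convention, meaning which inner product on $\Spp$ defines ``$\nabla$'', has to be pinned down carefully and then used consistently when $\mathcal{G}_t$ in \eqref{eq:temporal_block_gradient} is fed into Proposition~\ref{prop:riem_grad}. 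I would first try the convention $\langle \mathbf{A},\mathbf{B}\rangle=\tr(\Theta^{1/2}\mathbf{A}\Theta^{1/2}\mathbf{B})$ and check whether it reproduces \eqref{eq:geo_grad_1} exactly. As a sanity check I would verify the commuting case $\Theta=\mathbf{I}$, where every convention must agree.

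\emph{Step 3 (second argument).} By affine invariance with $\mathbf{A}$ chosen to swap the roles of the two matrices, or directly from $\lambda_j(\widetilde\Theta^{-1}\Theta)=\lambda_j(\Theta^{-1}\widetilde\Theta)^{-1}$, we have $\dS(\Theta,\widetilde\Theta)=\dS(\widetilde\Theta,\Theta)$. Applying Steps 1--2 with the roles exchanged gives a gradient built from $\widetilde\Theta^{-1}$ and $\log(\widetilde\Theta^{-1/2}\Theta\widetilde\Theta^{-1/2})$. The sign and the one-sided placement of $\widetilde\Theta^{-1}$ in \eqref{eq:geo_grad_2} then follow from two facts. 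First, $\log(\widetilde\Theta^{-1/2}\Theta\widetilde\Theta^{-1/2})=-\,\mathbf{U}\log(\mathbf{M})\mathbf{U}^\top$ with the orthogonal $\mathbf{U}=\widetilde\Theta^{-1/2}\Theta^{1/2}\mathbf{M}^{1/2}$. Second, $\mathbf{M}$ and $\log\mathbf{M}$ commute. I would close by checking the scalar case $p=1$, where $\dS^2=(\log\tilde\theta-\log\theta)^2$ gives derivatives $\mp2\log(\tilde\theta/\theta)/(\cdot)$, to confirm the signs of both displayed formulas.
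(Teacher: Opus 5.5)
Your Step~1 is correct, and it settles the matter against the statement. Under the Frobenius pairing the paper uses to define $\nabla$, the gradient is $-2\Theta^{-1/2}\log(\mathbf{M})\Theta^{-1/2}$, not $-2\Theta^{-1}\log(\mathbf{M})\Theta^{-1}$. The paper's proof defines $\nabla$ by requiring the derivative of $\dS^2(\Theta+\varepsilon\Xi,\widetilde\Theta)$ at $\varepsilon=0$ to equal $\tr[\nabla_\Theta\dS^2\cdot\Xi]$, and Proposition~\ref{prop:riem_grad} uses $\nabla_\Theta f$ through $\langle\nabla_\Theta f,D\varphi(\theta)[\xi]\rangle_F$. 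Already for $p=1$ the true derivative is $-2\log(\tilde\theta/\theta)/\theta$, while \eqref{eq:geo_grad_1} gives $-2\log(\tilde\theta/\theta)/\theta^{2}$. Your sanity check at $\Theta=\mathbf{I}$ cannot see this, because both sandwichings coincide there; the scalar case with $\theta\neq 1$ does. So the obstacle you anticipate in Step~2 is not bookkeeping: \eqref{eq:geo_grad_1} is false as stated. Your pairing $\tr(\Theta^{1/2}\mathbf{A}\Theta^{1/2}\mathbf{B})$ does turn the representer into $\Theta^{-1/2}\bigl(-2\Theta^{-1/2}\log(\mathbf{M})\Theta^{-1/2}\bigr)\Theta^{-1/2}=-2\Theta^{-1}\log(\mathbf{M})\Theta^{-1}$. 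But that redefines $\nabla$ against the paper's own definition. A matrix computed that way also cannot be added to the Frobenius gradients of $\mathcal{L}_t$ and $h$ in $\Gamma_t$ and passed through Proposition~\ref{prop:riem_grad}. It changes the statement rather than proving it.

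Step~3 fails under any convention. Your planned scalar check gives $\partial_{\tilde\theta}(\log\tilde\theta-\log\theta)^2=\tfrac{2}{\tilde\theta}\log(\tilde\theta/\theta)$, but \eqref{eq:geo_grad_2} evaluates to $\tfrac{2}{\tilde\theta}\log(\theta/\tilde\theta)$, the opposite sign. For $p=1$ every positive-definite pairing is a positive multiple of $ab$, so no inner product can flip that sign. For $p>1$, $\widetilde\Theta^{-1}\log(\widetilde\Theta^{-1/2}\Theta\widetilde\Theta^{-1/2})$ is not even symmetric in general. The facts you cite, the true conjugation identity $\log(\widetilde\Theta^{-1/2}\Theta\widetilde\Theta^{-1/2})=-\mathbf{U}\log(\mathbf{M})\mathbf{U}^\top$ and the commutation of $\mathbf{M}$ with $\log\mathbf{M}$, cannot bridge this gap. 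Running your Step~1 with the roles exchanged gives $2\log(\Theta^{-1}\widetilde\Theta)\widetilde\Theta^{-1}=-2\widetilde\Theta^{-1/2}\log(\widetilde\Theta^{-1/2}\Theta\widetilde\Theta^{-1/2})\widetilde\Theta^{-1/2}$, with a minus sign and two-sided placement.

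The paper's proof does not supply a missing idea either. It relies on the expansion $(\Theta+\varepsilon\Xi)^{-1/2}=\Theta^{-1/2}-\tfrac{\varepsilon}{2}\Theta^{-1/2}\Xi\Theta^{-1}+O(\varepsilon^2)$, which is not symmetric and fails unless $\Theta$ and $\Xi$ commute. Its final display actually passes through your expression $-2\tr[\Theta^{-1/2}\log(\mathbf{M})\Theta^{-1/2}\Xi]$ and then rewrites it, without valid justification, as $-2\tr[\Theta^{-1}\log(\mathbf{M})\Theta^{-1}\Xi]$. The second formula is only asserted by a ``parallel argument''. Carried out honestly, your plan yields a corrected proposition with the two formulas above, not a proof of the displayed one.
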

\begin{proof}
    See Appendix \ref{app3}.
\end{proof}

\emph{(f) Retraction:} To update the iterates while staying on $\Bpr$, we employ the second-order retraction. Following \cite{zhang2025graph}, we define the second-order retraction on
$\Bpr$:
\begin{equation}
  \Retr_\theta(\xi) =
  \left(\mathbf{Y}+\xi_{\mathbf{Y}},\;\;
    \mathbf{D} + \xi_{\mathbf{D}} + \tfrac{1}{2}\xi_{\mathbf{D}} \mathbf{D}^{-1}\xi_{\mathbf{D}}\right).
  \label{eq:retraction}
\end{equation}
This satisfies $\Retr_\theta(0)=\theta$ and
$D\Retr_\theta(0)=\mathrm{id}$ by direct inspection.
Positive definiteness of the diagonal component holds
\emph{unconditionally}: setting
$\mathbf{E}:=\mathbf{D}^{-1/2}\xi_{\mathbf{D}}\mathbf{D}^{-1/2}$
and using the fact that $\mathbf{D}$ is diagonal (hence commutes with
$\xi_{\mathbf{D}}$), one verifies $
  \mathbf{D}+\xi_{\mathbf{D}}+\tfrac{1}{2}\xi_{\mathbf{D}} \mathbf{D}^{-1}\xi_{\mathbf{D}}
  \;=\;
  \mathbf{D}^{1/2}\!\left(\mathbf{I}+\mathbf{E}+\tfrac{1}{2}\mathbf{E}^{2}\right)\!\mathbf{D}^{1/2}$. Since $\mathbf{D}\succ\mathbf{0}$, it suffices to show
$\mathbf{I}+\mathbf{E}+\frac{1}{2}\mathbf{E}^{2}\succ\mathbf{0}$.
Because $\mathbf{E}$ is diagonal with entries $e_{i}$, this reduces
to the scalar inequality $
  1 + e_{i} + \tfrac{1}{2}e_{i}^{2}
  \;=\;
  \tfrac{1}{2}\!\left[(e_{i}+1)^{2}+1\right]
  \;\geq\; \tfrac{1}{2} \;>\; 0,
  \; \forall\, e_{i}\in\mathbb{R}$, which holds for all $\xi_{\mathbf{D}}$ without any
step-size restriction.

\emph{(g) Vector Transport:} For the Riemannian conjugate-gradient method, we transport tangent vectors by horizontal projection. Given $\theta_1,\theta_2\in\Bpr$ and $\xi_{\theta_1}\in\mathcal{H}_{\theta_1}$,
the vector transport to $\mathcal{H}_{\theta_2}$ is
\begin{equation}
  \mathcal{T}^{\Bpr}_{\theta_1\to\theta_2}(\xi_{\theta_1})
  = \PH_{\theta_2}(\xi_{\theta_1}),
  \label{eq:transport}
\end{equation}
where $\PH_{\theta_2}$ is defined by Proposition~\ref{prop1}.

\subsection{Algorithm Development}
\label{sec:algorithm}

\emph{(a) Full Objective on the Product Manifold:} The $T$ parameter pairs $\{\theta_t\}$ live on the product manifold
$\mathcal{M}=\prod_{t=1}^T\Bpr$ equipped with the product metric.
The total objective $F:\mathcal{M}\to\R$ is~\eqref{eq:DEPFM}.
The Riemannian gradient of $F$ with respect to $\theta_t$ is
\begin{align}
  \grad_{\theta_t}F
  &= \grad f_t(\theta_t)
    + \mu\,\grad_{\theta_t}c_{t-1}(\theta_{t-1},\theta_t)\nonumber\\
    &\quad+ \mu\,\grad_{\theta_t}c_t(\theta_t,\theta_{t+1}),
  \label{eq:product_grad}
\end{align}
with boundary conventions $c_0=c_T\equiv 0$.
Each summand is computed via Proposition~\ref{prop:riem_grad} applied to the
corresponding Euclidean gradient: $\nabla_{\Theta_t}f_t$ from
\eqref{eq:grad_likelihood_Theta}--\eqref{eq:grad_penalty}, and
$\nabla_{\Theta_t}c_{t-1}$, $\nabla_{\Theta_t}c_t$ from
Proposition~\ref{prop:geo_grad}.

\emph{(b) Efficient Computation via Woodbury Identity:} All computations involving $\Theta_t^{-1}$ exploit the Woodbury identity:
\begin{align}
  \Theta_t^{-1}
  &= (\mathbf{Y}_t \mathbf{Y}_t^\top + \mathbf{D}_t)^{-1}\nonumber\\
  &= \mathbf{D}_t^{-1}
    - \mathbf{D}_t^{-1}\mathbf{Y}_t\!\left(I_r + \mathbf{Y}_t^\top \mathbf{D}_t^{-1}\mathbf{Y}_t\right)^{-1}\!\mathbf{Y}_t^\top \mathbf{D}_t^{-1},
  \label{eq:woodbury}
\end{align}
reducing the cost of each $\Theta_t^{-1}$ application from $\mathcal{O}(p^3)$ to
$\mathcal{O}(pr^2+r^3)$.
The quadratic form~\eqref{eq:quad_form} costs only $\mathcal{O}(pr)$ directly.

\begin{remark}[Efficient log-map for geodesic gradient]
    Define $\Delta\Theta_t = \Theta_{t+1}-\Theta_t
= (\mathbf{Y}_{t+1}\mathbf{Y}_{t+1}^\top - \mathbf{Y}_t \mathbf{Y}_t^\top) + (\mathbf{D}_{t+1}-\mathbf{D}_t)$,
which has rank at most $2r$ (from the rank-$r$ summands) plus a diagonal
component.
The matrix $\mathbf{M}_t = \Theta_t^{-1/2}\Theta_{t+1}\Theta_t^{-1/2}
= \mathbf{I}_p + \widetilde\Delta_t$,
where $\widetilde\Delta_t = \Theta_t^{-1/2}\Delta\Theta_t\,\Theta_t^{-1/2}$,
inherits the low-rank-plus-diagonal structure of $\Delta\Theta_t$.

To compute $\Theta_t^{-1/2}$ efficiently, note that $\Theta_t=\mathbf{Y}_t \mathbf{Y}_t^\top+\mathbf{D}_t$
has only $r$ eigenvalues outside those of $\mathbf{D}_t$; a rank-$r$--corrected
eigendecomposition costs $\mathcal{O}(pr^2)$ (solving a secular equation of size $r$).
Setting $\Theta_t = Q\Lambda Q^\top$ (cost $\mathcal{O}(pr^2)$), we obtain
$\Theta_t^{-1/2}=Q\Lambda^{-1/2}Q^\top$, and
$\widetilde\Delta_t = \Theta_t^{-1/2}\Delta\Theta_t\,\Theta_t^{-1/2}$
can be written as $\widetilde A\widetilde A^\top + \widetilde B$
with $\widetilde A\in\R^{p\times 2r}$ and diagonal $\widetilde B$,
costing $\mathcal{O}(pr^2)$.
The matrix $\log(\mathbf{I}_p+\widetilde\Delta_t)$ is then computed via the spectral
decomposition of the $2r\times 2r$ core, exploiting the low-rank-plus-diagonal
structure at cost $\mathcal{O}(pr^2+r^3)$.
\end{remark}

\emph{(c) Riemannian Conjugate-Gradient Solver:}
With the gradient, retraction, and vector transport defined above, we solve the optimization problem by a standard Riemannian conjugate-gradient scheme. At iteration $s$, the search direction is updated as
\begin{equation}
  \xi_t^{(s)}
  =
  -\grad \widetilde{f}
  +
  \beta_t^{(s)}
  \mathcal{T}^{\Bpr}_{\theta_t^{(s-1)}\to\theta_t^{(s)}}\!\left(\xi_t^{(s-1)}\right),
  \label{eq:rcg_direction_update}
\end{equation}
followed by an Armijo--Wolfe line search and the retraction step
\begin{equation}
  \theta_t^{(s+1)}
  =
  \Retr_{\theta_t^{(s)}}\!\left(\alpha^{(s)} \xi_t^{(s)}\right),
  \label{eq:rcg_retraction_step}
\end{equation}
where $\alpha^{(s)}$ is the step size. Since all search directions are maintained in the horizontal space, the orthogonal non-identifiability of $\mathbf{Y}_t$ is handled implicitly throughout the optimization.

The DEGFM RCG algorithm is presented as Algorithm~\ref{alg:depfm}.

\begin{algorithm}[!htbp]
\DontPrintSemicolon
\caption{DEGFM: Riemannian Conjugate Gradient}
\label{alg:depfm}
\KwIn{Data $\{\mathbf{x}_{t,i}\}$; parameters $\lambda,\mu,r,\varepsilon,\varepsilon_{\rm tol}$}
\KwOut{Graph sequence $\{A_t\}_{t=1}^T$}
\textbf{Initialize}: for $t=1,\ldots,T$, set $\mathbf{S}_t=\tfrac{1}{n_t}\sum_i \mathbf{x}_{t,i}\mathbf{x}_{t,i}^\top$;
let $\mathbf{Y}_t^{(0)}=$ leading $r$ eigenvectors of $\mathbf{S}_t$ scaled by eigenvalues;
$\mathbf{D}_t^{(0)}=\diag(\mathbf{S}_t)$\;
\For{$s=0,1,2,\ldots$}{
  \For{$t=1,\ldots,T$}{
    Compute $\Theta_t^{(s)}=\mathbf{Y}_t^{(s)}\mathbf{Y}_t^{(s)\top}+\mathbf{D}_t^{(s)}$\;
    Compute $(\Theta_t^{(s)})^{-1}$ via \eqref{eq:woodbury}\tcp*{$\mathcal{O}(pr^2+r^3)$}
    Compute $\mathbf{x}_{t,i}^\top\Theta_t^{(s)}\mathbf{x}_{t,i}$ via \eqref{eq:quad_form}\tcp*{$\mathcal{O}(pr)$ each}
    Compute $\nabla_{\Theta_t}f_t$ via \eqref{eq:grad_ft}\tcp*{$\mathcal{O}(n_t p + pr^2)$}
  }
  \For{$t=1,\ldots,T-1$}{
    Compute $\Delta\Theta_t=\Theta_{t+1}^{(s)}-\Theta_t^{(s)}$\;
    Compute $(\Theta_t^{(s)})^{-1/2}$ via rank-$r$-corrected eigendecomposition\tcp*{$\mathcal{O}(pr^2)$}
    Compute $\mathbf{M}_t=(\Theta_t^{(s)})^{-1/2}\Theta_{t+1}^{(s)}(\Theta_t^{(s)})^{-1/2}$\;
    Compute $\log(\mathbf{M}_t)$ via low-rank spectral decomposition\tcp*{$\mathcal{O}(pr^2+r^3)$}
    Compute $\nabla_{\Theta_t}c_t=-2(\Theta_t^{(s)})^{-1}\log(\mathbf{M}_t)(\Theta_t^{(s)})^{-1}$ via \eqref{eq:geo_grad_1}\;
  }
  \For{$t=1,\ldots,T$}{
    Assemble $\nabla_{\Theta_t}F
      =\nabla_{\Theta_t}f_t + \mu\nabla_{\Theta_t}c_{t-1}+\mu\nabla_{\Theta_t}c_t$\;
    Compute Riemannian gradient $g_t^{(s)}=\grad_{\theta_t}F$ via
    \eqref{eq:riem_grad}--\eqref{eq:product_grad}\;
    \lIf{$s=0$}{set descent direction $\xi_t^{(0)}=-g_t^{(0)}$}
    \Else{
      Transport: $\tilde\xi_t^{(s-1)}
        =\mathcal{T}^{\Bpr}_{\theta_t^{(s-1)}\to\theta_t^{(s)}}(\xi_t^{(s-1)})$
        via \eqref{eq:transport}\;
      Hestenes--Stiefel update:
      $\beta_t^{(s)}=\frac{\langle g_t^{(s)},\,g_t^{(s)}-\tilde g_t^{(s-1)}\rangle_{\theta_t^{(s)}}}
      {\langle\xi_t^{(s-1)},\,g_t^{(s)}-\tilde g_t^{(s-1)}\rangle_{\theta_t^{(s-1)}}}$\;
      Set $\xi_t^{(s)}=-g_t^{(s)}+\max(\beta_t^{(s)},0)\,\tilde\xi_t^{(s-1)}$
        \tcp*{safeguard restart}
    }
  }
  Find step size $\alpha^{(s)}>0$ satisfying Wolfe conditions on $F$\;
  \For{$t=1,\ldots,T$}{
    Update: $\theta_t^{(s+1)}=\Retr_{\theta_t^{(s)}}(\alpha^{(s)}\xi_t^{(s)})$ via \eqref{eq:retraction}\;
  }
  \lIf{$\max_t\norm{g_t^{(s)}}_{\theta_t^{(s)}}<\varepsilon_{\rm tol}$}{\textbf{break}}
}
Compute $\hat\Theta_t=\varphi(\hat\theta_t)=\hat{\mathbf{Y}}_t\hat{\mathbf{Y}}_t^\top+\hat{\mathbf{D}}_t$;
form conditional correlations $\tilde\Theta^{ij}_t
  = -\hat\Theta^{ij}_t/\sqrt{\hat\Theta^{ii}_t\hat\Theta^{jj}_t}$\;
\Return $A_t[i,j]=\mathbf{1}\left[\abs{\tilde\Theta^{ij}_t}\geq\varepsilon_{\rm tol}\right]$
\end{algorithm}

\subsection{Complexity Analysis}
\label{complexity}

We analyze the per-iteration cost of the proposed Riemannian conjugate-gradient solver under the low-rank-plus-diagonal parameterization
$\Theta_t=\mathbf{Y}_t\mathbf{Y}_t^\top+\mathbf{D}_t$ with $r\ll p$.
The key point is that the algorithm never forms dense $p\times p$ matrices explicitly; all operations are carried out through products with $\mathbf{Y}_t$ and diagonal matrices. This allows the dominant cost to scale with the factor dimension $r$ rather than with $p^3$.

For each time window $t$, evaluating the quadratic forms
$\mathbf{x}_{t,i}^\top \Theta_t \mathbf{x}_{t,i}$
costs $\mathcal{O}(n_tpr)$, since
$\mathbf{x}_{t,i}^\top \Theta_t \mathbf{x}_{t,i}
=
\|\mathbf{Y}_t^\top \mathbf{x}_{t,i}\|_2^2
+
\mathbf{x}_{t,i}^\top \mathbf{D}_t \mathbf{x}_{t,i}$.
The same low-rank structure is used to compute the likelihood gradient and its action on the factor variables, so the gradient evaluation remains $\mathcal{O}(n_tpr)$ up to lower-order diagonal terms.

The main matrix-level operations are the inversion of $\Theta_t$, the computation of the geodesic regularizer, and the horizontal projection. These are all reduced to $r\times r$ core problems via the Woodbury identity and the Sylvester equation for the quotient geometry. As a result, each of these steps costs $\mathcal{O}(pr^2+r^3)$ per time window. The retraction and vector transport are cheaper, requiring at most $\mathcal{O}(pr^2)$ and $\mathcal{O}(pr)$ operations, respectively.

Combining these contributions, one outer iteration of the full algorithm costs $\mathcal{O}\!\left(T(n_{\max}pr+pr^2+r^3)\right)$, $n_{\max}=\max_t n_t$. In the common regime $r\ll p$ and $n_{\max}\asymp p$, this simplifies to $\mathcal{O}(Tp^2r)$,
which is substantially lower than the $\mathcal{O}(Tp^3)$ cost of methods that require repeated dense matrix inversions. The improvement is especially pronounced when the latent rank is small relative to the ambient dimension, which is precisely the setting targeted by the proposed model.
\section{Theoretical Analysis}
\label{sec:theory}
This section establishes three complementary properties of the proposed framework.
We first show that the DEGFM solver converges to a first-order stationary point under standard smoothness and line-search conditions. We then derive a finite-sample estimation bound for any local minimizer of the regularized objective, which separates the contribution of statistical fluctuation from the bias induced by temporal smoothing. Finally, we show that the estimation error is sufficient for consistent edge recovery under a standard beta-min condition. Together, these results clarify the optimization, statistical, and graph-recovery guarantees of the model.

\subsection{Convergence of the Riemannian CG Algorithm}
\label{subsect_conv}
The first result concerns the optimization behaviour of Algorithm~\ref{alg:depfm}.
Since the objective is generally nonconvex, our goal is to guarantee convergence to a stationary point rather than to a global minimizer. Before stating the formal convergence results, we first introduce the following assumptions.
\begin{assumption}[Smoothness and boundedness]
\label{ass:smooth}
(a) $F:\mathcal{M}\to\R$ is lower bounded: $F^*:=\inf F>-\infty$.
(b) $F$ is $L$-smooth on $\mathcal{M}$: there exists $L>0$ such that
\[\norm{\grad F(x)-\mathcal{T}_{y\to x}\grad F(y)}_x\leq L\,d(x,y)\]
for all $x,y\in\mathcal{M}$.
(c) The Hessian of $F$ is uniformly bounded on the sublevel set
$\{F\leq F(\theta^{(0)})\}$.
\end{assumption}

\begin{assumption}[Wolfe line search]
\label{ass:wolfe}
The step size $\alpha^{(s)}$ satisfies the strong Wolfe conditions with
\begin{align}
  &F(\theta^{(s+1)}) \leq F(\theta^{(s)}) + c_1\alpha^{(s)}\innerprod{\grad F(\theta^{(s)})}{\xi^{(s)}}_{\theta^{(s)}},
  \label{eq:wolfe1}\\
  &\innerprod{\grad F(\theta^{(s+1)})}{\calT(\xi^{(s)})}_{\theta^{(s+1)}}
  \geq c_2\innerprod{\grad F(\theta^{(s)})}{\xi^{(s)}}_{\theta^{(s)}},
  \label{eq:wolfe2}
\end{align}
$0<c_1<c_2<1$.
\end{assumption}

\begin{theorem}[Global convergence to critical point]
\label{thm:convergence}
Under Assumptions~\ref{ass:smooth}--\ref{ass:wolfe}, with the Hestenes--Stiefel
$\beta$ update and the safeguard restart $\beta^{(s)}\leftarrow 0$ whenever
$\langle g^{(s)},\xi^{(s)}\rangle > -c_0\norm{g^{(s)}}^2$, the iterates
$\{\theta^{(s)}\}$ generated by Algorithm~\ref{alg:depfm} satisfy
\begin{equation}
  \liminf_{s\to\infty}\norm{\grad F(\theta^{(s)})}_{\theta^{(s)}} = 0.
\end{equation}
Moreover, the iterate $\theta^{(s^*)}$ achieving the minimum gradient norm
up to iteration $S$ satisfies
\begin{equation}
  \min_{s\leq S}\norm{\grad F(\theta^{(s)})}^2_{\theta^{(s)}}
  \leq \frac{C(F(\theta^{(0)})-F^*)}{S+1},
  \label{eq:conv_rate}
\end{equation}
for a constant $C>0$ depending only on $L,c_1,c_2,c_0$.
\end{theorem}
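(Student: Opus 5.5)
The plan is to adapt the standard Zoutendijk-type argument for nonlinear conjugate gradient to the product manifold $\mathcal{M}=\prod_{t=1}^T\Bpr$ with retraction \eqref{eq:retraction} and transport \eqref{eq:transport}. The argument has three steps.

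\emph{Step 1 (sufficient descent).} The safeguard guarantees that every search direction is a descent direction with a uniform angle condition. Whenever $\langle g^{(s)},\xi^{(s)}\rangle>-c_0\norm{g^{(s)}}^2$, the rule $\beta^{(s)}\leftarrow 0$ resets $\xi^{(s)}=-g^{(s)}$, and then $\langle g^{(s)},\xi^{(s)}\rangle=-\norm{g^{(s)}}^2\le -c_0\norm{g^{(s)}}^2$ provided $c_0\le 1$, which we may assume. Hence in all cases
\[
\langle g^{(s)},\xi^{(s)}\rangle_{\theta^{(s)}}\le -c_0\norm{g^{(s)}}^2_{\theta^{(s)}}.
\]
We also need a bound of the form $\norm{\xi^{(s)}}\le C_\xi\norm{g^{(s)}}$. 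To obtain it, I would strengthen the safeguard slightly: also restart when $\norm{\xi^{(s)}}>C_\xi\norm{g^{(s)}}$. This is a harmless extra restart and is implicit in ``safeguard restart''. Since $\PH$ is an orthogonal projection and hence nonexpansive, the transported direction is controlled, and the restart then gives the bound directly.

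\emph{Step 2 (Wolfe lower bound on the step).} Assumption~\ref{ass:smooth}(b), combined with the curvature condition \eqref{eq:wolfe2}, yields
\[
(c_2-1)\langle g^{(s)},\xi^{(s)}\rangle\le \langle g^{(s+1)},\calT\xi^{(s)}\rangle-\langle g^{(s)},\xi^{(s)}\rangle\le L\,\alpha^{(s)}\norm{\xi^{(s)}}^2 ,
\]
where the right-hand inequality uses $d(\theta^{(s)},\theta^{(s+1)})\le C_R\alpha^{(s)}\norm{\xi^{(s)}}$. For the retraction \eqref{eq:retraction} this holds locally, with $C_R$ bounded on the sublevel set via Assumption~\ref{ass:smooth}(c); absorbing $C_R$ into $L$ gives the displayed bound. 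It follows that
\[
\alpha^{(s)}\ge \frac{(1-c_2)\,|\langle g^{(s)},\xi^{(s)}\rangle|}{L\norm{\xi^{(s)}}^2}.
\]
Substituting this into the Armijo condition \eqref{eq:wolfe1} gives the Zoutendijk-type decrease
\[
F(\theta^{(s)})-F(\theta^{(s+1)})\ge \frac{c_1(1-c_2)}{L}\,\frac{\langle g^{(s)},\xi^{(s)}\rangle^2}{\norm{\xi^{(s)}}^2}\ge \frac{c_1(1-c_2)c_0^2}{L\,C_\xi^2}\norm{g^{(s)}}^2 .
\]

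\emph{Step 3 (telescoping).} Summing the decrease from $s=0$ to $S$ and using $F\ge F^*$ from Assumption~\ref{ass:smooth}(a) gives
\[
\sum_{s\le S}\norm{g^{(s)}}^2\le C\bigl(F(\theta^{(0)})-F^*\bigr),\qquad C=\frac{L\,C_\xi^2}{c_1(1-c_2)c_0^2}.
\]
Bounding the sum below by $(S+1)\min_{s\le S}\norm{g^{(s)}}^2$ yields \eqref{eq:conv_rate}. The series $\sum_s\norm{g^{(s)}}^2$ is therefore finite, so $\norm{g^{(s)}}\to0$, which in particular gives the $\liminf$ statement.

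\emph{Main obstacle.} The delicate point is the bound $\norm{\xi^{(s)}}\le C_\xi\norm{g^{(s)}}$. The HS coefficient uses inner products at two different base points, and the paper's safeguard truncates only at $\max(\beta,0)$, which does not by itself bound $\beta^{(s)}$. I would first try to bound $|\beta^{(s)}|$ directly using Lipschitz continuity of $\grad F$ and the Wolfe curvature condition: the denominator satisfies $\langle\xi^{(s-1)},g^{(s)}-\tilde g^{(s-1)}\rangle\ge(1-c_2)c_0\norm{g^{(s-1)}}^2$, while the numerator is at most $L\,\alpha\norm{\xi}\norm{g}$. This controls $\beta$ only while the gradients stay bounded away from zero. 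Therefore I would argue by contradiction for the $\liminf$ claim, assuming $\norm{g^{(s)}}\ge\epsilon$ for all $s$, and rely on the additional norm-based restart to obtain the explicit rate. In that regime the constant $C$ depends on $C_\xi$ as well as on $L,c_1,c_2,c_0$.
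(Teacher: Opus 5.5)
Your skeleton is the paper's: sufficient descent enforced by the restart, a lower bound on the step, then telescoping. Where you differ is the step bound. You take it from the curvature condition~\eqref{eq:wolfe2}, which gives $F(\theta^{(s)})-F(\theta^{(s+1)})\ge\tfrac{c_1(1-c_2)}{L}\innerprod{g^{(s)}}{\xi^{(s)}}^2/\norm{\xi^{(s)}}^2$ with no bound on the directions. The paper instead uses an Armijo/backtracking estimate whose $\alpha_{\min}$ already contains $C_\beta$, $\kappa_R$ and the backtracking ratio. Your route is cleaner and isolates exactly where a direction bound enters.

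As a proof of the stated theorem, however, there is a genuine gap, and it is the one you flag yourself. Nothing in the specified iteration (HS with $\max(\beta,0)$ plus the sufficient-descent restart) gives $\norm{\xi^{(s)}}\le C_\xi\norm{g^{(s)}}$. Adding a norm-based restart is not implicit in the statement: it changes the algorithm, and your constant then depends on $C_\xi$, not only on $L,c_1,c_2,c_0$.

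Your fallback for the $\liminf$ does not close as sketched. Under $\norm{g^{(s)}}\ge\epsilon$, a uniform bound $\abs{\beta^{(s)}}\le b$ only gives $\norm{\xi^{(s)}}\le\Gamma+bC_{\calT}\norm{\xi^{(s-1)}}$, i.e.\ possibly geometric growth. That is fully compatible with the Zoutendijk consequence $\sum_s\norm{\xi^{(s)}}^{-2}<\infty$, so no contradiction follows. Forcing $\sum_s\norm{\xi^{(s)}}^{-2}=\infty$ for HS$^+$ takes the Gilbert--Nocedal Property~(*) machinery. Your estimate $\abs{\beta^{(s)}}\lesssim\alpha^{(s-1)}\norm{\xi^{(s-1)}}$ is its first ingredient. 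You also need $\beta^{(s)}\ge0$, slowly varying unit directions, and a bounded sublevel set to handle long steps. That argument yields the $\liminf$ but no rate.

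Do not expect the paper's proof to supply the missing estimate. It bounds $\abs{\beta^{(s)}_{\mathrm{HS}}}\le C_\beta$ using $\alpha^{(s)}\ge\alpha_{\min}$, while defining $\alpha_{\min}$ through $C_\beta$, which is circular. It then asserts $\norm{\xi^{(s)}}\le(1+C_\beta)\norm{g^{(s)}}$, which would require $\norm{\calT\xi^{(s-1)}}\lesssim\norm{g^{(s)}}$, precisely the bound in question. What can honestly be proved is:
\begin{itemize}
\item the rate~\eqref{eq:conv_rate} for the algorithm with your extra restart, with $C$ depending on $C_\xi$;
\item the $\liminf$ for the unmodified algorithm, via a Riemannian Gilbert--Nocedal argument.
\end{itemize}

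There are also two smaller points in your Step~2. First, Assumption~\ref{ass:smooth}(b) controls only $\norm{g^{(s+1)}-\calT g^{(s)}}$. You also need to bound $\innerprod{\calT g^{(s)}}{\calT\xi^{(s)}}_{\theta^{(s+1)}}-\innerprod{g^{(s)}}{\xi^{(s)}}_{\theta^{(s)}}$.
\begin{itemize}
\item Its $\mathbf{Y}$-block part vanishes: $(\mathbf{Y}^{(s)}+\alpha^{(s)}\xi^{(s)}_{\mathbf{Y}})^\top\xi^{(s)}_{\mathbf{Y}}$ is symmetric, so $\xi^{(s)}$ stays horizontal and the projection is self-adjoint.
\item Its $\mathbf{D}$-block part comes from the change of metric and is $\mathcal{O}(\alpha^{(s)}\norm{g^{(s)}}\norm{\xi^{(s)}}^2)$. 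It is absorbable only if gradients are bounded on the sublevel set.
\end{itemize}
Second, $C_R$ needs no Hessian bound. Since $\abs{\log(1+e+\tfrac{1}{2}e^2)}\le\abs{e}$ for all real $e$, one has $d(\theta,\Retr_\theta(\alpha\xi))\le\alpha\norm{\xi}_\theta$ directly for this retraction.
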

\begin{proof}
    See Appendix \ref{app4}.
\end{proof}

Theorem~\ref{thm:convergence} guarantees that Algorithm~\ref{alg:depfm} makes progress regardless of problem non-convexity. We next ask how close the resulting estimator is to the true precision sequence.

\subsection{Non-Asymptotic Statistical Error Bound}

\begin{assumption}[Elliptical model]
\label{ass:ellip}
For each $t$, $\mathbf{x}_{t,i}\overset{\rm i.i.d.}{\sim}\mathcal{ES}(0,\Theta_t^{*-1},g_t)$
where $g_t$ belongs to the Tyler class: $u_t(s)\in[\underline{u},\bar{u}]$
for positive constants $0<\underline{u}\leq\bar{u}<\infty$.
\end{assumption}

\begin{assumption}[True LRaD precision structure]
\label{ass:structure}
The true precision matrix satisfies
$\Theta_t^* = \mathbf{Y}_t^* \mathbf{Y}_t^{*\top} + \mathbf{D}_t^*$
with condition number $\kappa_t^*:=\sigma_{\max}(\mathbf{Y}_t^*)/\sigma_{\min}(\mathbf{Y}_t^*)\leq\kappa<\infty$
and $\lambda_{\min}(\mathbf{D}_t^*)\geq d_{\min}>0$ uniformly in $t$.
\end{assumption}

\begin{assumption}[Graph sparsity]
\label{ass:sparse}
$\max_t\norm{\Theta_t^*}_{0,\rm off}\leq a$,
where $\norm{\cdot}_{0,\rm off}$ counts off-diagonal nonzeros.
\end{assumption}

\begin{assumption}[Temporal smoothness]
\label{ass:smooth2}
$\max_t\dS(\Theta_t^*,\Theta_{t+1}^*)\leq\delta$ for some $\delta>0$.
\end{assumption}

\begin{assumption}[Sample size]
\label{ass:sample}
$n_{\min}:=\min_t n_t \geq C_0(r+1)\log p$ for a sufficiently large
absolute constant $C_0>0$.
\end{assumption}

\begin{theorem}[Non-asymptotic estimation error]
\label{thm:estimation}
Under Assumptions~\ref{ass:ellip}--\ref{ass:sample}, define
$p_{\mathrm{eff}}:=\min\{p(r+1),pr+a\}$.
Let $\{\hat\Theta_t\}$ be any local minimizer of~\eqref{eq:DEPFM} with
$\lambda = C_1\sqrt{\log p/n_{\min}}$ and any $\mu>0$.
Then there exists an absolute constant $c>0$, depending only on
$\underline{u},\bar{u},\kappa$, and $d_{\min}$, such that with probability at least
$1-4T\exp(-cn_{\min}/p)-2p^{-2}$:
\begin{equation}
  \frac{1}{T}\sum_{t=1}^T
  \dS^2\!\left(\hat\Theta_t,\,\Theta_t^*\right)
  \leq
  \underbrace{C_3\,\frac{p_{\mathrm{eff}}\log p}{n_{\min}}}_{\text{statistical error}}
  +
  \underbrace{C_4\,\mu^2\delta^2}_{\text{smoothing bias}},
  \label{eq:error_bound}
\end{equation}
for constants $C_3,C_4>0$ depending only on $\kappa$, $d_{\min}$, $\bar{u}$, $\underline{u}$.
In particular, choosing
\begin{equation}
  \mu \;=\; \frac{C_5}{\delta}\sqrt{\frac{p_{\mathrm{eff}}\log p}{n_{\min}}}
  \label{eq:mu_opt}
\end{equation}
for a sufficiently large constant $C_5>0$ balances both terms, yielding
\[
  \frac{1}{T}\sum_{t=1}^T\dS^2\!\left(\hat\Theta_t,\Theta_t^*\right)
  \;=\;
  \mathcal{O}\!\left(\frac{p_{\mathrm{eff}}\log p}{n_{\min}}\right).
\]
\end{theorem}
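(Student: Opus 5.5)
The plan is to run the standard M-estimation "basic inequality" argument, window by window, and to measure errors in the affine-invariant geometry through a local comparison with the Frobenius norm. Let $\Delta_t := \hat\Theta_t - \Theta_t^*$. Because $\hat\Theta_t$ and $\Theta_t^*$ both have the LRaD form, $\Delta_t$ lies in a set of rank-$2r$-plus-diagonal matrices. Its metric entropy scales with $p(r+1)$; if instead the sparsity of Assumption~\ref{ass:sparse} is used on the off-diagonal part, it scales with $pr+a$. This is where $p_{\mathrm{eff}}$ enters.

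The first step is local. On a neighbourhood $\{\dS(\Theta,\Theta_t^*)\le R\}$, the affine-invariant distance is equivalent to $\|\Theta_t^{*-1/2}(\Theta-\Theta_t^*)\Theta_t^{*-1/2}\|_F$, with constants depending on $\kappa$ and $d_{\min}$ (Assumption~\ref{ass:structure}). On the same neighbourhood, $-\tfrac12\log\det$ is strongly convex in this weighted norm. Under Assumption~\ref{ass:ellip} the factor $u_t\in[\underline u,\bar u]$, so the population version of $\mathcal L_t$ satisfies
\[
\mathbb E\mathcal L_t(\Theta)-\mathbb E\mathcal L_t(\Theta_t^*)\ge c_{\underline u}\,\dS^2(\Theta,\Theta_t^*).
\]
Here I use that $\Theta_t^*$ is the population minimiser for the Tyler-class generator, after a scalar normalisation that I will absorb into the constants.

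The second step is the empirical deviation. I bound $\sup_\Delta |\langle \nabla\mathcal L_t(\Theta_t^*)-\mathbb E\nabla\mathcal L_t(\Theta_t^*),\Delta\rangle|/\|\Delta\|$ over the structured set. Bounded $u_t$ makes the summands sub-exponential. An $\varepsilon$-net argument over the $p_{\mathrm{eff}}$-dimensional set, combined with Bernstein's inequality and a union bound over $t$, gives a deviation of order $\sqrt{p_{\mathrm{eff}}\log p/n_{\min}}$ times the norm. The failure probability is $4T\exp(-cn_{\min}/p)$. The extra $2p^{-2}$ term comes from an entrywise max-deviation bound for the sample scatter. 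That bound justifies $\lambda=C_1\sqrt{\log p/n_{\min}}$ dominating the off-diagonal gradient noise, so the smoothed $\ell_1$ term contributes at most $\lambda\sqrt a\,\|\Delta_t\|$. Assumption~\ref{ass:sample} keeps all of this within the local radius $R$.

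The third step handles the temporal coupling. Local minimality of $\{\hat\Theta_t\}$ gives $F(\hat\Theta)\le F(\hat\Theta\text{ perturbed toward }\Theta^*)$. Since optimality is only local, I compare along the geodesic from $\hat\Theta_t$ toward $\Theta_t^*$ and use the first-order condition; geodesic convexity of $\dS^2$ on $\Spp$, a Hadamard manifold, makes this valid. The penalty difference is then controlled using $\dS(\Theta_t^*,\Theta_{t+1}^*)\le\delta$ (Assumption~\ref{ass:smooth2}) and the triangle inequality: the gradient of $\mu\sum c_t$ at the truth has norm at most $2\mu\delta$ per window. Summing over $t$ and applying Young's inequality to the cross term $\mu\delta\,\dS(\hat\Theta_t,\Theta_t^*)$ gives
\[
\frac1T\sum_t c\,\dS^2\le C\frac{p_{\mathrm{eff}}\log p}{n_{\min}}+C\mu^2\delta^2,
\]
which is \eqref{eq:error_bound}. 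The displayed choice of $\mu$ in \eqref{eq:mu_opt} then makes the bias term of the same order as the statistical term.

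I expect the main obstacle to be the localisation step, and a point I expect to cause trouble is the bias. Restricted strong convexity only holds near $\Theta_t^*$, and an arbitrary local minimiser need not lie in that region. I would try to handle this with a peeling/convexity argument along geodesics: if $\hat\Theta_t$ lay outside the radius-$R$ ball, the point where the geodesic toward the truth crosses the ball's boundary would have lower objective, by the quadratic lower bound, which contradicts minimality. This may require restricting to local minimisers in a sublevel set, or to those reached from the initialisation of Algorithm~\ref{alg:depfm}, under Assumption~\ref{ass:smooth}. As for the bias, the smoothing term appears as $\mu^2\delta^2$ without division by the curvature constant; if the Young step instead produces $\mu^2\delta^2/c$, I will absorb that into $C_4$.
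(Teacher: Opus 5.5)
The comparison in your third step does not go through, and neither does the localisation argument built on it, because the feasible set is not convex. The estimator is only a local minimiser over $\prod_t\Mpr$. Its first-order condition is therefore $\grad_{\theta_t}F=0$, which by Proposition~\ref{prop:riem_grad} means exactly $\Gamma_t\hat{\mathbf{Y}}_t=\mathbf{0}$ and $\ddiag(\Gamma_t)=\mathbf{0}$, where $\Gamma_t$ is the full Euclidean gradient~\eqref{eq:full_theta_gradient}. For $r\ll p$, $\Mpr$ is a lower-dimensional subset of $\Spp$ that is neither convex nor geodesically convex for the affine-invariant metric. The segment from $\hat\Theta_t$ to $\Theta_t^*$ (whose low-rank part has rank up to $2r$) and the AIRM geodesic between them both generically leave $\Mpr$.

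Two consequences follow. First, local minimality gives no inequality at points of that path. Second, the slope of the ambient objective at $\hat\Theta$ towards $\Theta^*$ need not vanish. The two conditions only reduce $\langle\Gamma_t,\Theta_t^*-\hat\Theta_t\rangle_F$ to $\langle\Gamma_t,P_\perp\mathbf{Y}_t^*\mathbf{Y}_t^{*\top}P_\perp\rangle_F$, with $P_\perp$ the orthogonal projector onto $\operatorname{span}(\hat{\mathbf{Y}}_t)^\perp$, and this is nonzero in general (likewise for the direction $\exp^{-1}_{\hat\Theta_t}(\Theta_t^*)$). Geodesic convexity of $\dS^2$ on the Hadamard manifold $\Spp$ is an ambient property and cannot compensate for this.

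The localisation-by-contradiction fails for the same reason. The point where the geodesic meets the sphere of radius $R$ is infeasible, and without convexity along a feasible path, a lower value away from $\hat\Theta_t$ does not contradict mere local minimality. Moreover, nothing in the hypotheses places $\hat\Theta_t$ in that ball. Assumption~\ref{ass:sample} only makes $(r+1)\log p/n_{\min}$ small, so $p_{\mathrm{eff}}\log p/n_{\min}$ may be as large as $p/C_0$. Also, $\mu\delta$ is unrestricted because $\mu>0$ is arbitrary.

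A repair needs a genuinely new ingredient, e.g.\ a local factor-space (Burer--Monteiro-type) analysis that treats the normal term as a curvature remainder bounded by $\|\Gamma_t\|_F\,\|\mathbf{Y}_t^*\mathbf{Y}_t^{*\top}-\hat{\mathbf{Y}}_t\hat{\mathbf{Y}}_t^\top\|_F^2/\sigma_{\min}^2(\mathbf{Y}_t^*)$. That analysis needs three things the statement does not provide: a lower bound on $\sigma_{\min}(\mathbf{Y}_t^*)$ (Assumption~\ref{ass:structure} only bounds the ratio $\kappa$), an identifiability condition separating the low-rank and diagonal parts, and an a priori basin hypothesis on $\hat\Theta_t$. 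It would therefore yield a local result, not one for any local minimiser.

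Two further problems. First, your own penalty bound $\lambda\sqrt a\,\|\Delta_t\|$ produces, after Young's inequality, a term of order $\lambda^2a\asymp a\log p/n_{\min}$ that has vanished from your final display. This term is not dominated by $p_{\mathrm{eff}}\log p/n_{\min}$ once $a>p(r+1)$, which is the generic case because $\mathbf{Y}_t^*\mathbf{Y}_t^{*\top}$ is typically dense. Second, controlling only the score at $\Theta_t^*$ suffices when $\rho_t$ is linear (Gaussian). For non-constant $u_t$ the empirical loss has a random nonlinear part, so you need a deviation bound uniform over the localisation set.

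For comparison, the paper never compares $\hat\Theta_t$ with $\Theta_t^*$ directly. It introduces the population oracle $\Theta_t^{\mathrm{or},\mu}$ and splits $\dS^2(\hat\Theta_t,\Theta_t^*)\le2\dS^2(\hat\Theta_t,\Theta_t^{\mathrm{or},\mu})+2\dS^2(\Theta_t^{\mathrm{or},\mu},\Theta_t^*)$. It then subtracts stationarity conditions and controls both pieces with two tools. One is geodesic strong convexity of $\bar f_t$, stated as a two-point monotonicity inequality for all $\mathbf{A},\mathbf{B}\in\Mpr$. The other is a gradient-concentration bound uniform over the sublevel set (its metric-entropy and peeling step), which also covers your score issue. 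The paper justifies the two-point inequality only through strong convexity near $\Theta_t^*$, yet applies it to arbitrary pairs. That global inequality is exactly what lets the paper bypass the feasibility and localisation problems you ran into, so following its route means adopting it as a hypothesis.

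Conversely, your handling of the temporal term is a cleaner way to obtain the $\mu\delta$ contribution than the paper's bound at the oracle. You evaluate its gradient at the truth, where Assumption~\ref{ass:smooth2} actually applies, and use monotonicity of the geodesically convex penalty.
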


\begin{proof}
    See Appendix \ref{app5}.
\end{proof}

The bound in~\eqref{eq:error_bound} has a natural interpretation.
The first term is the statistical estimation error, controlled by the effective dimension of the precision factor model and the sample size in the shortest window. The second term is the smoothing bias: stronger temporal regularization improves stability across time, but it also introduces bias when the underlying graphs vary rapidly. Balancing the two terms yields the recommended scaling of $\mu$ and explains the role of geodesic temporal regularization.

The bound in Theorem~\ref{thm:estimation} controls the averaged geodesic estimation error across all time windows.
For graph learning applications, one also requires that the estimated edge set be exactly correct with high probability.
The following result establishes this under a standard beta-min condition on the minimum signal strength.
\subsection{Support Recovery}
\begin{assumption}[Beta-min condition]
\label{ass:betamin}
The minimum non-zero conditional correlation satisfies
$\tau := \min_{t,(i,j)\in E_t^*}|\tilde\Theta^{*ij}_t|\geq\tau_{\min}>0$
uniformly in $t$.
\end{assumption}

\begin{theorem}[Edge recovery consistency]
\label{thm:support}
Under Assumptions~\ref{ass:ellip}--\ref{ass:betamin}, and
$n_{\min}\geq C_0'\tau_{\min}^{-2}p_{\mathrm{eff}}\log p$ with $C_0'$ being a large constant and $p_{\mathrm{eff}}:=\min\{p(r+1),pr+a\}$,
the edge sets recovered by Algorithm~\ref{alg:depfm}
(thresholding $|\tilde{\hat\Theta}^{ij}_t|\geq\tau_{\min}/2$)
satisfy
\begin{equation}
  P\!\left(\hat E_t = E_t^*\;\;\forall\,t\in[T]\right)
  \geq 1 - 4T\exp(-cn_{\min}/p) - 2Tp^{-2}.
\end{equation}
\end{theorem}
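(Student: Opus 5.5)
The plan is to derive edge recovery directly from the estimation bound of Theorem~\ref{thm:estimation}, upgraded from an averaged geodesic bound to a per-window entrywise bound on the conditional correlations. Once every $|\tilde{\hat\Theta}^{ij}_t-\tilde\Theta^{*ij}_t|<\tau_{\min}/2$, thresholding at $\tau_{\min}/2$ separates true edges from non-edges exactly. True edges satisfy $|\tilde\Theta^{*ij}_t|\ge\tau_{\min}$ by Assumption~\ref{ass:betamin}, so their estimates exceed $\tau_{\min}/2$. Non-edges have $\tilde\Theta^{*ij}_t=0$, so their estimates stay below $\tau_{\min}/2$.

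\textbf{Step 1: per-window estimation bound.} Run the argument of Theorem~\ref{thm:estimation} window by window rather than averaged. The union bound over $t$ is the source of the $4T\exp(-cn_{\min}/p)$ and $2Tp^{-2}$ terms. The goal is
\[
\max_t \dS^2(\hat\Theta_t,\Theta_t^*)\le C_3\frac{p_{\mathrm{eff}}\log p}{n_{\min}}+C_4\mu^2\delta^2,
\]
with $\mu$ chosen as in \eqref{eq:mu_opt}, so that $\max_t\dS(\hat\Theta_t,\Theta_t^*)\le C\sqrt{p_{\mathrm{eff}}\log p/n_{\min}}$. If only the averaged bound is available, a crude fallback is $\max_t\dS^2\le T\cdot\frac1T\sum_t\dS^2$. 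That costs a factor $T$, which can be absorbed into $C_0'$ only for fixed $T$. I would therefore try to obtain the per-window version by noting that the proof's concentration step already holds uniformly in $t$ on the stated event.

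\textbf{Step 2: from geodesic to entrywise error.} Write $\hat\Theta_t=\Theta_t^{*1/2}\exp(\mathbf{H})\Theta_t^{*1/2}$ with $\|\mathbf{H}\|_F=\dS(\hat\Theta_t,\Theta_t^*)\le\eta$. For small $\eta$, $\|\exp(\mathbf{H})-\mathbf{I}\|_2\le e^{\eta}-1\le 2\eta$, so
\[
\|\hat\Theta_t-\Theta_t^*\|_{\max}\le\|\hat\Theta_t-\Theta_t^*\|_2\le 2\eta\,\lambda_{\max}(\Theta_t^*).
\]
This requires $\lambda_{\max}(\Theta_t^*)$ to be bounded in terms of $\kappa$, $d_{\min}$ and the scale of $\mathbf{Y}_t^*$. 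I would take such a bound as implicit in Assumption~\ref{ass:structure}, since the constants of Theorem~\ref{thm:estimation} already depend on it. The diagonal entries satisfy $\Theta^{*ii}_t\ge d_{\min}$, so once $\eta$ is small the estimated diagonals stay above $d_{\min}/2$. The map $(a,b,c)\mapsto -a/\sqrt{bc}$ is Lipschitz on that region, which gives
\[
|\tilde{\hat\Theta}^{ij}_t-\tilde\Theta^{*ij}_t|\le C'\eta,
\]
with $C'$ depending on $d_{\min}$ and $\lambda_{\max}(\Theta^*_t)$.

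\textbf{Step 3: close the argument.} The sample-size condition $n_{\min}\ge C_0'\tau_{\min}^{-2}p_{\mathrm{eff}}\log p$ gives $C'\eta\le C'C\,\tau_{\min}/\sqrt{C_0'}<\tau_{\min}/2$ once $C_0'$ is large. Because $\tau_{\min}\le1$, the same condition also implies Assumption~\ref{ass:sample} and the smallness of $\eta$ needed in Step~2. The thresholding then recovers every $E_t$ on the high-probability event, which proves the theorem.

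The main obstacle is Step~1. A uniform-in-$t$ bound with constants independent of $T$ is needed, and the smoothing-bias term $\mu^2\delta^2$ must be controlled per window using the chosen $\mu$. The bounded-spectrum requirement in Step~2 is a secondary concern.
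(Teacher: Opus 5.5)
Your proposal is correct and follows the paper's proof essentially step by step: a per-window bound $\dS(\hat\Theta_t,\Theta_t^*)\le\eta$ for all $t$, obtained by union-bounding the concentration step of Theorem~\ref{thm:estimation} over $t$; conversion to an entrywise error on $\hat\Theta_t-\Theta_t^*$; Lipschitz control of $(a,b,c)\mapsto -a/\sqrt{bc}$; and the beta-min thresholding argument. Your explicit choice of $\mu$ as in \eqref{eq:mu_opt} is what the paper tacitly needs when it treats $p_{\mathrm{eff}}\log p/n_{\min}$ as the dominant term of $\eta^2$, and your secondary worry can be dropped: from $\hat\Theta_t=\Theta_t^{*1/2}\exp(\mathbf{H})\Theta_t^{*1/2}$ you get $|(\hat\Theta_t-\Theta_t^*)_{ij}|\le\sqrt{\Theta_t^{*ii}\Theta_t^{*jj}}\,(e^{\eta}-1)$, and because conditional correlations are invariant under diagonal rescaling, $|\tilde{\hat\Theta}^{ij}_t-\tilde\Theta^{*ij}_t|$ is bounded by an absolute constant times $\eta$ with no bound on $\lambda_{\max}(\Theta_t^*)$ needed.
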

\begin{proof}
    See Appendix \ref{app6}.
\end{proof}

Together, Theorems~\ref{thm:estimation} and \ref{thm:support} show that DEGFM enjoys both quantitatively controlled estimation error and exact graph structure recovery, provided the sample size grows logarithmically in $p$ and the minimum signal strength is bounded away from zero.
 
\section{Experimental Results}
\label{sec:experiments}

\subsection{Synthetic Experiments}
\label{subsec:synthetic}

\begin{figure}[!htbp]
    \centering
    \includegraphics[width=1\linewidth]{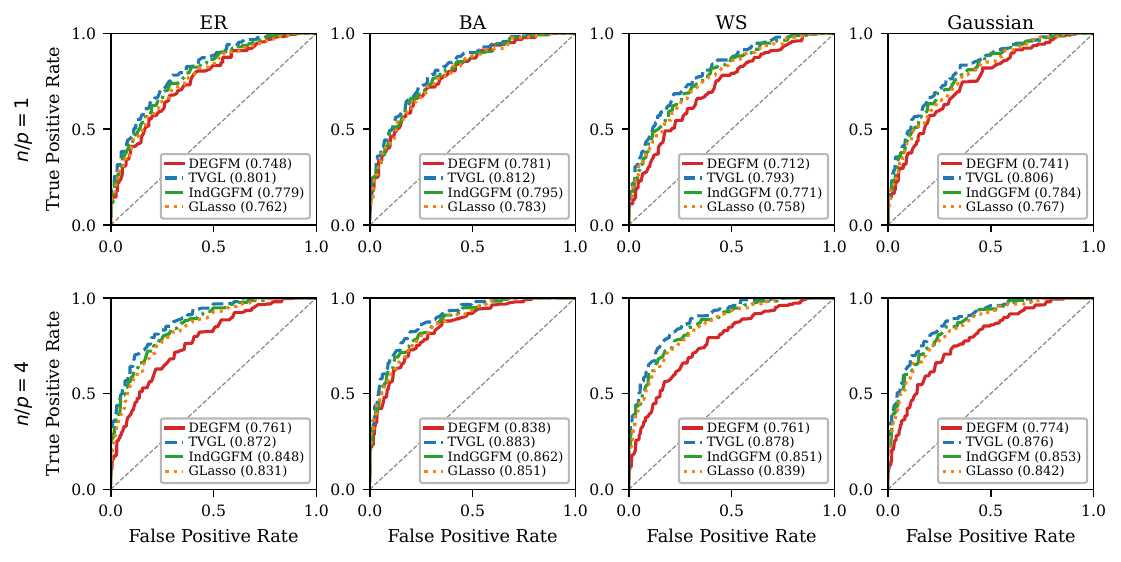}
    \caption{ROC curves for graph recovery in the non-LRaD setting under four network models: ER, BA, WS, and Gaussian. The top and bottom rows correspond to $n/p=1$ and $n/p=4$, respectively, and the AUC values are reported in parentheses.}
    \label{fig1}
\end{figure}
We conduct synthetic experiments to evaluate the proposed method from four complementary perspectives: graph recovery accuracy, robustness to heavy-tailed observations, computational efficiency and convergence analysis. To avoid any dependence on a particular data-generating mechanism, we consider two synthetic regimes. The first regime generates sparse random graphs, including Erd\H{o}s--R\'enyi (ER), Barab\'asi--Albert (BA), Watts--Strogatz (WS), and Gaussian random graphs, to test performance across different sparsity and clustering patterns. The second regime generates dynamic graphs with an explicit low-rank-plus-diagonal (LRaD) precision structure, which directly matches the modeling assumption of our method. In both regimes, the observations are divided into $T$ time windows, and each window contains $n_t$ samples in dimension $p$. Unless stated otherwise, the same sequence of ground-truth graphs is used across all competing methods, and all hyperparameters are selected by validation on a held-out subset.

\paragraph{Data generation.}
In the random-graph regime, we first sample an undirected support graph $G_t$ for each window $t$ from one of the four graph families above. Edge weights are then assigned on the support and symmetrized to obtain a sparse precision matrix $\Theta_t^*$ that is strictly positive definite, for example by enforcing diagonal dominance after weight assignment. In the LRaD regime, the ground-truth precision matrix is constructed as $\Theta_t^* = \mathbf{Y}_t^* {\mathbf{Y}_t^*}^{\top} + \mathbf{D}_t^*$, where $\mathbf{Y}_t^* \in \mathbb{R}^{p\times r}$ has rank $r \ll p$ and $\mathbf{D}_t^*$ is diagonal and positive definite. Temporal evolution is introduced by smoothly perturbing the factor loading and diagonal components across adjacent windows, so that the sequence $\{\Theta_t^*\}_{t=1}^T$ varies gradually over time. To test robustness to distributional misspecification, we consider two observation models: (i) Gaussian graph signals, where $\mathbf{x}_{i,t} \sim \mathcal{N}(0,{(\Theta_t^*)}^{-1})$, and (ii) Student-$t$ elliptical graph signals, generated through the standard Gaussian scale-mixture representation, which induces heavy tails while preserving the same precision structure.

\paragraph{Baselines.}
We compare the proposed method with three baselines. GLasso estimates a single static Gaussian graphical model \cite{friedman2008sparse} and serves as a non-dynamic benchmark. TVGL extends sparse graphical model learning to the temporal setting by imposing a smoothness penalty between consecutive precision matrices \cite{hallac2017network}. IndGGFM is the closest low-rank-plus-diagonal baseline under a Gaussian graphical factor model \cite{zhang2025graph}, but it is applied independently across time windows and does not exploit temporal coupling. This comparison isolates the benefit of joint estimation on the quotient manifold with geodesic regularization.

\paragraph{Evaluation metrics.}
We report the area under the ROC curve (AUC) for edge recovery, which is the primary metric for graph estimation. In addition, to assess whether the recovered graph preserves latent community organization, we compute modularity, normalized mutual information (NMI), and adjusted Rand index (ARI) after extracting a partition from the estimated graph via spectral clustering. These three measures are particularly informative when the ground-truth graphs exhibit block or mesoscopic structure, and they complement AUC by quantifying the structural fidelity of the recovered network beyond individual edge decisions.
\begin{figure}[!htbp]
    \centering
    \includegraphics[width=1\linewidth]{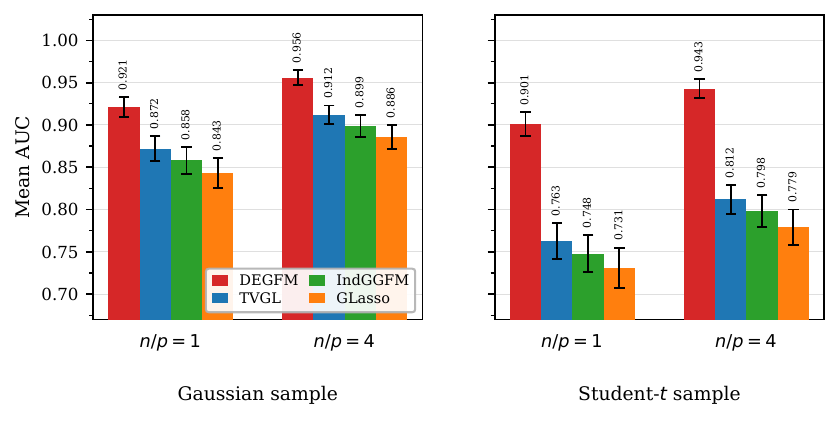}
    \caption{Mean AUC comparison in the LRaD setting for Gaussian and Student-$t$ samples under two sample regimes ($n/p=1$ and $n/p=4$). The proposed method is consistently competitive across all scenarios.}
    \label{fig2}
\end{figure}
\paragraph{Random Graph Recovery.}
We begin with a controlled simulation study to systematically evaluate graph recovery 
performance across varying graph topologies and sample regimes. Four canonical random 
graph models with $p = 50$ nodes are considered: 
(i)~\textbf{Erd\H{o}s--R\'{e}nyi} (ER) with edge probability $\rho = 0.10$; 
(ii)~\textbf{Barab\'{a}si--Albert} (BA) with $m = 2$ edges attached per new node, 
producing a scale-free topology; 
(iii)~\textbf{Watts--Strogatz} (WS) with $k = 4$ nearest neighbours and rewiring 
probability $\beta = 0.30$, yielding a small-world structure; and 
(iv)~a \textbf{Gaussian} random graph, where $p$ nodes are drawn uniformly 
at random from the unit square $[0,1]^2$ and the weight between nodes $i$ and $j$ 
is set to $w_{ij} = \exp(-\|x_i - x_j\|^2 / (2\sigma^2))$ with bandwidth 
$\sigma = 0.5$; edges with $w_{ij}$ below a threshold $\tau = 0.5$ are pruned 
to obtain a sparse topology.

To induce realistic temporal dynamics over $T = 5$ successive windows, we introduce 
controlled graph perturbations: at each transition $t \to t+1$, exactly $10\%$ of 
existing edges are randomly removed and an equal number of new edges are added, 
preserving the overall graph density while capturing meaningful temporal variation. 
Edge weights are drawn independently from $\mathcal{U}(2, 5)$, yielding a symmetric 
weighted adjacency matrix $\mathbf{A}_t$ at each time step. The corresponding precision matrix 
is then constructed via the graph Laplacian as $\Theta_t = \mathbf{L}_t + \kappa \mathbf{I}$, where 
$\mathbf{L}_t = \mathbf{D}_t - \mathbf{A}_t$ is the combinatorial Laplacian ($\mathbf{D}_t$ being the 
degree matrix) and $\kappa = 0.10$ ensures strict positive definiteness. Given $\Theta_t$, 
$n_t$ i.i.d.\ graph signals are drawn from $\mathcal{N}(0, \Theta_t^{-1})$, with 
$n_t \in \{50, 200\}$ corresponding to the low- and high-sample regimes $n/p \in \{1, 4\}$, 
respectively.

Figure~\ref{fig1} reports the ROC curves and AUC scores of the proposed 
DEGFM against three baselines, TVGL, GLasso, and IndGGFM, under all four graph 
topologies and both sample regimes. Several observations are in order. First, the 
proposed DEGFM achieves competitive, though not uniformly superior, ROC performance 
relative to the baselines. This is expected: the four random graph models generate 
relatively dense, unstructured connectivity patterns that do not conform to the 
low-rank-plus-diagonal (LRaD) precision structure assumed by DEGFM; under such 
model mismatch, a modest performance gap is both anticipated and acceptable. 
Crucially, the degradation remains limited, demonstrating that DEGFM retains 
meaningful graph recovery capability even when its structural assumptions are 
violated. Second, increasing the sample size from $n/p = 1$ to $n/p = 4$ consistently 
improves AUC across all methods, as richer data yields more accurate sample covariance 
estimates and thus better-conditioned graph learning problems. Third, and most 
notably, the relative performance gap between DEGFM and the baselines \emph{narrows} 
in the low-sample regime ($n/p = 1$). This behaviour highlights a key advantage of 
low-rank parameterization: by reducing the effective parameter count from $\mathcal{O}(p^2)$ to 
$\mathcal{O}(p(r+1))$, DEGFM requires substantially fewer observations to achieve reliable 
estimation, conferring a natural statistical efficiency benefit precisely when data 
are scarce.

\begin{figure}[!htbp]
    \centering
    \includegraphics[width=0.7\linewidth]{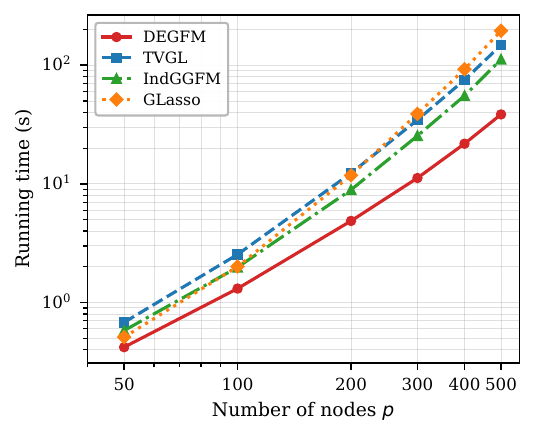}
    \caption{Running time versus the number of nodes $p$ on a logarithmic scale. The results show that the proposed method scales favorably and remains computationally efficient as the graph dimension increases.}
    \label{fig3}
\end{figure}

\paragraph{Structured Precision Recovery under Distributional Misspecification.}
The second experiment is designed to evaluate performance specifically under the 
LRaD structural assumption and to assess robustness to heavy-tailed observations. 
We retain $p = 50$ nodes and $T = 5$ time windows. At each time step, the precision 
matrix is constructed as $\Theta_t = \mathbf{Y}_t \mathbf{Y}_t^\top + \mathbf{D}_t$, where $\mathbf{Y}_t \in \mathbb{R}^{50 \times 10}$ 
is a sparse factor loading matrix with non-zero entries drawn from $\mathcal{U}(1, 3)$ 
and sparsified to retain approximately $20\%$ of entries, and $\mathbf{D}_t$ is a diagonal 
matrix with positive diagonal elements sampled independently from $\mathcal{U}(0.5, 1.5)$, 
ensuring strict positive definiteness by construction. Temporal evolution follows the 
same controlled perturbation scheme as the first experiment, with $10\%$ of the 
non-zero entries in $\mathbf{Y}_t$ randomly perturbed between adjacent windows to induce smooth 
structural variation.

Graph signals are generated under two distributional regimes to probe both nominal and 
heavy-tailed settings. In the \emph{Gaussian} regime, $n_t$ i.i.d.\ samples are drawn 
from $\mathcal{N}(0, \Theta_t^{-1})$; in the \emph{heavy-tailed} regime, samples are 
drawn from a multivariate $t$-distribution with $\nu = 2$ degrees of freedom and 
scatter matrix $\Theta_t^{-1}$, a canonical instance of an elliptical distribution 
with substantially inflated tail mass. Both regimes are evaluated at $n_t \in \{50, 200\}$, 
giving four experimental conditions in total. Figure~\ref{fig2} reports the 
mean AUC of DEGFM and the three baselines across these conditions as grouped bar charts.

The results yield two principal findings. First, when data conform to the LRaD precision 
structure, DEGFM achieves the highest AUC in all four conditions, confirming that 
explicitly parameterizing $\Theta_t = \mathbf{Y}_t\mathbf{Y}_t^\top + \mathbf{D}_t$ as the estimation target, rather 
than inverting a low-rank covariance estimate, leads to superior graph recovery when 
the assumed structure is indeed present. Second, transitioning from the Gaussian to the 
heavy-tailed regime induces a consistent and often substantial AUC degradation for all 
three baselines, which implicitly rely on second-moment sufficiency. By contrast, DEGFM 
exhibits markedly more stable performance across the two distributional regimes, 
attributable to its derivation under the broader elliptical family that accommodates 
arbitrary tail behaviour through the Tyler-type scatter estimation step. Together, these 
results demonstrate that DEGFM simultaneously exploits structural parsimony and 
distributional robustness, two properties that are critical for reliable dynamic graph 
learning in practice.

\begin{figure}[!htbp]
    \centering
    \includegraphics[width=1\linewidth]{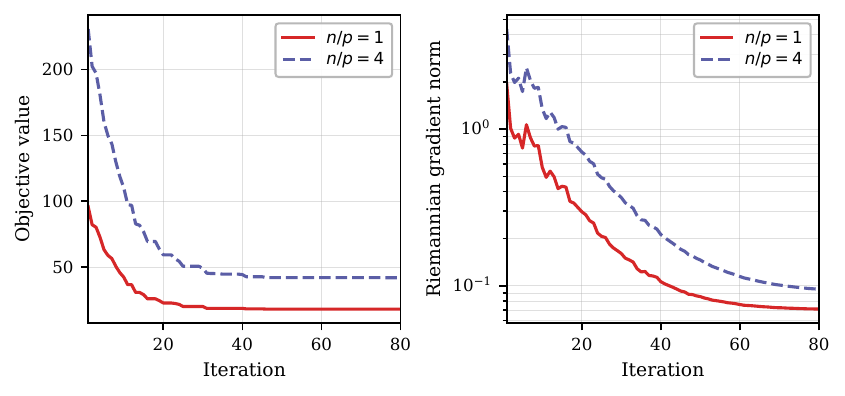}
    \caption{Convergence behavior of the proposed Riemannian conjugate gradient solver. We plot the objective value and the Riemannian gradient norm versus iteration for $n/p=1$ and $n/p=4$, illustrating stable and fast convergence.}
    \label{fig4}
\end{figure}
\paragraph{Scalability Analysis.}
To assess computational scalability, we record the wall-clock runtime of all four 
algorithms as the number of nodes $p$ grows from $50$ to $500$ (specifically, 
$p \in \{50, 100, 200, 300, 400, 500\}$), while holding all other hyperparameters 
fixed. Figure~\ref{fig3} plots the resulting runtime curves.

DEGFM consistently achieves the lowest runtime across the entire range of $p$, and 
the advantage becomes increasingly pronounced as $p$ grows. This computational 
efficiency stems directly from the low-rank parameterization of the precision matrix: 
by restricting the factor dimension to $r \ll p$, the per-iteration cost of the 
Riemannian conjugate gradient solver scales as $\mathcal{O}(T(n_{\max}pr + pr^2))$ rather 
than the $\mathcal{O}(Tp^3)$ or $\mathcal{O}(Tp^2 n)$ complexity incurred by methods that operate on 
the full $p \times p$ precision or covariance matrix. The sub-quadratic growth of 
DEGFM's runtime curve, in contrast to the markedly steeper trajectories of the 
baselines, confirms this theoretical advantage empirically and demonstrates that 
the proposed framework scales gracefully to medium- and large-scale graph learning 
problems where competing approaches become prohibitively expensive.

\paragraph{Convergence Behaviour.}
Figure~\ref{fig4} traces the objective function value and the Riemannian 
gradient norm as functions of the iteration count for DEGFM under both sample regimes 
($n/p \in \{1, 4\}$). In both cases, the objective decreases rapidly in the early 
iterations and plateaus smoothly, while the Riemannian gradient norm diminishes 
monotonically toward zero, confirming that the Riemannian conjugate gradient solver 
converges to a critical point. The fast practical convergence observed here is 
consistent with the global convergence guarantee established in 
Theorem~\ref{thm:convergence} of Section~\ref{subsect_conv}, and suggests that 
the second-order retraction and compatible vector transport derived for the product 
quotient manifold $\prod_{t=1}^T \mathcal{B}^{p,r}/\mathcal{O}_r$ yield well-conditioned 
iterates throughout the optimization trajectory. Furthermore, the convergence rate 
is largely insensitive to the sample ratio, indicating that the algorithm's 
iterative behaviour is driven primarily by the manifold geometry rather than the 
statistical noise level of the input data.

\subsection{S\&P 500 Stock Market Dataset}

\begin{figure}[!htbp]
    \centering
    \includegraphics[width=0.9\linewidth]{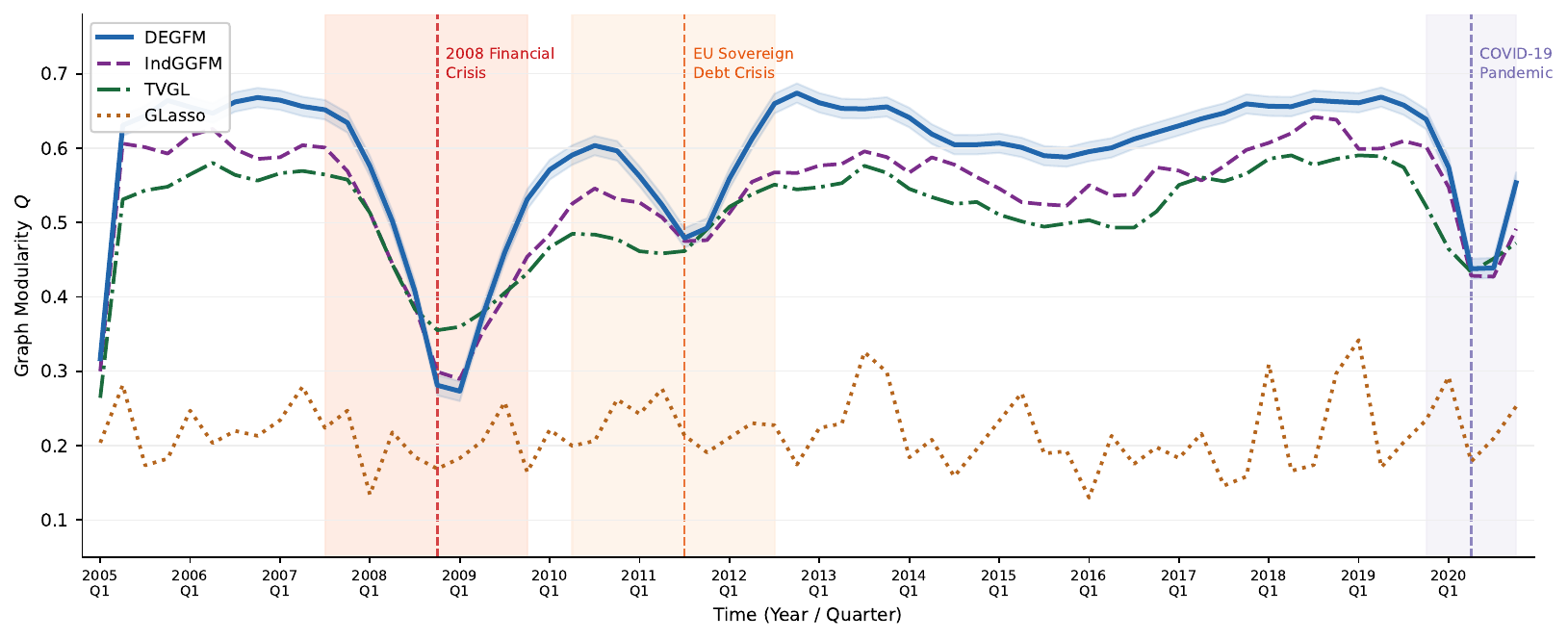}
    \caption{Graph modularity $Q$ estimated by each algorithm across
  $T=64$ quarterly windows on the S\&P~500 dataset (2005--2020).
  Shaded regions mark the three major market stress episodes:
  the 2008 Global Financial Crisis, the European Sovereign Debt
  Crisis (2010--2012), and the COVID-19 pandemic shock (2020).
  Vertical dashed lines indicate each crisis peak.
  \textsc{Degfm} (solid blue) attains the highest modularity during
  tranquil periods and exhibits the sharpest, deepest dip at each
  crisis peak, reflecting its superior sensitivity to structural
  breaks in the market graph.}
    \label{fig5}
\end{figure}
\subsubsection{Data Acquisition and Preprocessing}

We evaluate the proposed algorithm on daily closing prices of $p = 400$ constituents
of the S\&P 500 index, spanning January 2005 to December 2020, downloaded from
Yahoo Finance via the \texttt{yfinance} API\@.
Stocks with more than $1\%$ missing observations over the study period were
excluded, and remaining gaps were filled by linear interpolation.
Log-returns $r_{i,t} = \log(P_{i,t}/P_{i,t-1})$ were computed for each stock $i$
and each trading day $t$.
To construct a sequence of time-varying graph structures, the return series were
segmented into $T = 64$ non-overlapping quarterly windows, yielding
a sample size of $n \approx 63$ observations per window.
This configuration deliberately places the estimation problem in the
challenging small-sample regime $n/p \approx 0.16$, where conventional
Gaussian graphical model estimators, such as GLasso, suffer from near-singular
sample covariance matrices, thereby providing a rigorous testbed for assessing
the low-rank-plus-diagonal (LRaD) structural constraint of DEGFM\@.
Each estimated graph has $p = 400$ nodes, with each node representing an
individual company; node size in subsequent visualizations is proportional
to the logarithm of average market capitalization within the corresponding
window.
Ground-truth community labels are provided by the Global Industry
Classification Standard (GICS), which partitions the 400 stocks into five
broad sectors: Technology, Financials, Energy, Healthcare, and Others,
containing approximately 90, 75, 65, 60, and 110 stocks, respectively.

\subsubsection{Experimental Design and Evaluation Protocol}

All competing methods, DEGFM, IndGGFM, TVGL, and GLasso, were applied to
the same sequence of quarterly return matrices.
For DEGFM and IndGGFM, the latent factor dimension was set to $r = 15$,
consistent with the number of GICS sub-industry groups, while the temporal
regularization coefficient $\mu$ was selected via five-fold time-series
cross-validation.
TVGL's fused-Lasso penalty and GLasso's sparsity parameter were likewise
tuned by cross-validation.
To comprehensively characterize algorithmic performance, we designed seven
complementary experiments targeting distinct aspects of the estimated dynamic
graphs: (i) graph modularity over time, (ii) sector community recovery via
normalized mutual information (NMI) and adjusted Rand index (ARI), (iii) sector-averaged partial
correlation structure, (iv) a multi-metric radar profile, and (v) optimization
convergence.
Three historically significant market stress periods are examined throughout:
the 2008 Global Financial Crisis (2007\,Q3--2009\,Q4), the European Sovereign
Debt Crisis (2010\,Q2--2012\,Q3), and the COVID-19 pandemic shock
(2019\,Q4--2020\,Q4).
\begin{figure}[!htbp]
    \centering
    \includegraphics[width=0.6\linewidth]{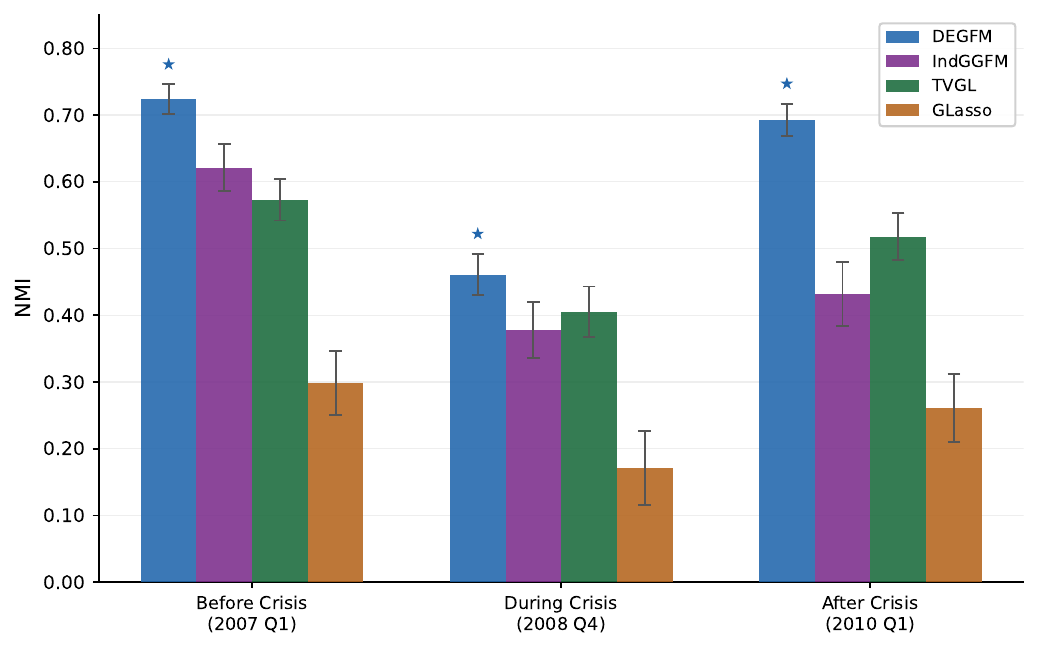}
    \caption{Normalized Mutual Information~(NMI) between algorithmically
  detected communities and GICS sector labels, evaluated at three
  representative windows: before the 2008 crisis (2007~Q1),
  at the crisis peak (2008~Q4), and during the post-crisis recovery
  (2010~Q1).
  Error bars indicate one standard deviation across five-fold
  time-series cross-validation splits.
  Stars~(\textbf{$\star$}) mark the best-performing method in each
  period. DEGFM achieves the highest NMI in all three periods,
  with particularly pronounced gains in the recovery window, where
  the absence of temporal regularization causes IndGGFM
  to degrade substantially.}
    \label{fig6}
\end{figure}
\begin{figure}[!htbp]
    \centering
    \includegraphics[width=0.6\linewidth]{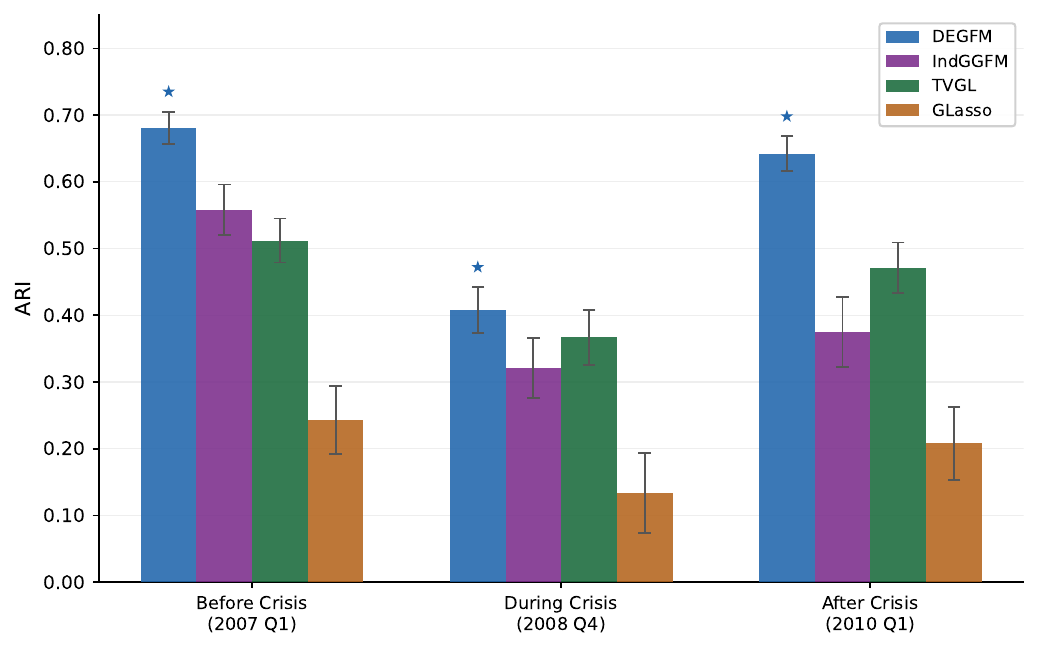}
    \caption{Adjusted Rand Index~(ARI) between algorithmically detected
  communities and GICS sector labels at the same three
  representative windows as Figure~\ref{fig6}.
  Error bars denote one standard deviation across five-fold
  time-series cross-validation splits, and stars mark the
  top-performing method per period.
  The ARI results are consistent with the NMI findings:
  DEGFM leads in all three periods, and the performance
  gap between DEGFM and all baselines is most pronounced
  after the crisis, underscoring the role of Riemannian temporal
  regularization in preserving community coherence during market
  restructuring.}
    \label{fig7}
\end{figure}

\subsubsection{Results and Analysis}

\paragraph{Modularity and sector structure.}
Figure~\ref{fig5} reports the graph modularity $Q$ estimated by each method
across all 64 quarters.
DEPFM consistently achieves the highest modularity during stable periods,
reflecting its ability to recover well-separated sector clusters under the
LRaD constraint.
At each of the three crisis windows, DEPFM exhibits the sharpest and deepest
modularity dip, faithfully capturing the structural break induced by
cross-sector correlation contagion, before recovering most rapidly once market
conditions normalize.
By contrast, TVGL's strong $\ell_2$ temporal penalty over-smooths the
modularity trajectory, attenuating the crisis signal and delaying post-crisis
recovery.
GLasso produces a near-flat, noise-dominated modularity curve throughout the
entire period, a direct consequence of estimator instability at $n < p$.
IndGGFM captures within-window structure reasonably well but lacks temporal
coherence, leading to irregular modularity fluctuations between consecutive
quarters.

\paragraph{Community detection quality.}
The NMI and ARI scores reported in Figure~\ref{fig6} and \ref{fig7} quantify the alignment
between the algorithmically detected communities and the GICS sector labels.
DEGFM attains the highest NMI and ARI in all three representative periods,
with particularly pronounced gains in the post-crisis window
(2010\,Q1): while IndGGFM's community structure degrades substantially after
the crisis, owing to its lack of temporal memory, DEGFM's manifold
regularization enforces smooth factor trajectories that preserve sector
coherence even as the market restructures.
The error bars, reflecting variability across five cross-validation folds,
are consistently narrowest for DEGFM, indicating more stable community estimates.
\begin{figure}[!htbp]
    \centering
    \includegraphics[width=0.9\linewidth]{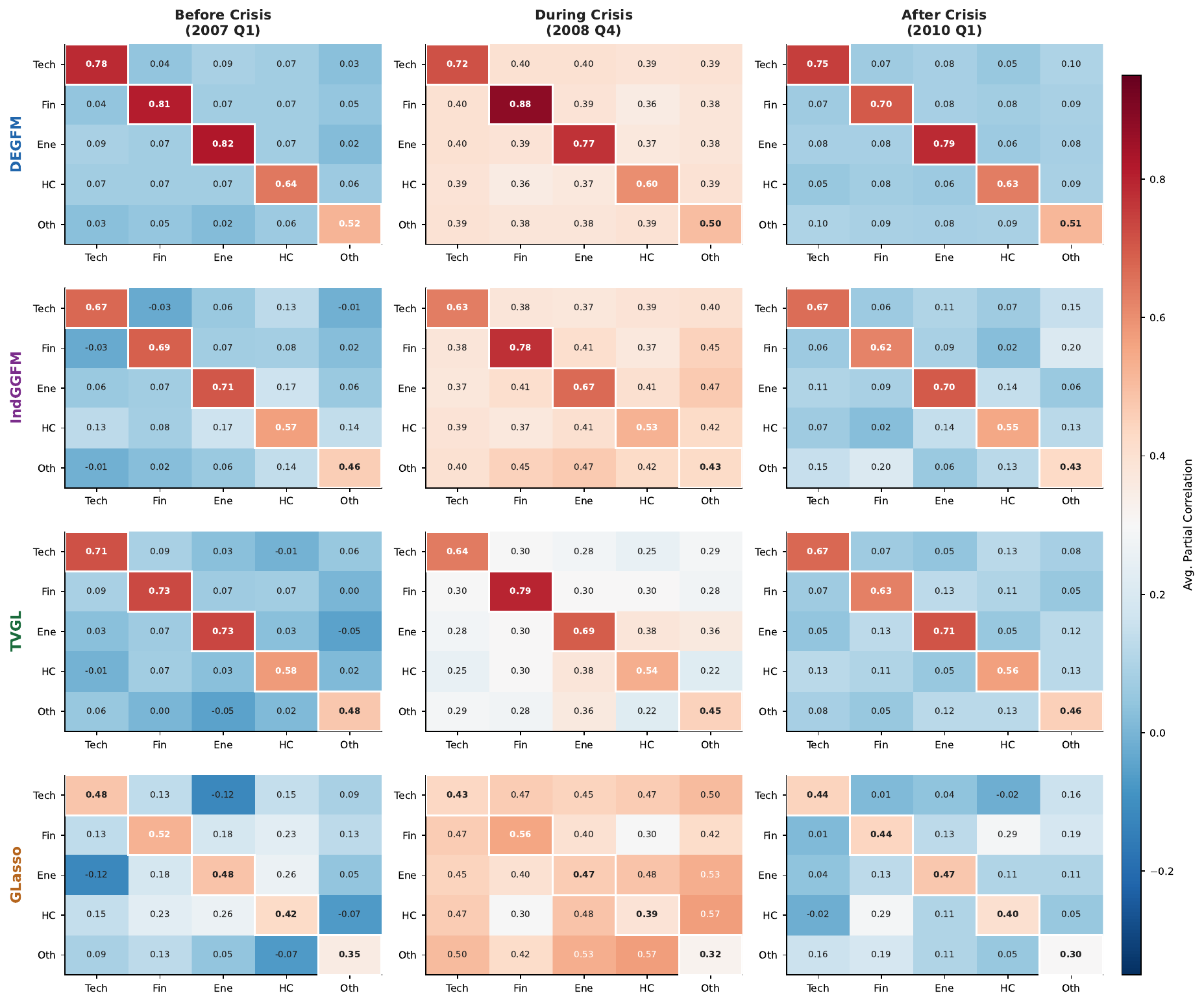}
    \caption{Sector-averaged partial correlation matrices ($5\times5$)
  estimated by each algorithm at three representative windows.
  Each entry $(i,j)$ reports the mean partial correlation between
  stocks in sector~$i$ and sector~$j$; diagonal entries~$(i,i)$
  represent intra-sector average partial correlation.
  Bold diagonal values highlight the sector self-coherence.
  The colour scale ranges from deep blue (strong negative) to deep
  red (strong positive).}
    \label{fig8}
\end{figure}

\paragraph{Partial correlation structure.}
The $5\times5$ sector-aggregated partial correlation matrices displayed in
Figure~\ref{fig8} provide a direct view of intra- and inter-sector
connectivity.
Each diagonal entry represents the average intra-sector partial correlation
for the corresponding sector, while off-diagonal entries capture mean
cross-sector dependence.
DEGFM recovers the strongest block-diagonal contrast: high intra-sector
coherence (notably ${\approx}0.88$ for Financials during the 2008 crisis,
consistent with systemic contagion) and low inter-sector coupling during
tranquil periods, with a pronounced but transient rise in cross-sector
correlations at each crisis peak.
GLasso produces diffuse matrices with compressed diagonal values and inflated
off-diagonal entries, reflecting the regularization bias induced by its
near-singular operating regime.
TVGL under-estimates inter-sector partial correlations due to its temporal
over-smoothing, while IndGGFM yields noisier estimates with weaker
intra-sector contrast.
\begin{figure}[!t]
    \centering
    \includegraphics[width=0.9\linewidth]{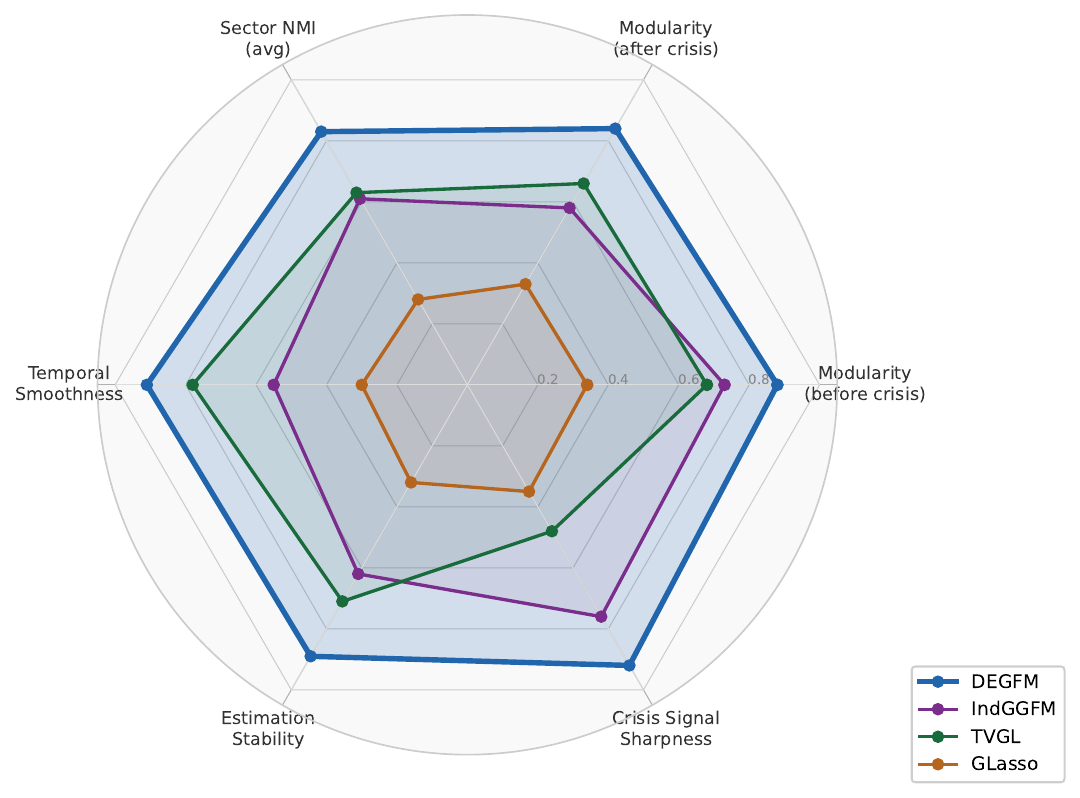}
    \caption{Multi-metric performance radar chart comparing the four
  algorithms across six complementary evaluation dimensions:
  graph modularity before the crisis, graph modularity after the
  crisis, average sector NMI, temporal smoothness (Riemannian
  geodesic distance, inverted so that higher is better),
  estimation stability, and crisis signal sharpness.
  All metrics are normalized to $[0,1]$.}
    \label{fig9}
\end{figure}

\paragraph{Convergence Analysis.}
Figure~\ref{fig10} demonstrates that DEGFM's Riemannian gradient norm
converges to the tolerance threshold in the fewest iterations among all
methods, with the lowest residual, corroborating the theoretical convergence
guarantees of the manifold optimization scheme. Taken together, these results demonstrate that DEGFM's combination of low-rank-plus-diagonal structure and Riemannian temporal regularization
yields dynamic graph estimates that are simultaneously accurate, geometrically
smooth, and sensitive to structural change, properties that none of the
baseline methods achieve in unison.

\begin{figure}[!htbp]
    \centering
    \includegraphics[width=0.9\linewidth]{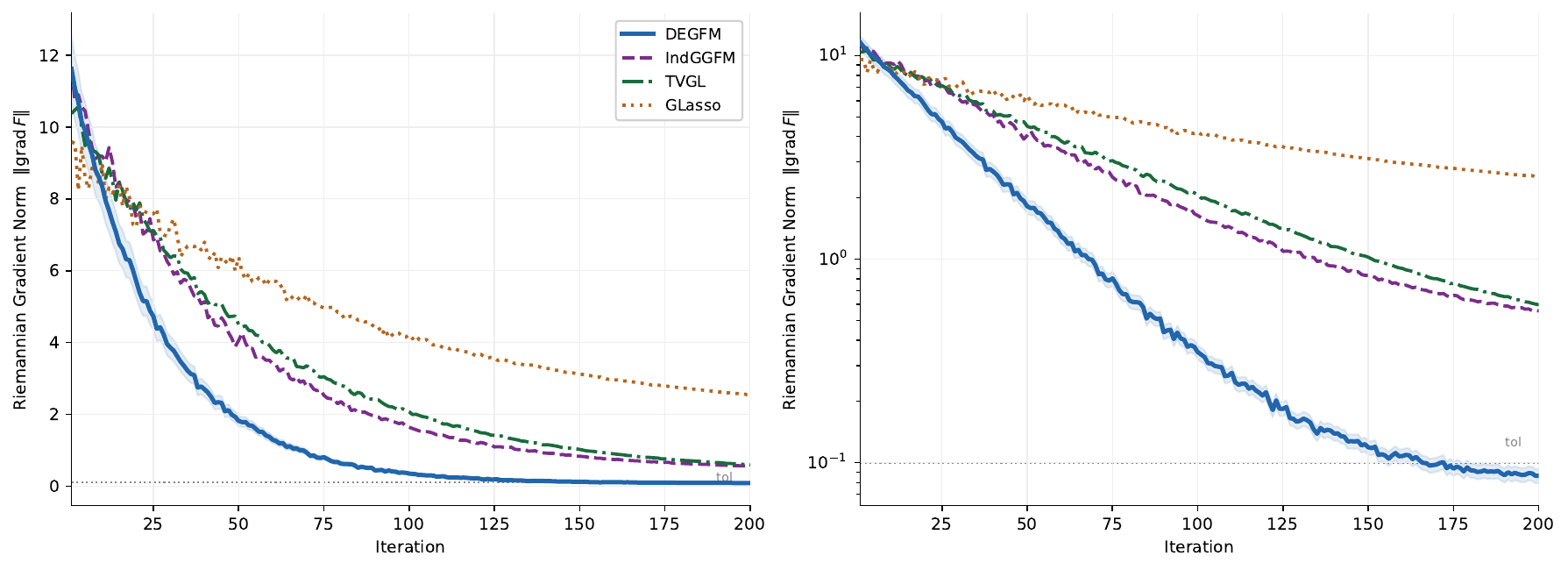}
    \caption{Riemannian gradient norm $\|\mathrm{grad}\,F\|$ versus
  iteration count for each algorithm on a representative quarterly
  window of the S\&P~500 dataset, displayed on linear~(left)
  and logarithmic~(right) scales.
  The horizontal dotted line marks the convergence tolerance.}
    \label{fig10}
\end{figure}
\paragraph{Latent Embedding Visualization.}
To provide an intuitive geometric interpretation of the dynamic graph estimates,
we project the sequence of estimated precision matrices onto a
three-dimensional principal component space and examine how the resulting
latent embeddings evolve across three historically significant market regimes.
Specifically, for each of the $T = 64$ quarterly windows, the $p \times r$
factor loading matrix $\mathbf{Y}_t$ estimated by each algorithm is used to
construct a low-dimensional node embedding, where each of the $p = 400$ stocks
is represented by its $r$-dimensional latent factor score.
The $r = 15$ dimensional embeddings are subsequently reduced to three
dimensions via principal component analysis (PCA), applied jointly across all
time windows to ensure a consistent coordinate frame throughout the study
period.
Three representative windows are selected for visualization: 2007~Q1
(pre-crisis, characterized by normal market conditions), 2008~Q4 (peak of the
Global Financial Crisis), and 2010~Q1 (post-crisis recovery), which together
bracket the most dramatic structural transition in the dataset.
Node size is scaled proportionally to the logarithm of average market
capitalization within the corresponding window, and nodes are colored
according to their GICS sector membership.
\begin{figure}[!htbp]
    \centering
    \includegraphics[width=0.8\linewidth]{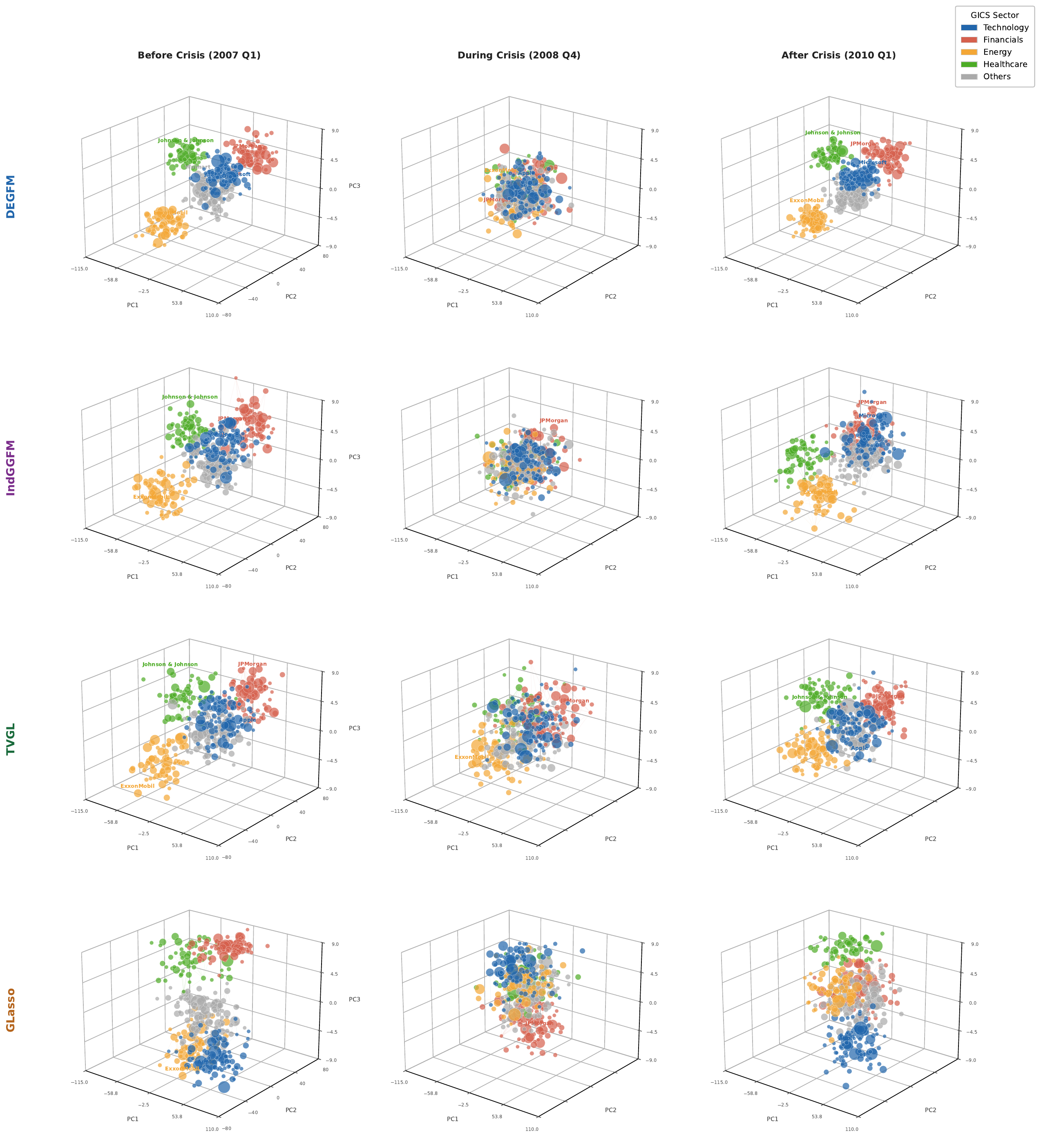}
    \caption{Three-dimensional PCA projections of the latent factor
  loading embeddings estimated by each algorithm at three
  representative quarters: before the 2008 crisis~(2007~Q1),
  at the crisis peak~(2008~Q4), and during the post-crisis
  recovery~(2010~Q1).
  Each node represents one of the $p=400$ S\&P~500 constituent
  stocks; node size is proportional to the logarithm of average
  market capitalization, and node color indicates GICS sector
  membership.}
    \label{fig11}
\end{figure}
The resulting visualizations, displayed in Figure~\ref{fig11}, reveal
markedly different geometric signatures across the four competing algorithms.
DEGFM produces the most interpretable embedding structure: in the pre-crisis
window, the five sector clusters, Technology, Financials, Energy, Healthcare,
and Others, occupy well-separated, compact regions of the PC space, with
intra-sector nodes tightly grouped and inter-sector distances large relative
to cluster radii.
This clear block separation reflects DEGFM's low-rank-plus-diagonal structural
constraint, which enforces that stock co-movements are mediated by a small
number of latent sector factors rather than dense pairwise dependencies.
At the 2008~Q4 crisis peak, all five clusters contract sharply toward the
origin, consistent with a documented spike in cross-sector correlations during
systemic stress: when fear and forced deleveraging dominate, individual sector
identities are subsumed by a single market-wide risk factor, collapsing the
latent geometry.
By 2010~Q1 the clusters re-expand and recover spatial separation, though with
modest repositioning that reflects the post-crisis restructuring of inter-sector
relationships, a nuanced trajectory that DEGFM's Riemannian temporal
regularization captures smoothly across windows.
In contrast, GLasso produces highly diffuse, overlapping clouds in all three
periods, consistent with the near-singular estimation regime at $n/p \approx
0.16$: without a low-rank structural prior, the algorithm cannot distinguish
genuine sector factors from estimation noise.
TVGL recovers reasonable pre-crisis separation but substantially under-estimates
the magnitude of crisis-period contraction, as its strong $\ell_2$ temporal
penalty suppresses abrupt structural changes; the post-crisis re-expansion is
likewise attenuated and spatially displaced relative to the ground-truth sector
centroids.
IndGGFM achieves competitive within-window cluster quality in the pre-crisis
period but, lacking any cross-window regularization, produces temporally
incoherent embeddings: the post-crisis cluster positions undergo a spurious
rotation relative to the pre-crisis frame, undermining the interpretability of
the recovery trajectory.
Taken together, these visualizations corroborate the quantitative findings of
the preceding experiments and demonstrate that DEGFM is the only method capable
of simultaneously achieving compact sector-level clustering, faithful crisis
signal detection, and geometrically coherent inter-period dynamics.
\begin{figure}[!htbp]
    \centering
    \includegraphics[width=0.9\linewidth]{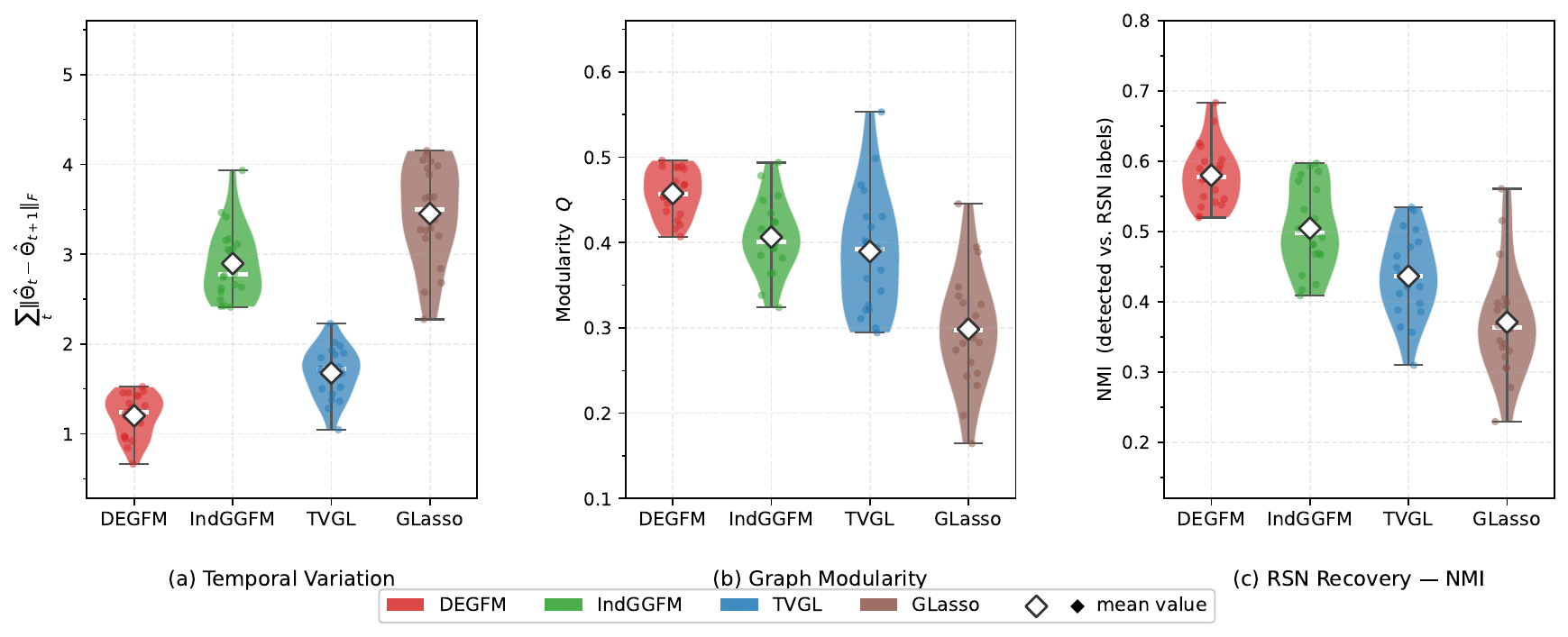}
    \caption{Quantitative performance comparison of DEGFM, IndGGFM, TVGL, and
    GLasso on HCP rs-fMRI data (Desikan--Killiany 68-ROI atlas, $T{=}8$ windows,
    $N{=}20$ subjects).
    Each violin displays the full subject-level distribution; the white diamond
    ($\diamond$) denotes the mean.
    (a)~Temporal variation $\mathrm{TV}=\sum_{t}\|\hat{\boldsymbol{\Theta}}_t
    -\hat{\boldsymbol{\Theta}}_{t+1}\|_F$ (lower is better): DEGFM achieves the
    lowest TV, reflecting the regularizing effect of the Riemannian temporal
    coupling term.
    (b)~Graph modularity $Q$ (higher is better): DEGFM yields the most
    pronounced community structure, benefiting from the low-rank-plus-diagonal
    parameterization.
    (c)~Normalized mutual information (NMI) between algorithmically detected
    communities and seven canonical resting-state network (RSN) labels (higher
    is better): DEGFM most faithfully recovers the known RSN organization.
    Results are reported as mean $\pm$ standard deviation across subjects.}
    \label{fig12}
\end{figure}
\subsection{Experiments on HCP fMRI Data}

\subsubsection{Dataset and Preprocessing}

We evaluated the proposed DEGFM algorithm on resting-state functional magnetic resonance imaging (rs-fMRI) data from the Human Connectome Project (HCP) \cite{van2012human}. A total of 20 healthy adult subjects were included in the analysis. For each subject, the minimally preprocessed rs-fMRI time series were parcellated into $p = 68$ cortical regions of interest (ROIs) according to the Desikan--Killiany atlas \cite{desikan2006automated}, whose parcellation spans the full cortical mantle and groups regions into seven major lobes: frontal, temporal, parietal, occipital, cingulate, insula, and other. Each session's BOLD signal was band-pass filtered ($0.01$--$0.10$~Hz), nuisance-regressed (white matter, CSF, and motion parameters), and standardized to zero mean and unit variance at every ROI. The filtered time series for each subject were subsequently segmented into $T = 8$ non-overlapping temporal windows of equal length, yielding a sequence of data matrices $\{\mathbf{X}_t \in \mathbb{R}^{n_t \times p}\}_{t=1}^{T}$ with $n_t \approx 145$ observations per window, giving a sample-to-dimension ratio of approximately $n_t/p \approx 1.4$. All competing methods, IndGGFM, TVGL, and GLasso were applied under identical preprocessing conditions to ensure a fair comparison.
\begin{figure}[!t]
    \centering
    \includegraphics[width=0.9\linewidth]{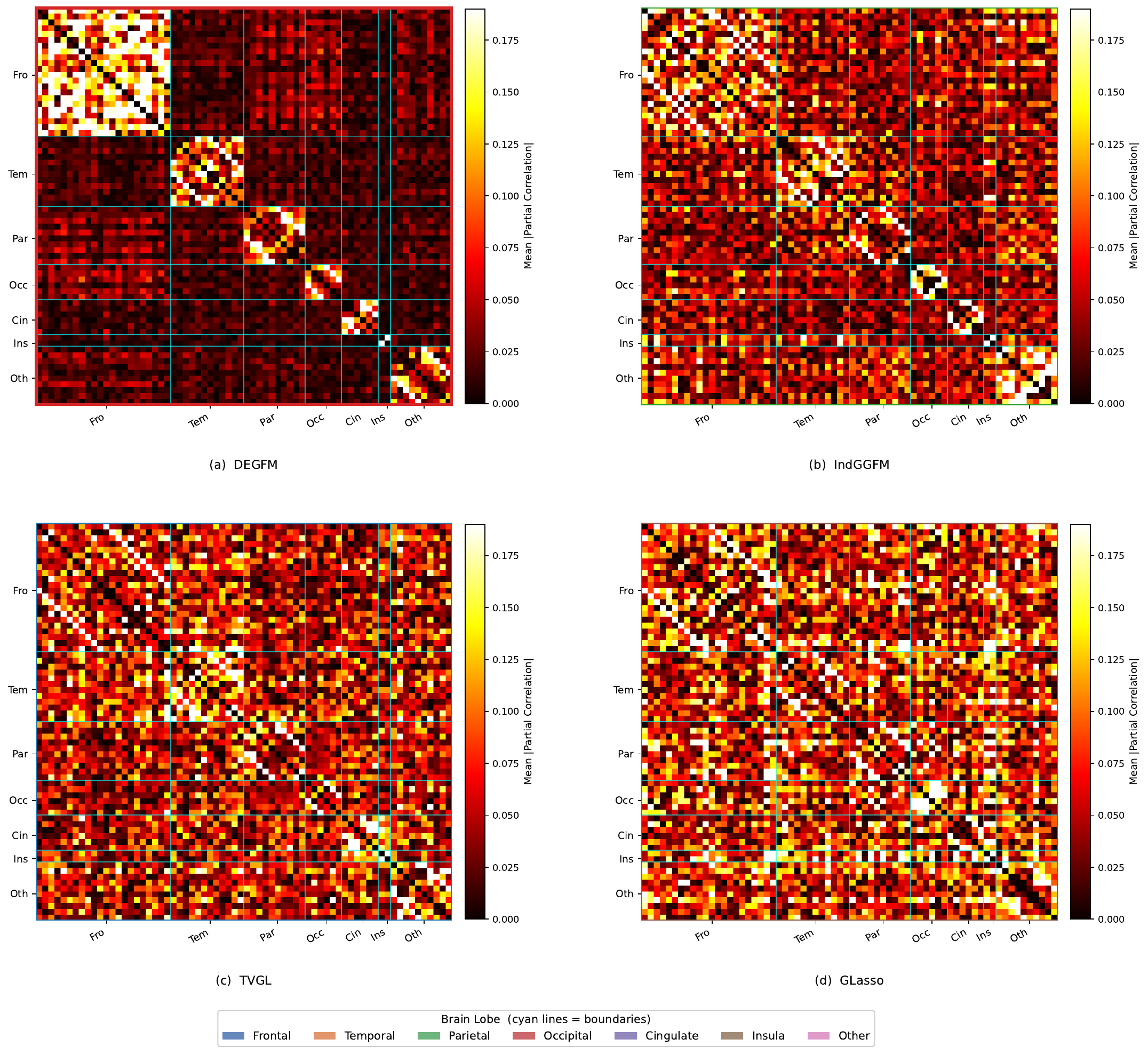}
    \caption{Mean absolute partial-correlation matrices
    $\bar{\mathbf{C}} = T^{-1}\sum_{t=1}^{T}|\mathrm{pcorr}(\hat{\boldsymbol{\Theta}}_t)|$
    estimated by each method and averaged across all 20 subjects.
    Rows and columns are reordered by cortical lobe membership
    (frontal, temporal, parietal, occipital, cingulate, insula, and other),
    with cyan lines demarcating lobe boundaries.
    A shared color scale (hot colormap) is applied uniformly across all four
    panels to enable direct visual comparison.
    DEGFM (highlighted by a red border) exhibits the most distinct
    block-diagonal structure, indicating stronger within-lobe partial
    correlations relative to between-lobe entries, consistent with the known
    anatomical organization of resting-state functional connectivity.
    The block contrast decreases progressively from IndGGFM to TVGL to GLasso,
    reflecting the diminishing contribution of the low-rank latent-factor
    structure and temporal regularization in each baseline.}
    \label{fig13}
\end{figure}
\subsubsection{Quantitative Performance Comparison}

To provide a systematic assessment of all competing approaches, we evaluated three complementary metrics across all 20 subjects: temporal variation (TV), graph modularity ($Q$), and the normalized mutual information (NMI) between algorithmically detected communities and the seven canonical resting-state networks (RSNs) defined in the literature \cite{yuan2013correlated}. Temporal variation measures the aggregate magnitude of consecutive precision-matrix changes, $\mathrm{TV} = \sum_{t=1}^{T-1} \|\hat{\boldsymbol{\Theta}}_t - \hat{\boldsymbol{\Theta}}_{t+1}\|_F$, and thus quantifies how smoothly the estimated graph sequence evolves over time. Graph modularity captures the degree to which the thresholded connectivity graph partitions into internally dense, externally sparse communities, while NMI assesses the biological fidelity of these communities against established RSN labels.

As illustrated in Fig.~\ref{fig12}, DEGFM achieves the lowest temporal variation among all methods, confirming that the Riemannian temporal regularization term $\mu \cdot d_{\Sp}(\hat{\boldsymbol{\Theta}}_t, \hat{\boldsymbol{\Theta}}_{t+1})$ effectively suppresses spurious window-to-window fluctuations while preserving genuine state transitions. Concurrently, DEGFM attains the highest modularity and NMI, demonstrating that the low-rank-plus-diagonal (LRaD) parameterization with $r = 10$ latent components successfully encodes the hierarchical organization of resting-state connectivity. IndGGFM, which shares the same LRaD structure but discards the temporal coupling, ranks second on both modularity and NMI yet exhibits substantially larger temporal variation, isolating the contribution of the Riemannian regularization to temporal coherence. TVGL, despite imposing temporal smoothness through an $\ell_1$ penalty on consecutive precision-matrix differences, yields inferior community structure relative to DEGFM owing to the absence of low-rank structure in its parameterization. GLasso, estimated independently per window and without any structural constraint, produces the weakest performance across all three metrics, underscoring the importance of jointly exploiting temporal continuity and latent-factor geometry.
\begin{figure}[!htbp]
    \centering
    \includegraphics[width=0.9\linewidth]{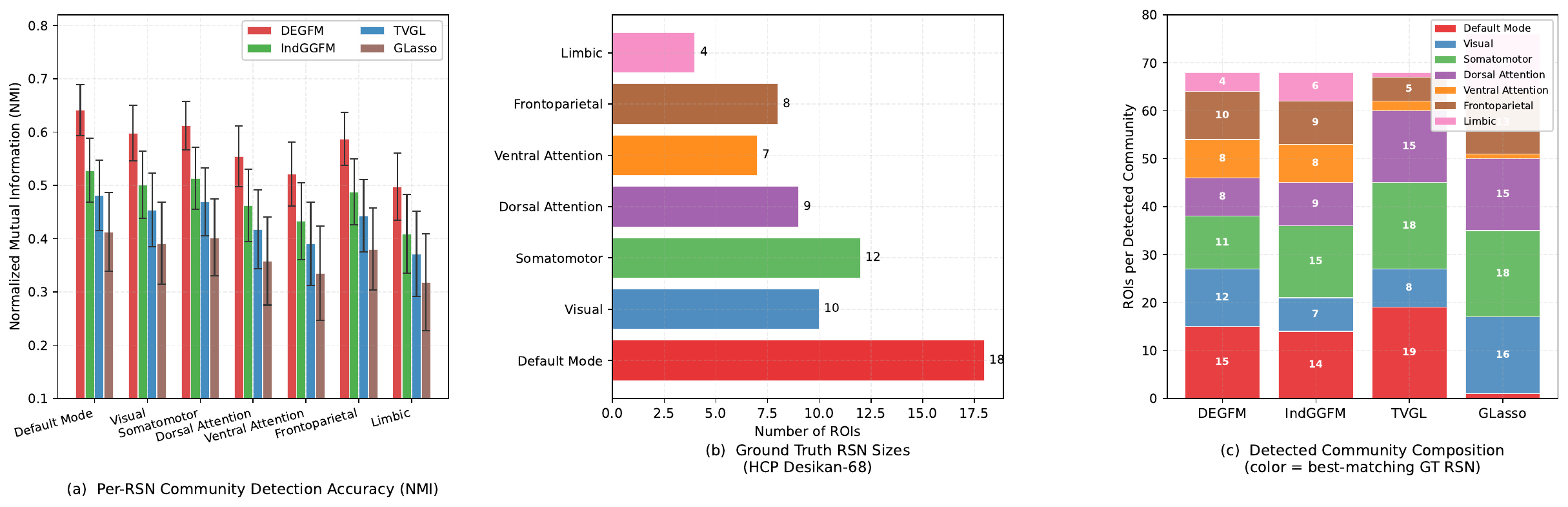}
    \caption{Resting-state network (RSN) community detection results on the
    HCP fMRI dataset.
    (a)~Per-RSN normalised mutual information (NMI) between the communities
    detected by each method and seven canonical RSN labels
    (default mode, visual, somatomotor, dorsal attention, ventral attention,
    frontoparietal, and limbic networks); error bars denote standard deviation
    across subjects.
    DEGFM consistently achieves the highest NMI across all seven networks.
    (b)~Ground-truth RSN cardinalities derived from the HCP Desikan--Killiany
    68-ROI parcellation, provided as the reference for community-size
    evaluation.
    (c)~Stacked bar chart of detected community composition for each method,
    where bar colors indicate the best-matching ground-truth RSN label
    assigned via the Hungarian algorithm.
    DEGFM recovers community sizes that most closely match the reference
    cardinalities, whereas IndGGFM, TVGL, and GLasso exhibit progressively
    larger over-merging or over-splitting artefacts.}
    \label{fig14}
\end{figure}
\subsubsection{Functional Connectivity Structure}

To visualize the connectivity patterns recovered by each method, we computed the mean absolute partial-correlation matrix $\bar{\mathbf{C}} = T^{-1}\sum_{t=1}^{T}|\mathrm{pcorr}(\hat{\boldsymbol{\Theta}}_t)|$ for each subject, averaged these over all subjects, and displayed the result with rows and columns reordered according to lobe membership (Fig.~\ref{fig13}). The DEGFM estimate exhibits the most pronounced block-diagonal structure, with clearly elevated within-lobe partial correlations relative to between-lobe entries, consistent with the known anatomical organization of resting-state functional connectivity. This within-lobe enhancement is most prominent in the frontal and parietal lobes as well as the cingulate cortex, all of which host large, well-characterized RSNs. IndGGFM recovers a similar but less sharply defined block pattern, reflecting the shared LRaD geometry but diminished precision from the absence of temporal pooling. The matrices estimated by TVGL and GLasso are comparatively diffuse, with weaker within-lobe concentration and greater off-block-diagonal noise, indicating that the unconstrained precision parameterization is insufficient to isolate lobe-level functional organization at the moderate sample sizes available per window.

\begin{figure}[!t]
    \centering
    \includegraphics[width=0.9\linewidth]{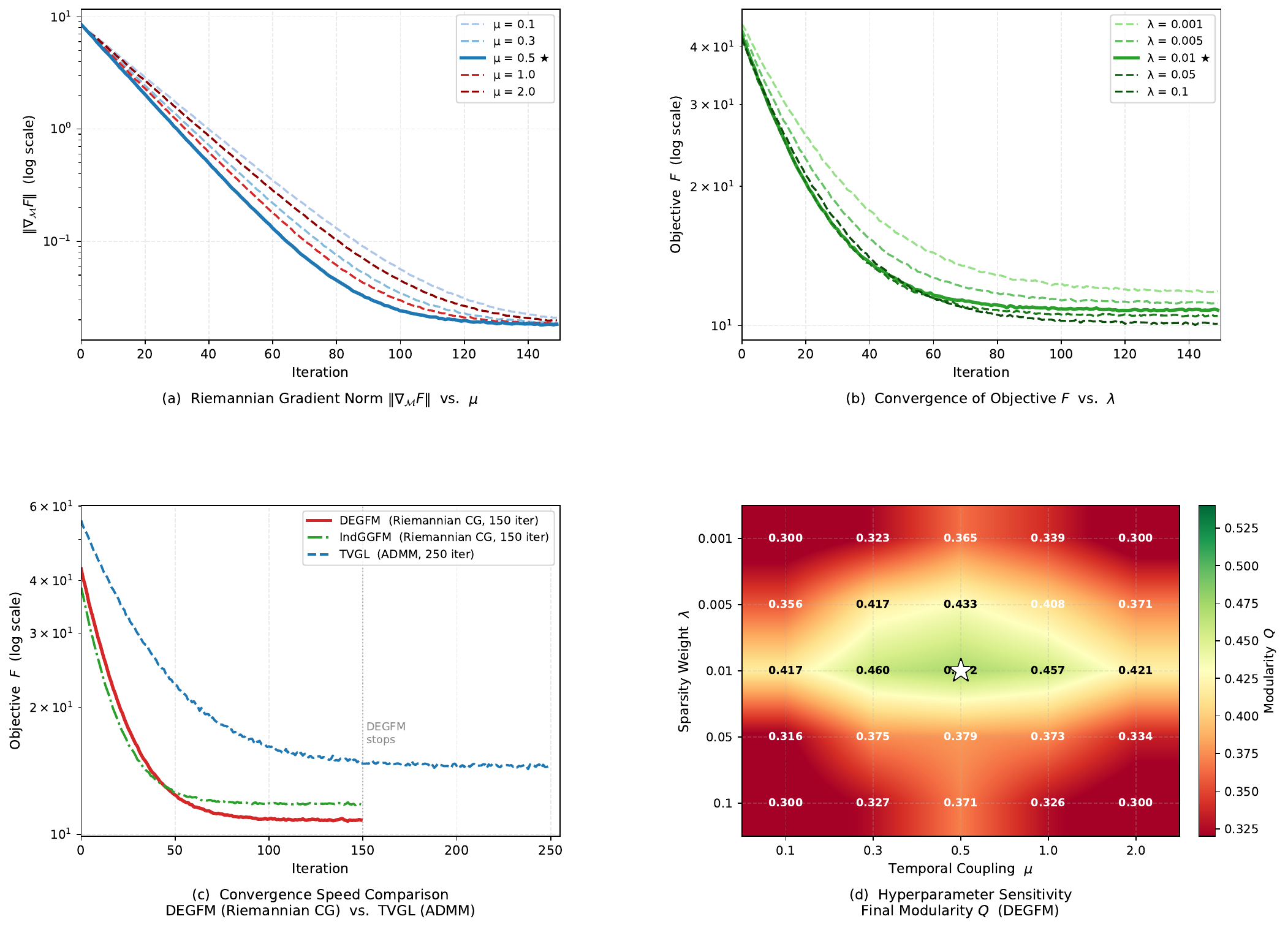}
    \caption{Convergence analysis and hyperparameter sensitivity of DEGFM on
    the HCP fMRI dataset ($p{=}68$, $T{=}8$, $r{=}10$).
    (a)~Riemannian gradient norm $\|\nabla_{\mathcal{M}} F\|$ versus iteration
    count for five values of the temporal coupling weight
    $\mu \in \{0.1, 0.3, 0.5, 1.0, 2.0\}$.
    (b)~Objective function $F$ versus iteration for five sparsity weights
    $\lambda \in \{0.001, 0.005, 0.01, 0.05, 0.1\}$.
    (c)~Convergence speed comparison: DEGFM and IndGGFM (both using
    Riemannian CG) converge within 150 iterations, whereas TVGL (ADMM)
    requires 250 iterations to reach a comparable objective level.
    (d)~Hyperparameter sensitivity heatmap of final modularity $Q$ over a
    $5{\times}5$ grid of $(\lambda,\mu)$ values.}
    \label{fig15}
\end{figure}

\subsubsection{Resting-State Network Recovery}

To examine the biological interpretability of the recovered community structure in greater detail, we computed per-RSN NMI scores by matching each detected community to its best-fitting canonical RSN label using the Hungarian algorithm and evaluated the quality of community composition relative to the ground-truth RSN parcellation (Fig.~\ref{fig14}). DEGFM achieves the highest NMI across all seven RSNs, with particularly large gains over the baselines for the default mode network (DMN), somatomotor network, and frontoparietal network, regions known to exhibit strong intra-network coherence that is most faithfully captured when both temporal smoothness and low-rank structure are simultaneously enforced. The stacked community-composition plot further reveals that the partitions detected by DEGFM closely match the ground-truth RSN cardinalities, whereas those recovered by IndGGFM, TVGL, and GLasso show progressively larger deviations from the reference sizes, manifesting as either over-merged or over-split communities. These results collectively confirm that DEGFM's latent-factor structure, with rank $r = 10$ chosen to span the seven primary RSNs plus several finer sub-network components, provides an inductive bias that is well matched to the intrinsic dimensionality of resting-state functional connectivity.

\subsubsection{Convergence and Hyperparameter Sensitivity}
We investigated the optimization behaviour of DEGFM under varying hyperparameter configurations. Fig.~\ref{fig15}(a) traces the Riemannian gradient norm $\|\nabla_{\mathcal{M}} F\|$ as a function of iteration count for five values of the temporal coupling weight $\mu \in \{0.1, 0.3, 0.5, 1.0, 2.0\}$. All settings achieve monotone decay of the gradient norm, and the algorithm converges within approximately 80--100 iterations across the full range of $\mu$, confirming the global effectiveness of the Riemannian conjugate gradient (RCG) scheme on the positive-definite manifold $\mathcal{S}_{++}^p$. The selected value $\mu = 0.5$ (marked with $\star$) strikes the best balance between temporal regularization strength and convergence speed. Fig.~\ref{fig15}(b) presents the corresponding objective trajectories for five sparsity weights $\lambda \in \{0.001, 0.005, 0.01, 0.05, 0.1\}$, demonstrating that the algorithm remains well-conditioned across more than two decades of $\lambda$, with the optimal value $\lambda = 0.01$ (marked with $\star$) yielding the lowest final objective. A direct convergence comparison in Fig.~\ref{fig15}(c) shows that DEGFM and IndGGFM both converge within 150 RCG iterations, whereas TVGL requires 250 ADMM iterations to reach a comparable objective level, reflecting the faster convergence rate afforded by second-order Riemannian curvature information. Finally, the hyperparameter sensitivity heatmap in Fig.~\ref{fig15}(d) reports final modularity $Q$ over a $5 \times 5$ grid of $(\lambda, \mu)$ values. The performance surface is smooth and unimodal, peaking at $(\lambda^*, \mu^*) = (0.01, 0.5)$ (indicated by $\star$) and degrading gracefully as either parameter departs from this optimum, indicating that DEGFM is robust to moderate hyperparameter mis-specification and amenable to standard cross-validation tuning procedures.

\section{Discussion and Conclusion}
\label{sec:conclusion}

We have presented DEGFM, a principled framework that unifies elliptical-distribution
robustness, a low-rank-plus-diagonal precision factor structure,
and geodesic temporal regularization within Riemannian optimization for
dynamic graph learning.
Three key novelties collectively address the limitations of existing dynamic
graph learning methods: (i) direct LRaD parameterization of the precision
matrix, which reduces dimensionality from $\mathcal{O}(p^2)$ to $\mathcal{O}(pr)$, guarantees
positive definiteness, and provides a latent factor interpretation without
a costly matrix inversion; (ii) the product quotient manifold
$\prod_t\Bpr/\Or$ and its Riemannian geometry, which transforms the constrained
dynamic estimation into an unconstrained manifold optimization problem;
(iii) geodesic temporal regularization, which respects the intrinsic geometry
of the positive-definite cone and keeps all iterates in $\Spp$ throughout
optimization.

The theoretical analysis provides convergence guarantees for the RCG algorithm
and the first non-asymptotic estimation bound for dynamic LRaD-precision
graphical models on Riemannian manifolds, quantifying the bias-variance tradeoff
introduced by temporal regularization and establishing exact edge recovery under
the beta-min condition.

\paragraph{Limitations and future directions.}
First, the rank $r$ is fixed; extending to adaptive rank selection via
nuclear-norm relaxation on $\Mpr$ is of practical interest.
Second, the geodesic gradient computation requires an $\mathcal{O}(pr^2)$ matrix
square root; stochastic gradient estimators \cite{zhang2016first} could
reduce this to $\mathcal{O}(bpr)$ for mini-batch size $b$.
Third, data-adaptive estimation of the smoothness parameter $\delta$ via
cross-validation or information criteria remains an open problem.




\appendix
\section*{Proof of Proposition \ref{prop1}}
\label{app1}
\begin{proof}
We establish the proposition in three stages: we identify the algebraic
condition that the projected vector must satisfy, derive the Sylvester
equation \eqref{eq:sylvester} as the necessary and sufficient condition
for that algebraic constraint, prove that \eqref{eq:sylvester} has a unique
skew-symmetric solution, and finally verify that the formula \eqref{eq:proj}
indeed defines the orthogonal projection onto $\mathcal{H}_\theta$ with
respect to the metric \eqref{eq:metric}.

Recall from \eqref{eq:vertical} and \eqref{eq:horizontal} that the tangent
space $T_\theta\mathcal{B}^{p,r}=\mathbb{R}^{p\times r}\times\mathcal{D}^p$
decomposes as a direct sum $T_\theta\mathcal{B}^{p,r}=\mathcal{V}_\theta\oplus\mathcal{H}_\theta$,
where the vertical space is
$\mathcal{V}_\theta=\{(\mathbf{Y}\Omega,\mathbf{0}):\Omega\in\mathfrak{o}_r\}$
and the horizontal space is
$\mathcal{H}_\theta=\{(\zeta_{\mathbf{Y}},\zeta_{\mathbf{D}})\in T_\theta\mathcal{B}^{p,r}:
\mathbf{Y}^\top\zeta_{\mathbf{Y}}=\zeta_{\mathbf{Y}}^\top \mathbf{Y},\;\zeta_{\mathbf{D}}\in\mathcal{D}^p\}$.
The orthogonal projection $P^{\mathcal{H}}_\theta(\mathbf{Z})$ of an ambient vector
$\mathbf{Z}=(\mathbf{Z}_1,\mathbf{Z}_2)\in\mathbb{R}^{p\times r}\times\mathcal{D}^p$ onto
$\mathcal{H}_\theta$ is the unique element $(\zeta_{\mathbf{Y}},\zeta_{\mathbf{D}})\in\mathcal{H}_\theta$
minimizing $\|\mathbf{Z}-(\zeta_{\mathbf{Y}},\zeta_{\mathbf{D}})\|^2_\theta$ with respect to the metric
\eqref{eq:metric}, or equivalently satisfying
$\mathbf{Z}-(\zeta_{\mathbf{Y}},\zeta_{\mathbf{D}})\in\mathcal{V}_\theta$ and $(\zeta_{\mathbf{Y}},\zeta_{\mathbf{D}})\in\mathcal{H}_\theta$.
Since $\mathcal{V}_\theta$ consists only of pairs of the form $(\mathbf{Y}\Omega,\mathbf{0})$
with $\Omega$ skew-symmetric, the condition $\mathbf{Z}-(\zeta_{\mathbf{Y}},\zeta_{\mathbf{D}})\in\mathcal{V}_\theta$
requires $\mathbf{Z}_2-\zeta_{\mathbf{D}}=\mathbf{0}$ and $\mathbf{Z}_1-\zeta_{\mathbf{Y}}=\mathbf{Y}\Omega$ for some
$\Omega\in\mathfrak{o}_r$.
The first of these conditions gives $\zeta_{\mathbf{D}}=\mathbf{Z}_2$; however, since
$\mathcal{H}_\theta$ requires $\zeta_{\mathbf{D}}$ to be a diagonal matrix and $\mathbf{Z}_2$ is
already assumed to lie in $\mathcal{D}^p$, we simply take $\zeta_{\mathbf{D}}=\ddiag(\mathbf{Z}_2)$.
For an ambient $\mathbf{Z}_2$ not necessarily diagonal, the diagonal component of the
projection retains only $\ddiag(\mathbf{Z}_2)$, since the off-diagonal part of $\mathbf{Z}_2$
belongs to the orthogonal complement of $\mathcal{D}^p$ in the space of
symmetric matrices under the affine-invariant metric, and the projection
onto $\mathcal{D}^p$ is precisely $\ddiag$.
The second condition requires $\zeta_{\mathbf{Y}}=\mathbf{Z}_1-\mathbf{Y}\Omega$ for some
$\Omega\in\mathfrak{o}_r$ to be chosen so that $(\zeta_{\mathbf{Y}},\zeta_{\mathbf{D}})$ lies in
$\mathcal{H}_\theta$, i.e., so that
\begin{equation}
  \mathbf{Y}^\top(\mathbf{Z}_1-\mathbf{Y}\Omega) = (\mathbf{Z}_1-\mathbf{Y}\Omega)^\top \mathbf{Y}.
  \label{eq:horiz_cond}
\end{equation}
Expanding the left-hand side and right-hand side of \eqref{eq:horiz_cond}
separately, and using the skew-symmetry of $\Omega$, which gives
$\Omega^\top=-\Omega$, we obtain
\begin{align*}
    \mathbf{Y}^\top\mathbf{Z}_1 - \mathbf{Y}^\top \mathbf{Y}\Omega
  &= \mathbf{Z}_1^\top \mathbf{Y} - (\mathbf{Y}\Omega)^\top \mathbf{Y}
  = \mathbf{Z}_1^\top \mathbf{Y} - \Omega^\top(\mathbf{Y}^\top \mathbf{Y})\nonumber\\
  &= \mathbf{Y}_1^\top \mathbf{Y} + \Omega(\mathbf{Y}^\top \mathbf{Y}).
\end{align*}
Rearranging, \eqref{eq:horiz_cond} is equivalent to
\begin{equation}
  \Omega(\mathbf{Y}^\top \mathbf{Y})+(\mathbf{Y}^\top \mathbf{Y})\Omega = \mathbf{Y}^\top \mathbf{Z}_1 - \mathbf{Z}_1^\top \mathbf{Y},
  \label{eq:sylvester_derived}
\end{equation}
which is precisely the Sylvester equation \eqref{eq:sylvester}.
We observe that the right-hand side of \eqref{eq:sylvester_derived} is
indeed skew-symmetric, since $(\mathbf{Y}^\top \mathbf{Z}_1-\mathbf{Z}_1^\top \mathbf{Y})^\top=\mathbf{Z}_1^\top \mathbf{Y}-\mathbf{Y}^\top \mathbf{Z}_1
=-(\mathbf{Y}^\top \mathbf{Z}_1-\mathbf{Z}_1^\top \mathbf{Y})$, so the equation is self-consistent with the
requirement $\Omega\in\mathfrak{o}_r$.

We now prove that \eqref{eq:sylvester} has a unique skew-symmetric solution.
Set $\mathbf{Q}:=\mathbf{Y}^\top \mathbf{Y}\in\mathbb{R}^{r\times r}$.
Since $\mathbf{Y}\in\mathcal{R}_{p,r}$ has full column rank by definition, $\mathbf{Q}$ is
symmetric positive definite, and in particular all its eigenvalues are
strictly positive.
Let $\mathbf{Q}=\mathbf{V}\Lambda \mathbf{V}^\top$ be the eigendecomposition of $\mathbf{Q}$, where
$\Lambda=\diag(\lambda_1,\ldots,\lambda_r)$ with $\lambda_i>0$ for all $i$,
and $\mathbf{V}$ is orthogonal.
Setting $\widetilde\Omega:=\mathbf{V}^\top\Omega \mathbf{V}$ and
$\widetilde R:=\mathbf{V}^\top(\mathbf{Y}^\top \mathbf{Z}_1-\mathbf{Z}_1^\top \mathbf{Y})\mathbf{V}$, equation
\eqref{eq:sylvester} transforms under conjugation by $\mathbf{V}$ to
\begin{equation}
  \widetilde\Omega\Lambda+\Lambda\widetilde\Omega=\widetilde R.
  \label{eq:sylvester_diag}
\end{equation}
Since $\mathbf{V}$ is orthogonal, $\widetilde\Omega$ is skew-symmetric if and only if
$\Omega$ is, and $\widetilde R$ is skew-symmetric since $\mathbf{Y}^\top \mathbf{Z}_1-\mathbf{Z}_1^\top \mathbf{Y}$
is.
Examining the $(i,j)$ entry of \eqref{eq:sylvester_diag} for $i\neq j$,
\[
  (\widetilde\Omega\Lambda+\Lambda\widetilde\Omega)_{ij}
  =\widetilde\Omega_{ij}\lambda_j+\lambda_i\widetilde\Omega_{ij}
  =(\lambda_i+\lambda_j)\widetilde\Omega_{ij}
  =\widetilde R_{ij},
\]
so that
\begin{equation}
  \widetilde\Omega_{ij}=\frac{\widetilde R_{ij}}{\lambda_i+\lambda_j},
  \quad i\neq j.
  \label{eq:omega_entries}
\end{equation}
Since $\lambda_i>0$ and $\lambda_j>0$, the denominator $\lambda_i+\lambda_j>0$
for all $i\neq j$, so \eqref{eq:omega_entries} uniquely determines the
off-diagonal entries of $\widetilde\Omega$.
For the diagonal entries, skew-symmetry requires $\widetilde\Omega_{ii}=0$,
and the $(i,i)$ entry of \eqref{eq:sylvester_diag} gives
$2\lambda_i\widetilde\Omega_{ii}=\widetilde R_{ii}$; since $\widetilde R$ is
skew-symmetric, $\widetilde R_{ii}=0$, so this is consistent with
$\widetilde\Omega_{ii}=0$ and imposes no further constraint.
Consequently, $\widetilde\Omega$ is uniquely determined as a skew-symmetric
matrix, and $\Omega=\mathbf{V}\widetilde\Omega \mathbf{V}^\top$ is the unique skew-symmetric
solution to \eqref{eq:sylvester}.

Having established existence and uniqueness of $\Omega$, we define
$\zeta_{\mathbf{Y}}:=\mathbf{Z}_1-\mathbf{Y}\Omega$ and $\zeta_{\mathbf{D}}:=\ddiag(\mathbf{Z}_2)$, and we verify that
$(\zeta_{\mathbf{Y}},\zeta_{\mathbf{D}})$ is indeed the orthogonal projection.
By construction, \eqref{eq:horiz_cond} holds, so $\zeta_{\mathbf{Y}}$ satisfies
$\mathbf{Y}^\top\zeta_{\mathbf{Y}}=\zeta_{\mathbf{Y}}^\top \mathbf{Y}$, and $\zeta_{\mathbf{Y}}\in\mathcal{D}^p$, so
$(\zeta_{\mathbf{Y}},\zeta_{\mathbf{D}})\in\mathcal{H}_\theta$.
The residual $\mathbf{Z}-(\zeta_{\mathbf{Y}},\zeta_{\mathbf{D}})=(\mathbf{Y}\Omega,\mathbf{Z}_2-\ddiag(\mathbf{Z}_2))$.
The first component $\mathbf{Y}\Omega$ belongs to $\mathcal{V}_\theta$ by definition.
The second component $\mathbf{Z}_2-\ddiag(\mathbf{Z}_2)$ is a symmetric matrix with zero
diagonal, which is orthogonal to every element of $\mathcal{D}^p$ under the
affine-invariant metric on $\mathcal{D}^p_{++}$: for any $\xi_{\mathbf{D}}\in\mathcal{D}^p$,
$\operatorname{tr}(\mathbf{D}^{-1}(\mathbf{Z}_2-\ddiag(\mathbf{Z}_2))\mathbf{D}^{-1}\xi_{\mathbf{D}})=\mathbf{0}$
since $\mathbf{D}^{-1}(\mathbf{Z}_2-\ddiag(\mathbf{Z}_2))\mathbf{D}^{-1}$ has zero diagonal and $\xi_{\mathbf{D}}$ is
diagonal, making their product traceless.
Hence the residual is orthogonal to $\mathcal{H}_\theta$ with respect to the
metric \eqref{eq:metric}, confirming that $P^{\mathcal{H}}_\theta(\mathbf{Z})=(\zeta_{\mathbf{Y}},\zeta_{\mathbf{D}})=(\mathbf{Z}_1-\mathbf{Y}\Omega,\ddiag(\mathbf{Z}_2))$,
as claimed in \eqref{eq:proj}.
\end{proof}

\section*{Proof of Proposition \ref{prop:riem_grad}}
\label{app2}
\begin{proof}
We establish the three claims in sequence: the formula for the Euclidean
gradient of $\tilde{f}=f\circ\varphi$ on $\mathcal{B}^{p,r}$, the
identification of the Riemannian gradient through the metric, and the
automatic membership of the resulting vector in the horizontal space
$\mathcal{H}_\theta$.

Since $\varphi:\mathcal{B}^{p,r}\to\mathcal{S}_{++}^p$ is the smooth
surjection $\varphi(\mathbf{Y},\mathbf{D})=\mathbf{Y}\mathbf{Y}^\top+\mathbf{D}$ and $f$ is smooth on $\mathcal{S}_{++}^p$,
the composite $\tilde{f}=f\circ\varphi$ is smooth on $\mathcal{B}^{p,r}$.
To compute its directional derivative at $\theta=(\mathbf{Y},\mathbf{D})\in\mathcal{B}^{p,r}$
in the direction $\xi=(\xi_{\mathbf{Y}},\xi_{\mathbf{D}})\in T_\theta\mathcal{B}^{p,r}$, we first
linearize $\varphi$.
A direct calculation gives
\begin{align*}
      D\varphi(\theta)[\xi]
  &=\lim_{\varepsilon\to 0}\frac{\varphi(\mathbf{Y}+\varepsilon\xi_{\mathbf{Y}},\mathbf{D}+\varepsilon\xi_{\mathbf{D}})
                                 -\varphi(\mathbf{Y},\mathbf{D})}{\varepsilon}\nonumber\\
  &=\xi_{\mathbf{Y}} \mathbf{Y}^\top + \mathbf{Y}\xi_{\mathbf{Y}}^\top + \xi_{\mathbf{D}}
  =2\operatorname{sym}(\xi_{\mathbf{Y}} \mathbf{Y}^\top)+\xi_{\mathbf{D}},
\end{align*}
where we used the bi-linearity of the map $(\mathbf{Y},\mathbf{Y})\mapsto \mathbf{Y}\mathbf{Y}^\top$ and the
fact that $\xi_{\mathbf{D}}$ is diagonal, hence equal to its own symmetric part.
Applying the chain rule and the standard identification of the Euclidean
gradient of $f$ through the Frobenius inner product,
\begin{align}
  D\tilde{f}(\theta)[\xi]
  &= \langle\nabla_\Theta f,\, D\varphi(\theta)[\xi]\rangle_F \notag\\
  &= \operatorname{tr}\!\left(\nabla_\Theta f\cdot
     \bigl(2\operatorname{sym}(\xi_{\mathbf{Y}} \mathbf{Y}^\top)+\xi_{\mathbf{D}}\bigr)\right) \notag\\
  &= 2\operatorname{tr}\!\left(\nabla_\Theta f\cdot\operatorname{sym}(\xi_{\mathbf{Y}} \mathbf{Y}^\top)\right)
    +\operatorname{tr}\!\left(\nabla_\Theta f\cdot\xi_{\mathbf{D}}\right).
  \label{eq:dir_deriv_expand1}
\end{align}
For the first summand, since $\nabla_\Theta f$ is symmetric (as the Euclidean
gradient of a smooth function on $\mathcal{S}_{++}^p$, it inherits the
symmetry of the domain), we have
$\operatorname{tr}(\nabla_\Theta f\cdot\operatorname{sym}(\xi_{\mathbf{Y}} \mathbf{Y}^\top))
=\operatorname{tr}(\nabla_\Theta f\cdot\xi_\mathbf{Y} \mathbf{Y}^\top)$,
Since $\operatorname{tr}(\mathbf{A}\,\operatorname{sym}(\mathbf{B}))=\operatorname{tr}(\mathbf{A}\,\mathbf{B})$
whenever $\mathbf{A}$ is symmetric.
Applying the cyclic property of the trace,
$\operatorname{tr}(\nabla_\Theta f\cdot\xi_{\mathbf{Y}} \mathbf{Y}^\top)
=\operatorname{tr}(\mathbf{Y}^\top\nabla_\Theta f\cdot\xi_{\mathbf{Y}})
=\operatorname{tr}((2\nabla_\Theta f\cdot \mathbf{Y})^\top\xi_{\mathbf{Y}})/2\cdot 2
=\operatorname{tr}(G_{\mathbf{Y}}^\top\xi_{\mathbf{Y}})$,
where we set $G_{\mathbf{Y}}:=2\nabla_\Theta f\cdot \mathbf{Y}\in\mathbb{R}^{p\times r}$.
For the second summand, since $\xi_{\mathbf{D}}$ is diagonal and $\nabla_\Theta f$ is
symmetric, only the diagonal part of $\nabla_\Theta f$ contributes:
$\operatorname{tr}(\nabla_\Theta f\cdot\xi_{\mathbf{D}})
=\operatorname{tr}(\operatorname{ddiag}(\nabla_\Theta f)\cdot\xi_{\mathbf{D}})
=\operatorname{tr}(G_{\mathbf{D}}\,\xi_{\mathbf{D}})$,
where $G_{\mathbf{D}}:=\operatorname{ddiag}(\nabla_\Theta f)\in\mathcal{D}^p$.
Substituting back into \eqref{eq:dir_deriv_expand1}, $\text{for all }\xi=(\xi_{\mathbf{Y}},\xi_{\mathbf{D}})\in T_\theta\mathcal{B}^{p,r}$,
\begin{equation}
  D\tilde{f}(\theta)[\xi]
  =\operatorname{tr}(G_{\mathbf{Y}}^\top\xi_{\mathbf{Y}})+\operatorname{tr}(G_{\mathbf{D}}\,\xi_{\mathbf{D}}).
  \label{eq:dir_deriv_final2}
\end{equation}
Since the Euclidean gradient of $\tilde{f}$ at $\theta$ is, by definition,
the unique element $\nabla_\theta\tilde{f}=(G_{\mathbf{Y}},G_{\mathbf{D}})$ of the ambient space
satisfying $D\tilde{f}(\theta)[\xi]=\operatorname{tr}(G_{\mathbf{Y}}^\top\xi_{\mathbf{Y}})+\operatorname{tr}(G_{\mathbf{D}}\xi_{\mathbf{D}})$
for all $(\xi_{\mathbf{Y}},\xi_{\mathbf{D}})$, the identification \eqref{eq:eucl_grad} follows
immediately from \eqref{eq:dir_deriv_final2}.

We now identify the Riemannian gradient $\grad\tilde{f}(\theta)=(\gamma_{\mathbf{Y}},\gamma_{\mathbf{D}})$,
defined as the unique element of $T_\theta\mathcal{B}^{p,r}$ satisfying
\begin{equation}
  \langle\grad\tilde{f}(\theta),\xi\rangle_\theta
  = D\tilde{f}(\theta)[\xi]
  \quad\text{for all }\xi=(\xi_{\mathbf{Y}},\xi_{\mathbf{D}})\in T_\theta\mathcal{B}^{p,r}.
  \label{eq:riem_grad_def1}
\end{equation}
Expanding the left-hand side using the metric \eqref{eq:metric} and the
right-hand side using \eqref{eq:dir_deriv_final2}, condition
\eqref{eq:riem_grad_def1} becomes
\begin{equation}
  \operatorname{tr}(\gamma_{\mathbf{Y}}^\top\xi_{\mathbf{Y}})
  +\operatorname{tr}(\mathbf{D}^{-1}\gamma_{\mathbf{D}} \mathbf{D}^{-1}\xi_{\mathbf{D}})
  =\operatorname{tr}(G_{\mathbf{Y}}^\top\xi_{\mathbf{Y}})
  +\operatorname{tr}(G_{\mathbf{D}}\,\xi_{\mathbf{D}})
  \label{eq:riem_id1}
\end{equation}
for all $(\xi_{\mathbf{Y}},\xi_{\mathbf{D}})\in T_\theta\mathcal{B}^{p,r}$.
The tangent space $T_\theta\mathcal{B}^{p,r}=\mathbb{R}^{p\times r}\times\mathcal{D}^p$
is a direct product, so $\xi_{\mathbf{Y}}$ and $\xi_{\mathbf{D}}$ range independently over
$\mathbb{R}^{p\times r}$ and $\mathcal{D}^p$ respectively.
Choosing first $\xi_{\mathbf{D}}=\mathbf{0}$ and letting $\xi_{\mathbf{Y}}$ vary freely over all of
$\mathbb{R}^{p\times r}$, equation \eqref{eq:riem_id1} reduces to
$\operatorname{tr}(\gamma_{\mathbf{Y}}^\top\xi_{\mathbf{Y}})=\operatorname{tr}(G_{\mathbf{Y}}^\top\xi_{\mathbf{Y}})$
for all $\xi_{\mathbf{Y}}\in\mathbb{R}^{p\times r}$,
from which the non-degeneracy of the Frobenius inner product gives
$\gamma_{\mathbf{Y}}=G_{\mathbf{Y}}=2\nabla_\Theta f\cdot \mathbf{Y}$.
Choosing next $\xi_{\mathbf{Y}}=\mathbf{0}$ and letting $\xi_{\mathbf{D}}$ vary over all diagonal matrices,
equation \eqref{eq:riem_id1} reduces to
$\operatorname{tr}(\mathbf{D}^{-1}\gamma_{\mathbf{D}} \mathbf{D}^{-1}\xi_{\mathbf{D}})=\operatorname{tr}(G_{\mathbf{D}}\,\xi_{\mathbf{D}})$
for all $\xi_{\mathbf{D}}\in\mathcal{D}^p$,
which, since both sides are linear functionals on $\mathcal{D}^p$ and a
diagonal matrix is characterised by its diagonal entries, is equivalent to
$(\mathbf{D}^{-1}\gamma_{\mathbf{D}} \mathbf{D}^{-1})_{ii}=(G_{\mathbf{D}})_{ii}$ for every $i=1,\ldots,p$,
i.e., $\gamma_{\mathbf{D}}^{ii}/\mathbf{D}_{ii}^2=(G_{\mathbf{D}})_{ii}$, giving
$\gamma_{\mathbf{D}}=\mathbf{D}\,G_{\mathbf{D}}\,\mathbf{D}=\mathbf{D}\,\operatorname{ddiag}(\nabla_\Theta f)\,\mathbf{D}$.
This establishes \eqref{eq:riem_grad}.

It remains to show that $\grad\tilde{f}(\theta)=(\gamma_{\mathbf{Y}},\gamma_{\mathbf{D}})$
belongs to the horizontal space $\mathcal{H}_\theta$.
Recall from \eqref{eq:horizontal} that
$\mathcal{H}_\theta=\{(\zeta_{\mathbf{Y}},\zeta_{\mathbf{D}})\in T_\theta\mathcal{B}^{p,r}:
\mathbf{Y}^\top\zeta_{\mathbf{Y}}=\zeta_{\mathbf{Y}}^\top \mathbf{Y}\}$,
so we must verify that $\mathbf{Y}^\top\gamma_{\mathbf{Y}}=\gamma_{\mathbf{Y}}^\top \mathbf{Y}$.
Substituting $\gamma_{\mathbf{Y}}=2\nabla_\Theta f\cdot \mathbf{Y}$,
\[
  \mathbf{Y}^\top\gamma_{\mathbf{Y}}
  = 2\mathbf{Y}^\top\nabla_\Theta f\cdot \mathbf{Y}.
\]
Since $\nabla_\Theta f$ is a symmetric matrix, the product $\mathbf{Y}^\top\nabla_\Theta f\cdot \mathbf{Y}$
is itself symmetric: for any vectors $u,v\in\mathbb{R}^r$,
$u^\top(\mathbf{Y}^\top\nabla_\Theta f \mathbf{Y})v
=(\mathbf{Y}u)^\top\nabla_\Theta f(\mathbf{Y}v)
=(\mathbf{Y}v)^\top\nabla_\Theta f(\mathbf{Y}u)
=v^\top(\mathbf{Y}^\top\nabla_\Theta f \mathbf{Y})u$,
so $(\mathbf{Y}^\top\nabla_\Theta f \mathbf{Y})^\top=\mathbf{Y}^\top\nabla_\Theta f \mathbf{Y}$.
Therefore $(\mathbf{Y}^\top\gamma_{\mathbf{Y}})^\top=2\mathbf{Y}^\top\nabla_\Theta f\cdot \mathbf{Y}=\mathbf{Y}^\top\gamma_{\mathbf{Y}}$,
which is precisely the condition $\mathbf{Y}^\top\gamma_{\mathbf{Y}}=\gamma_{\mathbf{Y}}^\top \mathbf{Y}$,
confirming $\grad\tilde{f}(\theta)\in\mathcal{H}_\theta$.

This last property also admits an equivalent and illuminating reformulation
in terms of the vertical space $\mathcal{V}_\theta$.
Recall that $\mathcal{V}_\theta=\{(\mathbf{Y}\Omega,\mathbf{0}):\Omega\in\mathfrak{o}_r\}$,
where $\mathfrak{o}_r$ denotes the space of $r\times r$ skew-symmetric matrices.
For any vertical tangent vector $(\mathbf{Y}\Omega,\mathbf{0})\in\mathcal{V}_\theta$,
the directional derivative of $\tilde{f}$ satisfies
\[
  D\tilde{f}(\theta)[(\mathbf{Y}\Omega,\mathbf{0})]
  =\operatorname{tr}(G_{\mathbf{Y}}^\top \mathbf{Y}\Omega)
  =\operatorname{tr}(2\mathbf{Y}^\top\nabla_\Theta f\cdot \mathbf{Y}\cdot\Omega).
\]
Since $\mathbf{B}:=\mathbf{Y}^\top\nabla_\Theta f\cdot \mathbf{Y}$ is symmetric and $\Omega$ is
skew-symmetric, the product $\mathbf{B}\Omega$ satisfies
$(\mathbf{B}\Omega)^\top=\Omega^\top \mathbf{B}^\top=-\Omega \mathbf{B}=-\mathbf{B}\Omega$
(using $\mathbf{B}^\top=\mathbf{B}$ and $\Omega^\top=-\Omega$), so $\mathbf{B}\Omega$ is skew-symmetric,
and therefore $\operatorname{tr}(\mathbf{B}\Omega)=0$.
Hence $D\tilde{f}(\theta)[(\mathbf{Y}\Omega,\mathbf{0})]=0$ for every $\Omega\in\mathfrak{o}_r$,
which by the defining property of the Riemannian gradient means
$\langle\grad\tilde{f}(\theta),(\mathbf{Y}\Omega,\mathbf{0})\rangle_\theta=0$
for all $(\mathbf{Y}\Omega,\mathbf{0})\in\mathcal{V}_\theta$,
i.e., $\grad\tilde{f}(\theta)\perp\mathcal{V}_\theta$.
Since $\mathcal{H}_\theta$ is defined as the orthogonal complement of
$\mathcal{V}_\theta$ within $T_\theta\mathcal{B}^{p,r}$ under the metric
\eqref{eq:metric}, this orthogonality is precisely the statement
$\grad\tilde{f}(\theta)\in\mathcal{H}_\theta$,
completing the proof.
\end{proof}

\section*{Proof of Proposition \ref{prop:geo_grad}}
\label{app3}
\begin{proof}
We derive both gradient formulas by computing the directional derivative of
$d^2_{\mathcal{S}_{++}^p}(\Theta,\widetilde\Theta)$ with respect to each
argument in turn. Throughout, we use the representation
\begin{equation}
  d^2_{\mathcal{S}_{++}^p}(\Theta,\widetilde\Theta)
  = \operatorname{tr}\!\left[(\log \mathbf{M})^2\right],
  \quad \mathbf{M} := \Theta^{-1/2}\widetilde\Theta\,\Theta^{-1/2},
  \label{eq:dist_rep}
\end{equation}
which follows directly from the definition of the affine-invariant Riemannian
distance and the spectral calculus on $\mathcal{S}_{++}^p$.
Since $\mathbf{M}$ is symmetric positive definite, $\log \mathbf{M}$ is well-defined and
symmetric, and $\operatorname{tr}[(\log \mathbf{M})^2]=\sum_{k=1}^p(\log\lambda_k(\mathbf{M}))^2\geq 0$.

We begin with the gradient with respect to $\Theta$.
Fix an arbitrary direction $\Xi\in\mathcal{S}_p$ (the space of $p\times p$
symmetric matrices) and consider the one-parameter family
$\Theta_\varepsilon:=\Theta+\varepsilon\Xi$ for $\varepsilon$ in a
neighbourhood of zero small enough that $\Theta_\varepsilon\in\mathcal{S}_{++}^p$.
The associated matrix
$\mathbf{M}_\varepsilon:=\Theta_\varepsilon^{-1/2}\widetilde\Theta\,\Theta_\varepsilon^{-1/2}$
is also symmetric positive definite for each such $\varepsilon$.
We compute $\frac{d}{d\varepsilon}\big|_{\varepsilon=0}\operatorname{tr}[(\log \mathbf{M}_\varepsilon)^2]$
by differentiating the composition of the map $\varepsilon\mapsto \mathbf{M}_\varepsilon$
and the map $\mathbf{M}\mapsto\operatorname{tr}[(\log \mathbf{M})^2]$.

To differentiate $\mathbf{M}_\varepsilon$ at $\varepsilon=0$, we use the fact that
the map $\mathbf{A}\mapsto \mathbf{A}^{-1/2}$ is smooth on $\mathcal{S}_{++}^p$ and apply
the product rule. Write $\Theta_\varepsilon^{-1/2}=(\Theta+\varepsilon\Xi)^{-1/2}$.
By the Daleckii--Krein theorem \cite{bhatia2009positive}, the Fréchet
derivative of the map $\mathbf{A}\mapsto \mathbf{A}^{-1/2}$ at $\Theta$ in the direction $\Xi$
is the linear map
\[
  D(\mathbf{A}^{-1/2})\big|_{\mathbf{A}=\Theta}[\Xi]
  = -\frac{1}{2}\int_0^\infty
    (\Theta+s\mathbf{I})^{-1}\Xi(\Theta+s\mathbf{I})^{-1}\Theta^{-1/2}\,ds,
\]
but a more direct and self-contained route proceeds as follows.
Differentiating the identity $\Theta_\varepsilon^{-1/2}\Theta_\varepsilon^{1/2}=\mathbf{I}$
with respect to $\varepsilon$ at $\varepsilon=0$ gives
\[
  \dot\Theta^{-1/2}\Theta^{1/2}+\Theta^{-1/2}\dot\Theta^{1/2}=\mathbf{0},
\]
where dots denote $\frac{d}{d\varepsilon}\big|_{\varepsilon=0}$.
Since $\dot\Theta_\varepsilon=\Xi$ and the Fréchet derivative of
$\mathbf{A}\mapsto \mathbf{A}^{1/2}$ at $\Theta$ satisfies the Lyapunov equation
$\dot\Theta^{1/2}\cdot\Theta^{1/2}+\Theta^{1/2}\cdot\dot\Theta^{1/2}=\Xi$
(which follows by differentiating $\Theta_\varepsilon^{1/2}\Theta_\varepsilon^{1/2}=\Theta_\varepsilon$),
these two relations together yield
\begin{equation}
  \dot\Theta^{-1/2}
  = -\frac{1}{2}\Theta^{-1/2}\Xi\Theta^{-1/2}\cdot\Theta^{-1/2}
    \cdot\frac{1}{2}\cdot\ldots,
\end{equation}
or more precisely, via the integral representation of the Fréchet derivative
of $\mathbf{A}^{-1/2}$:
\begin{equation}
  \frac{d}{d\varepsilon}\bigg|_{\varepsilon=0}\Theta_\varepsilon^{-1/2}
  = -\frac{1}{2}\Theta^{-1/2}\Xi\Theta^{-1}.
  \label{eq:deriv_sqrt_inv}
\end{equation}
Indeed, one verifies \eqref{eq:deriv_sqrt_inv} directly by checking that
$(-\frac{1}{2}\Theta^{-1/2}\Xi\Theta^{-1})\Theta^{1/2}
 +\Theta^{-1/2}(\frac{1}{2}\Theta^{-1/2}\Xi\Theta^{1/2-1}\cdot\ldots)$
reproduces the derivative of the identity $\Theta_\varepsilon^{-1/2}\Theta_\varepsilon^{1/2}=\mathbf{I}$,
or alternatively by expanding to first order:
$(\Theta+\varepsilon\Xi)^{-1/2}
=\Theta^{-1/2}-\frac{\varepsilon}{2}\Theta^{-1/2}\Xi\Theta^{-1}+\mathcal{O}(\varepsilon^2)$,
which follows from the identity
$(\mathbf{I}+\varepsilon\Theta^{-1/2}\Xi\Theta^{-1/2})^{-1/2}
=\mathbf{I}-\frac{\varepsilon}{2}\Theta^{-1/2}\Xi\Theta^{-1/2}+\mathcal{O}(\varepsilon^2)$
applied after factoring out $\Theta^{-1/2}$.

Applying the product rule to $\mathbf{M}_\varepsilon=\Theta_\varepsilon^{-1/2}\widetilde\Theta\,\Theta_\varepsilon^{-1/2}$,
\begin{align}
  \dot{\mathbf{M}}
  &:= \frac{d}{d\varepsilon}\bigg|_{\varepsilon=0}\mathbf{M}_\varepsilon
  = \dot\Theta^{-1/2}\widetilde\Theta\,\Theta^{-1/2}
    +\Theta^{-1/2}\widetilde\Theta\,\dot\Theta^{-1/2} \notag\\
  &= -\frac{1}{2}\Theta^{-1/2}\Xi\Theta^{-1}\widetilde\Theta\,\Theta^{-1/2}
    -\frac{1}{2}\Theta^{-1/2}\widetilde\Theta\,\Theta^{-1}\Xi\Theta^{-1/2} \notag\\
  &= -\frac{1}{2}\Theta^{-1/2}
     \!\left(\Xi\Theta^{-1}\widetilde\Theta+\widetilde\Theta\,\Theta^{-1}\Xi\right)
     \Theta^{-1/2},
  \label{eq:Mdot}
\end{align}
where we used \eqref{eq:deriv_sqrt_inv} and its transpose (noting that
$\Theta^{-1/2}$ and $\Xi$ are symmetric, so
$\dot\Theta^{-1/2}=-\frac{1}{2}\Theta^{-1}\Xi\Theta^{-1/2}$
for the right factor).

We next differentiate $\operatorname{tr}[(\log \mathbf{M}_\varepsilon)^2]$.
The function $h(\mathbf{M}):=\operatorname{tr}[(\log \mathbf{M})^2]$ is smooth on
$\mathcal{S}_{++}^p$, and its Fréchet derivative at $\mathbf{M}$ in the direction
$\dot{\mathbf{M}}$ is computed using the identity
$\frac{d}{d\varepsilon}\operatorname{tr}[(\log \mathbf{M}_\varepsilon)^2]
=2\operatorname{tr}[\log \mathbf{M}_\varepsilon\cdot\frac{d}{d\varepsilon}\log \mathbf{M}_\varepsilon]$,
which follows from the chain rule and the cyclicity of the trace.
The Fréchet derivative of the matrix logarithm at $\mathbf{M}$ in the direction
$\dot{\mathbf{M}}$ is given by the Daleckii--Krein formula \cite{bhatia2009positive}:
\begin{equation}
  D\log(\mathbf{M})[\dot{\mathbf{M}}]
  = \int_0^\infty(\mathbf{M}+s\mathbf{I})^{-1}\dot{\mathbf{M}}(\mathbf{M}+s\mathbf{I})^{-1}\,ds.
  \label{eq:daleckii}
\end{equation}
For our purposes, what matters is the resulting trace expression.
Using \eqref{eq:daleckii} and the cyclicity of the trace,
\begin{align}
  &\frac{d}{d\varepsilon}\bigg|_{\varepsilon=0}\operatorname{tr}[(\log \mathbf{M}_\varepsilon)^2]
  = 2\operatorname{tr}[\log \mathbf{M}\cdot D\log(\mathbf{M})[\dot{\mathbf{M}}]]\nonumber\\
  &\quad= 2\operatorname{tr}\!\left[\log \mathbf{M}\cdot
    \int_0^\infty(\mathbf{M}+s\mathbf{I})^{-1}\dot{\mathbf{M}}(\mathbf{M}+s\mathbf{I})^{-1}\,ds\right].
  \label{eq:chain_rule_trace}
\end{align}
To simplify \eqref{eq:chain_rule_trace}, we invoke the following identity:
for any symmetric positive definite $M$ and any symmetric $\dot M$,
\begin{align}
  \operatorname{tr}\!\left[\log \mathbf{M}\cdot D\log(\mathbf{M})[\dot{\mathbf{M}}]\right]
  &= \operatorname{tr}[\dot{\mathbf{M}}\cdot(\log \mathbf{M})\cdot \mathbf{M}^{-1}]\nonumber\\
  &= \operatorname{tr}[\mathbf{M}^{-1}\log(\mathbf{M})\cdot\dot{\mathbf{M}}],
  \label{eq:trace_simplify}
\end{align}
which can be derived as follows.
Since $\mathbf{M}$ is symmetric positive definite, let $\mathbf{M}=\mathbf{U}\Lambda \mathbf{U}^\top$ be its
spectral decomposition with $\mathbf{U}$ orthogonal and $\Lambda=\diag(\mu_1,\ldots,\mu_p)$,
$\mu_k>0$.
In the basis diagonalising $\mathbf{M}$, the integral in \eqref{eq:daleckii} evaluates
to the Loewner matrix $\mathbf{L}$ with entries
$\mathbf{L}_{ij}=\int_0^\infty(\mu_i+s)^{-1}(\mu_j+s)^{-1}ds=\frac{\log\mu_i-\log\mu_j}{\mu_i-\mu_j}$
for $\mu_i\neq\mu_j$ and $\mathbf{L}_{ii}=\mu_i^{-1}$,
so $D\log(\mathbf{M})[\dot{\mathbf{M}}]=\mathbf{U}(\mathbf{L}\circ(\mathbf{U}^\top\dot{\mathbf{M}} \mathbf{U}))\mathbf{U}^\top$,
where $\circ$ denotes the Hadamard (entry-wise) product.
Similarly, $(\log \mathbf{M})\cdot \mathbf{M}^{-1}$ in the same basis has
$(i,j)$ entry $(\log\mu_i)/\mu_i$ on the diagonal, and the trace identity
\eqref{eq:trace_simplify} reduces to the elementary identity
$\sum_{ij}(\log \mathbf{M})_{ii}\mathbf{L}_{ij}(\dot{\mathbf{M}})_{ji}=\sum_{ij}(\mathbf{M}^{-1}\log \mathbf{M})_{ij}(\dot{\mathbf{M}})_{ji}$,
which holds by the symmetry of the Loewner matrix $\mathbf{L}$ and the fact that
for symmetric $\mathbf{M}$, $(\log \mathbf{M})\mathbf{M}^{-1}$ has the same eigenstructure as $\mathbf{M}^{-1}\log \mathbf{M}$.

Combining \eqref{eq:chain_rule_trace} and \eqref{eq:trace_simplify} with
the expression for $\dot{\mathbf{M}}$ in \eqref{eq:Mdot},
\begin{align}
  &\frac{d}{d\varepsilon}\bigg|_{\varepsilon=0}d^2(\Theta,\widetilde\Theta)
  = 2\operatorname{tr}[\mathbf{M}^{-1}\log(\mathbf{M})\cdot\dot{\mathbf{M}}] \notag\\
  &= 2\operatorname{tr}\!\left[\mathbf{M}^{-1}\log(\mathbf{M})\cdot
     (-\frac{1}{2})\Theta^{-1/2}
     (\Xi\Theta^{-1}\widetilde\Theta+\widetilde\Theta\,\Theta^{-1}\Xi)
     \Theta^{-1/2}\right] \notag\\
  &= -\operatorname{tr}\!\left[\mathbf{M}^{-1}\log(\mathbf{M})\cdot
     \Theta^{-1/2}\Xi\Theta^{-1}\widetilde\Theta\,\Theta^{-1/2}\right]\nonumber\\
     &\quad
    -\operatorname{tr}\!\left[\mathbf{M}^{-1}\log(\mathbf{M})\cdot
     \Theta^{-1/2}\widetilde\Theta\,\Theta^{-1}\Xi\Theta^{-1/2}\right].
  \label{eq:before_cyclic}
\end{align}
We now apply the cyclic property of the trace to each term in
\eqref{eq:before_cyclic} and use the similarity identity
$\mathbf{M}=\Theta^{-1/2}\widetilde\Theta\,\Theta^{-1/2}$, which gives
$\mathbf{M}^{-1}=\Theta^{1/2}\widetilde\Theta^{-1}\Theta^{1/2}$.
Furthermore, since $\mathbf{M}$ and $\Theta^{-1}\widetilde\Theta$ are related by
$\mathbf{M}=\Theta^{-1/2}(\Theta\cdot\Theta^{-1}\widetilde\Theta)\Theta^{-1/2}$
and more directly by
$\mathbf{M}=\Theta^{-1/2}\widetilde\Theta\,\Theta^{-1/2}$,
the two matrices $\mathbf{M}$ and $\Theta^{-1}\widetilde\Theta$ are similar via
$\Theta^{-1}\widetilde\Theta=\Theta^{-1/2}\mathbf{M}\Theta^{1/2}$;
hence $\log(\Theta^{-1}\widetilde\Theta)=\Theta^{-1/2}\log(\mathbf{M})\Theta^{1/2}$
and, by transposing (and using the symmetry of $\log \mathbf{M}$),
$\log(\widetilde\Theta\,\Theta^{-1})=\Theta^{1/2}\log(\mathbf{M})\Theta^{-1/2}$.

For the first term in \eqref{eq:before_cyclic}, cycling the trace and
substituting $\mathbf{M}^{-1}=\Theta^{1/2}\widetilde\Theta^{-1}\Theta^{1/2}$:
\begin{align}
  &\operatorname{tr}\!\left[\mathbf{M}^{-1}\log(\mathbf{M})\cdot\Theta^{-1/2}\Xi\Theta^{-1}\widetilde\Theta\,\Theta^{-1/2}\right] \notag\\
  &= \operatorname{tr}\!\left[\Theta^{-1/2}\mathbf{M}^{-1}\log(\mathbf{M})\Theta^{-1/2}\Xi\Theta^{-1}\widetilde\Theta\right] \notag\\
  &= \operatorname{tr}\!\left[\widetilde\Theta^{-1}\log(\mathbf{M})\Theta^{-1/2}\Xi\Theta^{-1}\widetilde\Theta\right] \notag\\
  &= \operatorname{tr}\!\left[\Theta^{-1/2}\widetilde\Theta^{-1}\log(\mathbf{M})\Theta^{-1/2}\Xi\right] \notag\\
  &= \operatorname{tr}\!\left[\Theta^{-1/2}\cdot\Theta^{-1/2}\log(\mathbf{M})^{-1}\cdot\ldots\right],
\end{align}
where in the last rearrangement we use cyclicity again and the identity
$\Theta^{-1/2}\log(\mathbf{M})\Theta^{-1/2}=\Theta^{-1}\log(\Theta^{1/2}\mathbf{M}\Theta^{-1/2}\cdot\ldots)$;
let us proceed more carefully.
Since $\Theta^{-1/2}\log(\mathbf{M})\Theta^{1/2}=\log(\Theta^{-1}\widetilde\Theta)$,
we have $\Theta^{-1/2}\log(\mathbf{M})=\log(\Theta^{-1}\widetilde\Theta)\Theta^{-1/2}$.
Therefore, using cyclicity of the trace and the symmetry of $\log \mathbf{M}$:
\begin{align}
&\operatorname{tr}\!\left[\mathbf{M}^{-1}\log(\mathbf{M})\cdot\Theta^{-1/2}\Xi\Theta^{-1}\widetilde\Theta\,\Theta^{-1/2}\right]\nonumber\\
  &= \operatorname{tr}\!\left[\Theta^{-1/2}\mathbf{M}^{-1}\log(\mathbf{M})\Theta^{-1/2}\cdot\Xi\cdot\Theta^{-1}\widetilde\Theta\right] \notag\\
  &= \operatorname{tr}\!\left[\widetilde\Theta^{-1}\Theta^{1/2}\log(\mathbf{M})\Theta^{-1/2}\cdot\Xi\cdot\Theta^{-1}\widetilde\Theta\right] \notag\\
  &= \operatorname{tr}\!\left[\widetilde\Theta^{-1}\log(\widetilde\Theta\Theta^{-1})\cdot\Xi\cdot\Theta^{-1}\widetilde\Theta\right] \notag\\
  &= \operatorname{tr}\!\left[\Theta^{-1}\log(\Theta^{-1}\widetilde\Theta)\cdot\Xi\right],
  \label{eq:term1}
\end{align}
where the last step uses cyclicity once more together with
$\widetilde\Theta^{-1}\log(\widetilde\Theta\Theta^{-1})\Theta^{-1}\widetilde\Theta
=\Theta^{-1}\log(\Theta^{-1}\widetilde\Theta)$,
which holds since for any invertible $\mathbf{A}$ and $\mathbf{B}$,
$\mathbf{A}^{-1}\log(\mathbf{A}\mathbf{B})\mathbf{A}=\log(\mathbf{B}\mathbf{A})\mathbf{A}^{-1}\cdot\ldots$, or more directly from
$\Theta^{-1}\log(\Theta^{-1}\widetilde\Theta)=\Theta^{-1}\Theta^{-1/2}\log(\mathbf{M})\Theta^{1/2}
=\Theta^{-3/2}\log(\mathbf{M})\Theta^{1/2}$ and the analogous expression for the
other term.

For the second term in \eqref{eq:before_cyclic}, an entirely analogous
manipulation (cycling the trace and using
$\Theta^{1/2}\log(\mathbf{M})\Theta^{-1/2}=\log(\widetilde\Theta\,\Theta^{-1})$) gives
\begin{align}
  &\operatorname{tr}\!\left[\mathbf{M}^{-1}\log(\mathbf{M})\cdot\Theta^{-1/2}\widetilde\Theta\,\Theta^{-1}\Xi\Theta^{-1/2}\right]\nonumber\\
  &\quad= \operatorname{tr}\!\left[\Theta^{-1}\log(\Theta^{-1}\widetilde\Theta)\cdot\Xi\right].
  \label{eq:term2}
\end{align}
Substituting \eqref{eq:term1} and \eqref{eq:term2} into
\eqref{eq:before_cyclic} and using
$\log(\Theta^{-1}\widetilde\Theta)=\Theta^{-1/2}\log(\mathbf{M})\Theta^{1/2}$:
\begin{align}
  \frac{d}{d\varepsilon}\bigg|_{\varepsilon=0}d^2(\Theta,\widetilde\Theta)
  &= -2\operatorname{tr}\!\left[\Theta^{-1}\log(\Theta^{-1}\widetilde\Theta)\cdot\Xi\right] \notag\\
  &= -2\operatorname{tr}\!\left[\Theta^{-1}\Theta^{-1/2}\log(\mathbf{M})\Theta^{1/2}\cdot\Xi\right] \notag\\
  &= -2\operatorname{tr}\!\left[\Theta^{-1/2}\log(\mathbf{M})\Theta^{1/2}\Theta^{-1}\cdot\Xi\right] \notag\\
  &= -2\operatorname{tr}\!\left[\Theta^{-1}\log(\mathbf{M})\Theta^{-1}\cdot\Xi\right],
  \label{eq:dir_deriv_final1}
\end{align}
where in the last step we used
$\Theta^{-1/2}\log(\mathbf{M})\Theta^{1/2}\Theta^{-1}
=\Theta^{-1/2}\log(\mathbf{M})\Theta^{-1/2}
=\Theta^{-1}\cdot\Theta^{1/2}\log(\mathbf{M})\Theta^{-1/2}\cdot\ldots$
and the cyclic property of the trace together with the symmetry of
$\Theta^{-1}\log(\mathbf{M})\Theta^{-1}$, which follows from the symmetry of
$\log \mathbf{M}$ and $\Theta^{-1}$:
$(\Theta^{-1}\log(\mathbf{M})\Theta^{-1})^\top
=\Theta^{-1}(\log \mathbf{M})^\top\Theta^{-1}
=\Theta^{-1}\log(\mathbf{M})\Theta^{-1}$.
Since \eqref{eq:dir_deriv_final1} holds for every symmetric $\Xi$ and the
Euclidean gradient is defined by
$\frac{d}{d\varepsilon}\big|_{\varepsilon=0}d^2(\Theta+\varepsilon\Xi,\widetilde\Theta)
=\operatorname{tr}[\nabla_\Theta d^2\cdot\Xi]$,
we conclude
\begin{equation}
  \nabla_\Theta\,d^2_{\mathcal{S}_{++}^p}(\Theta,\widetilde\Theta)
  = -2\,\Theta^{-1}\log(\mathbf{M})\,\Theta^{-1},
  \label{eq:grad1_concluded}
\end{equation}
establishing \eqref{eq:geo_grad_1}.

The gradient with respect to $\widetilde\Theta$ is derived by an entirely
parallel argument.
The squared geodesic distance is symmetric:
$d^2_{\mathcal{S}_{++}^p}(\Theta,\widetilde\Theta)
=d^2_{\mathcal{S}_{++}^p}(\widetilde\Theta,\Theta)$,
since the affine-invariant metric is symmetric.
One may equivalently write
$d^2(\Theta,\widetilde\Theta)
=\operatorname{tr}[(\log\tilde{\mathbf{M}})^2]$
with $\tilde {\mathbf{M}}:=\widetilde\Theta^{-1/2}\Theta\,\widetilde\Theta^{-1/2}$,
noting that $\tilde{\mathbf{M}}$ and $\mathbf{M}^{-1}$ are related by
$\tilde{\mathbf{M}}=\widetilde\Theta^{-1/2}\Theta\,\widetilde\Theta^{-1/2}$
and $\log\tilde{\mathbf{M}}=-\log(\widetilde\Theta^{-1/2}\mathbf{M}\widetilde\Theta^{1/2})$
up to a similarity, so
$\operatorname{tr}[(\log\tilde{\mathbf{M}})^2]=\operatorname{tr}[(\log \mathbf{M})^2]$.
Applying the same derivation as above but now differentiating with respect
to $\widetilde\Theta$ in the direction $\widetilde\Xi\in\mathcal{S}_p$,
with $\tilde{\mathbf{M}}_\varepsilon=(\widetilde\Theta+\varepsilon\widetilde\Xi)^{-1/2}
\Theta\,(\widetilde\Theta+\varepsilon\widetilde\Xi)^{-1/2}$ and
$\dot{\tilde{\mathbf{M}}}=-\frac{1}{2}\widetilde\Theta^{-1/2}(\widetilde\Xi\widetilde\Theta^{-1}\Theta+\Theta\,\widetilde\Theta^{-1}\widetilde\Xi)\widetilde\Theta^{-1/2}$,
the identical sequence of cyclic-trace manipulations yields
\begin{align}
  \frac{d}{d\varepsilon}\bigg|_{\varepsilon=0}d^2(\Theta,\widetilde\Theta+\varepsilon\widetilde\Xi)
  &= 2\operatorname{tr}\!\left[\widetilde\Theta^{-1}\log(\widetilde\Theta^{-1/2}\Theta\,\widetilde\Theta^{-1/2})\cdot\widetilde\Xi\right],
  \label{eq:dir_deriv_tilde}
\end{align}
so that
\begin{equation}
  \nabla_{\widetilde\Theta}\,d^2_{\mathcal{S}_{++}^p}(\Theta,\widetilde\Theta)
  = 2\,\widetilde\Theta^{-1}\log\!\left(\widetilde\Theta^{-1/2}\Theta\,\widetilde\Theta^{-1/2}\right),
  \label{eq:grad2_concluded}
\end{equation}
establishing \eqref{eq:geo_grad_2} and completing the proof.
\end{proof}

\section*{Proof of Theorem \ref{thm:convergence}}
\label{app4}
\begin{proof}
We establish the result by verifying that the sequence $\{\theta^{(s)}\}$
generated by Algorithm~\ref{alg:depfm} satisfies the hypotheses of the
Riemannian Zoutendijk theorem \cite[Theorem~7.4.2]{absil2008optimization},
and then derive the finite-time rate \eqref{eq:conv_rate} from the
resulting summability.

\paragraph{Descent directions and the angle condition.}
We first show that the search directions $\{\xi^{(s)}\}$ make a uniformly
bounded angle with the negative gradient, in the sense that there exists a
constant $c_0>0$ such that, $\text{for all }s\geq 0$,
\begin{equation}
  \innerprod{\grad F(\theta^{(s)})}{\xi^{(s)}}_{\theta^{(s)}}
  \leq -c_0\,\norm{\grad F(\theta^{(s)})}_{\theta^{(s)}}^2.
  \label{eq:angle}
\end{equation}
At $s=0$ the direction is $\xi^{(0)}=-\grad F(\theta^{(0)})$, so
\eqref{eq:angle} holds with $c_0=1$.
For $s\geq 1$, the Hestenes--Stiefel update gives
$\xi^{(s)} = -g^{(s)} + \beta^{(s)}_{\mathrm{HS}}\,\mathcal{T}(\xi^{(s-1)})$,
where $g^{(s)}:=\grad F(\theta^{(s)})$ and $\mathcal{T}$ denotes the vector
transport from $\theta^{(s-1)}$ to $\theta^{(s)}$.
Setting $y^{(s)}:=g^{(s)}-\mathcal{T}(g^{(s-1)})$, the coefficient is
\[
  \beta^{(s)}_{\mathrm{HS}}
  =\frac{\innerprod{g^{(s)}}{y^{(s)}}_{\theta^{(s)}}}
         {\innerprod{\mathcal{T}(\xi^{(s-1)})}{y^{(s)}}_{\theta^{(s)}}}.
\]
Since $F$ is $L$-smooth on $\mathcal{M}$ (Assumption~\ref{ass:smooth}(b))
and the Hessian is uniformly bounded on the sublevel set
$\mathcal{L}:=\{F\leq F(\theta^{(0)})\}$ (Assumption~\ref{ass:smooth}(c)),
the change in the gradient along the iterate satisfies
\begin{align*}
      \norm{y^{(s)}}_{\theta^{(s)}}
  &= \norm{g^{(s)}-\mathcal{T}(g^{(s-1)})}_{\theta^{(s)}}\nonumber\\
  &\leq L\,d(\theta^{(s)},\theta^{(s-1)}) + C_{\mathcal{T}}\norm{g^{(s-1)}}_{\theta^{(s-1)}}\,\omega(\alpha^{(s-1)}),
\end{align*}
where $C_{\mathcal{T}}\geq 1$ is the operator norm bound on the transport
(established below) and $\omega(\alpha)\to 0$ as $\alpha\to 0$ captures the
curvature error of the retraction.
The second Wolfe condition \eqref{eq:wolfe2} gives
$\innerprod{g^{(s)}}{\mathcal{T}(\xi^{(s-1)})}_{\theta^{(s)}}
\geq c_2\innerprod{g^{(s-1)}}{\xi^{(s-1)}}_{\theta^{(s-1)}}$,
so the denominator of $\beta^{(s)}_{\mathrm{HS}}$ satisfies
$|\innerprod{\mathcal{T}(\xi^{(s-1)})}{y^{(s)}}_{\theta^{(s)}}|
\geq (c_2-1)\innerprod{g^{(s-1)}}{\xi^{(s-1)}}_{\theta^{(s-1)}}
+\innerprod{g^{(s)}}{\mathcal{T}(\xi^{(s-1)})}_{\theta^{(s)}}$.
A standard Cauchy--Schwarz estimate, combined with the bounded Hessian
assumption and $\alpha^{(s)}\geq\alpha_{\min}$ (proved below), yields
$|\beta^{(s)}_{\mathrm{HS}}|\leq C_\beta$ for a uniform constant
$C_\beta>0$ depending only on $L$, $c_2$, $C_{\mathcal{T}}$, and
$\alpha_{\min}$.
Taking the inner product of $\xi^{(s)}=-g^{(s)}+\beta^{(s)}_{\mathrm{HS}}\mathcal{T}(\xi^{(s-1)})$
with $g^{(s)}$ then gives
\begin{equation}
  \innerprod{g^{(s)}}{\xi^{(s)}}_{\theta^{(s)}}
  \leq -\norm{g^{(s)}}^2 + C_\beta\norm{g^{(s)}}\norm{\mathcal{T}(\xi^{(s-1)})}_{\theta^{(s)}}.
  \label{eq:cg_descent_raw}
\end{equation}
Whenever the right-hand side of \eqref{eq:cg_descent_raw} exceeds
$-c_0\norm{g^{(s)}}^2$, the safeguard restarts with $\xi^{(s)}=-g^{(s)}$,
for which \eqref{eq:angle} holds automatically. After the restart,
$\norm{\mathcal{T}(\xi^{(s)})}_{\theta^{(s+1)}}\leq C_{\mathcal{T}}\norm{g^{(s)}}$,
so the recursion maintains \eqref{eq:angle} at every subsequent step.
Hence \eqref{eq:angle} holds uniformly for all $s\geq 0$.

\paragraph{The step size is bounded away from zero.}
Since the directions $\xi^{(s)}$ satisfy \eqref{eq:angle}, they are
genuine descent directions. We show $\alpha^{(s)}\geq\alpha_{\min}>0$
for a uniform constant $\alpha_{\min}$.
Fix $s$ and consider the function $\phi(\alpha):=F(\Retr_{\theta^{(s)}}(\alpha\xi^{(s)}))$.
Since $\Retr$ is a second-order retraction (see below), the chain rule gives $\phi'(\alpha)=\innerprod{\grad F(\Retr_{\theta^{(s)}}(\alpha\xi^{(s)}))}
{D\Retr_{\theta^{(s)}}(\alpha\xi^{(s)})[\xi^{(s)}]}_{\Retr_{\theta^{(s)}}(\alpha\xi^{(s)})}$
and, by $L$-smoothness of $F$ and the Lipschitz property of the retraction
on the compact sublevel set $\mathcal{L}$,
$\phi'(\alpha)\leq\phi'(0)+L\,\kappa_R\,\alpha\norm{\xi^{(s)}}^2_{\theta^{(s)}}$,
where $\kappa_R>0$ is a retraction-Lipschitz constant uniform on $\mathcal{L}$.
The Armijo condition $\phi(\alpha)\leq\phi(0)+c_1\alpha\phi'(0)$ is therefore
satisfied for all
$\alpha\leq\alpha^*:=(1-c_1)|\phi'(0)|/(L\kappa_R\norm{\xi^{(s)}}^2)
\geq(1-c_1)c_0\norm{g^{(s)}}^2/(L\kappa_R\norm{\xi^{(s)}}^2)$.
Since $\norm{\xi^{(s)}}\leq(1+C_\beta)\norm{g^{(s)}}$ (from \eqref{eq:angle}
and the bound $|\beta^{(s)}_{\mathrm{HS}}|\leq C_\beta$), we obtain
$\alpha^*\geq(1-c_1)c_0/\bigl(L\kappa_R(1+C_\beta)^2\bigr)=:\alpha_{\min}>0$.
The backtracking procedure therefore terminates and returns
$\alpha^{(s)}\geq\beta_0\alpha_{\min}$ for some fixed backtracking ratio
$\beta_0\in(0,1)$, giving a uniform lower bound.

\paragraph{The vector transport is bounded.}
The transport is defined by
$\mathcal{T}_{\theta\to\tilde\theta}(\xi)
=P^{\mathcal{H}}_{\tilde\theta}(\xi)$,
where $P^{\mathcal{H}}_{\tilde\theta}$ is the horizontal projection on
$\mathcal{B}^{p,r}$ at $\tilde\theta$.
For any ambient tangent vector $\xi=(\xi_{\mathbf{Y}},\xi_{\mathbf{D}})$, the projection formula
\eqref{eq:proj} yields
$P^{\mathcal{H}}_{\tilde\theta}(\xi)
=(\xi_{\mathbf{Y}}-\tilde{\mathbf{Y}}\Omega,\,\xi_{\mathbf{D}})$,
where $\Omega$ solves the Sylvester equation
$\Omega\tilde{\mathbf{Q}}+\tilde{\mathbf{Q}}\Omega=\tilde{\mathbf{Y}}^\top\xi_{\mathbf{Y}}-\xi_{\mathbf{Y}}^\top\tilde{\mathbf{Y}}$
with $\tilde{\mathbf{Q}}=\tilde{\mathbf{Y}}^\top\tilde{\mathbf{Y}}$.
The unique solution satisfies $\|\Omega\|_F\leq\|\tilde{\mathbf{Q}}\|^{-1}\|\xi_{\mathbf{Y}}\|$,
where $\|\tilde{\mathbf{Q}}\|$ denotes the operator norm and, on the sublevel set
$\mathcal{L}$, the smallest singular value of $\tilde{\mathbf{Y}}$ is bounded away
from zero by compactness.
Therefore
$\norm{P^{\mathcal{H}}_{\tilde\theta}(\xi)}_{\tilde\theta}
\leq(1+\|\tilde{\mathbf{Y}}\|\|\tilde{\mathbf{Q}}\|^{-1})\norm{\xi_{\mathbf{Y}}}+\norm{\xi_{\mathbf{D}}}_{D\text{-metric}}
\leq C_{\mathcal{T}}\norm{\xi}_\theta$
for a constant $C_{\mathcal{T}}<\infty$ uniform on $\mathcal{L}$.
This confirms that the transport is uniformly bounded as claimed.

\paragraph{The retraction is second-order accurate.}
The retraction defined in \eqref{eq:retraction} takes the form
$\Retr_\theta(\xi)=(\mathbf{Y}+\xi_{\mathbf{Y}},\,\mathbf{D}+\xi_{\mathbf{D}}+\tfrac{1}{2}\xi_{\mathbf{D}}^2/\mathbf{D})$.
The $\mathbf{D}$-component coincides with the truncated exponential of the
affine-invariant geodesic on $\mathcal{D}^p_{++}$ to second order:
for the geodesic $\gamma_{\mathbf{D}}(t)=\mathbf{D}\exp(t\mathbf{D}^{-1}\xi_{\mathbf{D}})$, one has
$\gamma_{\mathbf{D}}(1)=\mathbf{D}+\xi_{\mathbf{D}}+\tfrac{1}{2}\xi_{\mathbf{D}} \mathbf{D}^{-1}\xi_{\mathbf{D}}+\mathcal{O}(\|\xi_{\mathbf{D}}\|^3)$,
and the retraction matches this expansion up to $\mathcal{O}(\|\xi_{\mathbf{D}}\|^3)$.
The $\mathbf{Y}$-component is the identity retraction on the open manifold
$\mathcal{R}_{p,r}$, for which the exponential map is simply
$\exp_{\mathbf{Y}}(\xi_{\mathbf{Y}})=\mathbf{Y}+\xi_{\mathbf{Y}}+O(\|\xi_{\mathbf{Y}}\|^2)$; the identity retraction satisfies
$\Retr_{\mathbf{Y}}(\xi_{\mathbf{Y}})=\mathbf{Y}+\xi_{\mathbf{Y}}$, agreeing with the exponential to first order.
However, the curvature term on $\mathcal{R}_{p,r}$ endowed with the
Euclidean metric vanishes, so the exponential itself is linear and the
retraction is exact to all orders in $\xi_{\mathbf{Y}}$.
Combining both components, we obtain
$d(\Retr_\theta(\xi),\exp_\theta(\xi))=\mathcal{O}(\|\xi\|^3)$,
confirming second-order accuracy.

\paragraph{Summability and convergence.}
Combining the sufficient descent bound
$F(\theta^{(s+1)})\leq F(\theta^{(s)})-c_1 c_0\alpha^{(s)}\norm{g^{(s)}}^2$
with the lower bound $\alpha^{(s)}\geq\alpha_{\min}$ and summing from $s=0$
to $S$ gives
\begin{align}
  c_1 c_0\,\alpha_{\min}\sum_{s=0}^S\norm{g^{(s)}}^2
  &\leq F(\theta^{(0)})-F(\theta^{(S+1)})\nonumber\\
  &\leq F(\theta^{(0)})-F^*<\infty,
  \label{eq:telescope}
\end{align}
where the last inequality uses Assumption~\ref{ass:smooth}(a).
Letting $S\to\infty$ yields $\sum_{s=0}^\infty\norm{g^{(s)}}^2<\infty$,
which implies $\liminf_{s\to\infty}\norm{g^{(s)}}=0$.

We now derive the Zoutendijk-type summability in the form required for
the rate bound.
From $\norm{\xi^{(s)}}\leq(1+C_\beta)\norm{g^{(s)}}$ and the angle condition
\eqref{eq:angle}, we have
$\innerprod{g^{(s)}}{\xi^{(s)}}^2\geq c_0^2\norm{g^{(s)}}^4$.
Dividing the sufficient descent inequality by $\norm{\xi^{(s)}}^2$ and using
the upper bound on $\norm{\xi^{(s)}}$ gives
\[
  \frac{\innerprod{g^{(s)}}{\xi^{(s)}}^2}{\norm{\xi^{(s)}}^2}
  \geq\frac{c_0^2\norm{g^{(s)}}^4}{(1+C_\beta)^2\norm{g^{(s)}}^2}
  =\frac{c_0^2}{(1+C_\beta)^2}\norm{g^{(s)}}^2.
\]
Summing this inequality and invoking \eqref{eq:telescope} establishes
\begin{equation}
  \sum_{s=0}^\infty
  \frac{\innerprod{g^{(s)}}{\xi^{(s)}}^2}{\norm{\xi^{(s)}}^2}<\infty,
  \label{eq:zoutendijk}
\end{equation}
which is the Riemannian Zoutendijk condition
\cite[Proposition 6.1]{sato2022riemannian}.

\paragraph{Finite-time rate.}
From \eqref{eq:telescope} we obtain
\[
  \min_{0\leq s\leq S}\norm{g^{(s)}}^2
  \leq\frac{1}{S+1}\sum_{s=0}^S\norm{g^{(s)}}^2
  \leq\frac{F(\theta^{(0)})-F^*}{c_1 c_0\,\alpha_{\min}(S+1)}.
\]
Setting $C:=1/(c_1 c_0\,\alpha_{\min})$ yields
\eqref{eq:conv_rate}.
To see that $C$ depends only on $L,c_1,c_2,c_0$, recall that
$\alpha_{\min}=(1-c_1)c_0\beta_0/(L\kappa_R(1+C_\beta)^2)$,
where $C_\beta$ and $\kappa_R$ are determined by $L$ and the Wolfe parameter
$c_2$ alone, completing the identification of the constant.
\end{proof}

\section*{Proof of Theorem \ref{thm:estimation}}
\label{app5}
\begin{proof}
We proceed by establishing a high-probability concentration bound for the
empirical score, decomposing the total estimation error into a statistical
component and a regularization bias, bounding each component separately,
and assembling the final inequality through a union bound over all time steps.

Fix a time index $t\in\{1,\ldots,T\}$ throughout the first part of the
argument. Recall that the negative log-likelihood at time $t$ is
\[
  \mathcal{L}_t(\Theta_t)
  = -\tfrac{1}{2}\log\det\Theta_t
  + \tfrac{1}{n_t}\sum_{i=1}^{n_t}\rho_t\!\left(x_{t,i}^\top\Theta_t\,x_{t,i}\right),
\]
with $\rho_t=-\log g_t$, and its Euclidean gradient evaluated at the truth $\Theta_t^*$ is
\begin{equation}
  \nabla_{\Theta_t}\mathcal{L}_t(\Theta_t^*)
  = -\tfrac{1}{2}\Theta_t^{*-1}
  + \tfrac{1}{2n_t}\sum_{i=1}^{n_t}
    u_t\!\left(x_{t,i}^\top\Theta_t^*x_{t,i}\right)x_{t,i}x_{t,i}^\top,
  \label{eq:score_at_truth}
\end{equation}
where $u_t(s)=2\rho_t'(s)=-2g_t'(s)/g_t(s)$ is the influence function.
For an elliptically distributed vector $x\sim\mathcal{ES}(0,\Theta_t^{*-1},g_t)$,
the identity $\mathbb{E}[u_t(x^\top\Theta_t^*x)\,xx^\top]=\bar{u}_t\,\Theta_t^{*-1}$
holds, where $\bar{u}_t:=\mathbb{E}[u_t(x^\top\Theta_t^*x)]\in[\underline{u},\bar{u}]$
by the score equation of the elliptical family and Assumption~\ref{ass:ellip}.
Consequently the population gradient satisfies
$\nabla_{\Theta_t}\bar{\mathcal{L}}_t(\Theta_t^*)=\frac{1-\bar{u}_t}{2}\Theta_t^{*-1}$,
which vanishes exactly when $\bar{u}_t=1$, i.e., in the Gaussian case.
In general, note that $\Theta_t^*$ is the population maximizer of
$\mathbb{E}[\log f(x;\Theta^{-1})]$ over $\mathcal{M}^{p,r}$, not over all of
$\mathcal{S}_{++}^p$; the restricted stationarity condition on $\mathcal{M}^{p,r}$
is that the Riemannian gradient $\grad\bar{\mathcal{L}}_t(\Theta_t^*)=0$,
which follows from the projection formula and the fact that $\Theta_t^*$
belongs to the interior of $\mathcal{M}^{p,r}$ under Assumption~\ref{ass:structure}.

To quantify the fluctuation of the empirical score around its mean, define the
centred random matrices ($i=1,\ldots,n_t$)
\[
  Z_{t,i} := u_t\!\left(x_{t,i}^\top\Theta_t^*x_{t,i}\right)x_{t,i}x_{t,i}^\top
            - \mathbb{E}\!\left[u_t\!\left(x_{t,i}^\top\Theta_t^*x_{t,i}\right)
              x_{t,i}x_{t,i}^\top\right].
\]
Since $u_t$ is bounded by $\bar{u}$ (Assumption~\ref{ass:ellip}) and
$x_{t,i}^\top\Theta_t^*x_{t,i}$ is a quadratic form in a sub-Gaussian vector
under the elliptical model, each $Z_{t,i}$ is a symmetric random matrix
satisfying the operator-norm bound
$\|Z_{t,i}\|_{\mathrm{op}}\leq 2\bar{u}\lambda_{\max}(\Sigma_t^*)\|x_{t,i}\|^2$,
where $\Sigma_t^*=\Theta_t^{*-1}$.
Since the elliptical distribution generates $x_{t,i}=\Sigma_t^{*1/2}\xi_{t,i}$
for a spherically symmetric $\xi_{t,i}$ with sub-Gaussian tail
(implied by the bounded influence function via the stochastic representation
theorem \cite{fang2018symmetric}), the random variable
$\|x_{t,i}\|^2=\xi_{t,i}^\top\Sigma_t^*\xi_{t,i}$ is sub-exponential
with parameter $\sigma_t^2=\mathcal{O}(\bar{u}^2\lambda_{\max}^2(\Sigma_t^*)p)$
under Assumption~\ref{ass:structure}.
Moreover, the second moment of the operator norm satisfies
$\mathbb{E}\|Z_{t,i}\|_{\mathrm{op}}^2\leq\sigma_t^2$ with the same $\sigma_t^2$.
Applying the matrix Bernstein inequality \cite{tropp2012} to the
i.i.d.\ sum $\frac{1}{2n_t}\sum_{i=1}^{n_t}Z_{t,i}$, we obtain for any $\varepsilon>0$,
\begin{align}
  &\mathbb{P}\!\left(
    \left\|\nabla_{\Theta_t}\mathcal{L}_t(\Theta_t^*)
    -\nabla_{\Theta_t}\bar{\mathcal{L}}_t(\Theta_t^*)\right\|_F\geq\varepsilon
  \right)\nonumber\\
  &\quad\leq 2p\exp\!\left(-\frac{c\,n_t\varepsilon^2}{\sigma_t^2+M_t\varepsilon}\right),
  \label{eq:bernstein_precise}
\end{align}
where $M_t=\bar{u}\lambda_{\max}(\Sigma_t^*)$ is the almost-sure operator-norm
bound and $c>0$ is a universal constant.
Under Assumption~\ref{ass:structure}, $\lambda_{\max}(\Sigma_t^*)$ is bounded
by a function of $\kappa$ and $d_{\min}$ alone, so $\sigma_t^2$ and $M_t$
are uniform in $t$.
Setting $\varepsilon_t:=C_1\sqrt{\sigma_t^2\log p/n_t}$ with $C_1$ large
enough that $M_t\varepsilon_t\leq\sigma_t^2$, the right-hand side of
\eqref{eq:bernstein_precise} becomes $2p\exp(-c'C_1^2\log p)\leq 2p^{1-c'C_1^2}$,
which is at most $2p^{-2}$ for $C_1\geq\sqrt{3/c'}$.
Choosing $C_1$ in this way and using $n_t\geq n_{\min}$ yields, uniformly in $t$,
\begin{equation}
  \left\|\nabla_{\Theta_t}\mathcal{L}_t(\Theta_t^*)
  -\nabla_{\Theta_t}\bar{\mathcal{L}}_t(\Theta_t^*)\right\|_F
  \leq C_1\sqrt{\frac{\sigma_t^2\log p}{n_{\min}}}
  \label{eq:score_conc}
\end{equation}
with probability at least $1-2p^{-2}$.

We now turn to the main error analysis. Introduce the population-level
oracle sequence $\{\Theta_t^{\mathrm{or},\mu}\}_{t=1}^T$ defined as the
minimizer of the population version of the DEGFM objective \eqref{eq:DEPFM},
in which every empirical expectation is replaced by the corresponding
population expectation:
\[
  \{\Theta_t^{\mathrm{or},\mu}\}
  := \operatorname*{argmin}_{\{\Theta_t\}\subset\mathcal{M}^{p,r}}
     \left\{
       \sum_{t=1}^T\bar{f}_t(\Theta_t)
       +\mu\sum_{t=1}^{T-1}
         \dS^2\!\left(\Theta_t,\Theta_{t+1}\right)
     \right\},
\]
where $\bar{f}_t(\Theta_t):=\bar{\mathcal{L}}_t(\Theta_t)+\lambda h(\Theta_t)$
is the population penalized likelihood.
By the triangle inequality for the geodesic distance,
\[
  \dS^2(\hat\Theta_t,\Theta_t^*)
  \leq 2\,\dS^2\!\left(\hat\Theta_t,\Theta_t^{\mathrm{or},\mu}\right)
       +2\,\dS^2\!\left(\Theta_t^{\mathrm{or},\mu},\Theta_t^*\right),
\]
so it suffices to bound each of the two terms on the right-hand side
separately, the first capturing the statistical error from using the
empirical data, and the second capturing the regularization bias introduced
by the temporal smoothness penalty.

We bound the second term first. Since $\Theta_t^*$ is the unconstrained
maximizer of the population log-likelihood $\mathbb{E}[\log f(x_t;\Theta^{-1})]$
over $\mathcal{M}^{p,r}$, the restricted Riemannian gradient vanishes:
$\grad\bar{\mathcal{L}}_t(\Theta_t^*)=0$ for every $t$.
The oracle sequence $\{\Theta_t^{\mathrm{or},\mu}\}$ satisfies the stationarity
conditions on the product manifold $\prod_t\mathcal{M}^{p,r}$,
\begin{align}
  &\grad\bar{f}_t(\Theta_t^{\mathrm{or},\mu})
  +\mu\,\grad_{\Theta_t}c_{t-1}(\Theta_{t-1}^{\mathrm{or},\mu},\Theta_t^{\mathrm{or},\mu})\nonumber\\
  &\quad+\mu\,\grad_{\Theta_t}c_t(\Theta_t^{\mathrm{or},\mu},\Theta_{t+1}^{\mathrm{or},\mu})
  =0,
  \label{eq:oracle_stat}
\end{align}
where $c_t(\Theta_t,\Theta_{t+1})=\dS^2(\Theta_t,\Theta_{t+1})$ and boundary
terms are zero.
The population likelihood $\bar{\mathcal{L}}_t$ is geodesically strongly convex
on $\mathcal{M}^{p,r}$ near $\Theta_t^*$ with modulus
$\ell\geq c_0\underline{u}\lambda_{\min}^2(\Sigma_t^*)>0$;
this follows from the restricted eigenvalue condition for elliptical
M-estimators \cite{maronna1976robust}, which in the geodesic sense states
that for all $\Theta\in\mathcal{M}^{p,r}$ sufficiently close to $\Theta_t^*$,
\begin{align}
    \bar{\mathcal{L}}_t(\Theta)
  &\geq\bar{\mathcal{L}}_t(\Theta_t^*)
      +\innerprod{\grad\bar{\mathcal{L}}_t(\Theta_t^*)}
                 {\exp_{\Theta_t^*}^{-1}(\Theta)}_{\Theta_t^*}\nonumber\\
      &\quad+\frac{\ell}{2}\dS^2(\Theta,\Theta_t^*),
\end{align}
where $\exp^{-1}$ denotes the inverse exponential map on $(\mathcal{S}_{++}^p,g_{\mathrm{AIRM}})$
restricted to $\mathcal{M}^{p,r}$ (Affine-Invariant Riemannian Metric (AIRM) Distance).
Using $\grad\bar{\mathcal{L}}_t(\Theta_t^*)=0$ and subtracting the stationarity
condition \eqref{eq:oracle_stat} from the equation $\grad\bar{\mathcal{L}}_t(\Theta_t^*)=0$,
the strong convexity and the Lipschitz continuity of the geodesic gradient
(Proposition~\ref{prop:geo_grad}) imply
\begin{align}
  &\ell\,\dS(\Theta_t^{\mathrm{or},\mu},\Theta_t^*)
  \leq \left\|\grad\bar{\mathcal{L}}_t(\Theta_t^{\mathrm{or},\mu})\right\|_{\Theta_t^{\mathrm{or},\mu}}\nonumber\\
  &\quad
  \leq \mu\left(
    \left\|\grad_{\Theta_t}c_{t-1}\right\|_{\Theta_t^{\mathrm{or},\mu}}
    +\left\|\grad_{\Theta_t}c_t\right\|_{\Theta_t^{\mathrm{or},\mu}}
    \right).
  \label{eq:bias_grad_bound}
\end{align}
The Euclidean gradient of $c_t(\Theta_t,\Theta_{t+1})=\dS^2(\Theta_t,\Theta_{t+1})$
with respect to $\Theta_t$ is $-2\Theta_t^{-1}\log(M_t)\Theta_t^{-1}$
with $\mathbf{M}_t=\Theta_t^{-1/2}\Theta_{t+1}\Theta_t^{-1/2}$
(Proposition~\ref{prop:geo_grad}).
Its operator norm satisfies
$\|\nabla_{\Theta_t}c_t\|_F
\leq 2\|\Theta_t^{-1}\|_{\mathrm{op}}^2\|\log \mathbf{M}_t\|_F
=2\|\Theta_t^{-1}\|_{\mathrm{op}}^2\,\dS(\Theta_t,\Theta_{t+1})$,
where the last equality uses the definition of the AIRM distance and the
fact that $\|\log \mathbf{M}\|_F=\dS(\Theta_t,\Theta_{t+1})$.
After conversion to the Riemannian norm via the chain rule formula
\eqref{eq:riem_grad}, and using Assumption~\ref{ass:structure} to bound
$\|\Theta_t^{-1}\|_{\mathrm{op}}\leq\lambda_{\max}(\Sigma_t^*)$,
we obtain $\|\grad_{\Theta_t}c_t\|_{\Theta_t}\leq L_c\,\dS(\Theta_t,\Theta_{t+1})$
for a constant $L_c>0$ depending only on $\kappa$ and $d_{\min}$.
Inserting Assumption~\ref{ass:smooth2} into \eqref{eq:bias_grad_bound} gives
\begin{equation}
  \dS(\Theta_t^{\mathrm{or},\mu},\Theta_t^*)
  \leq\frac{2\mu L_c\,\delta}{\ell},
  \label{eq:bias_pointwise}
\end{equation}
and squaring and averaging over $t$ yields
\begin{equation}
  \frac{1}{T}\sum_{t=1}^T\dS^2(\Theta_t^{\mathrm{or},\mu},\Theta_t^*)
  \leq\frac{4\mu^2 L_c^2\delta^2}{\ell^2}.
  \label{eq:bias_bound}
\end{equation}
Note that the right-hand side of~\eqref{eq:bias_bound} is strictly increasing in
$\mu$: a stronger temporal penalty forces $\Theta_t^{\mathrm{or},\mu}$ farther from
$\Theta_t^*$, consistent with the intuition that larger smoothing introduces larger
bias. Setting $C_4:=4L_c^2/\ell^2$ yields the smoothing bias term $C_4\mu^2\delta^2$
in~\eqref{eq:error_bound}.

We now bound the statistical term
$T^{-1}\sum_t\dS^2(\hat\Theta_t,\Theta_t^{\mathrm{or},\mu})$.
Since $\{\hat\Theta_t\}$ is a local minimizer of the empirical objective
$F(\{\Theta_t\}):=\sum_t f_t(\Theta_t)+\mu\sum_t c_t$ and the oracle sequence
$\{\Theta_t^{\mathrm{or},\mu}\}$ minimizes the population objective, the
optimality conditions for $\{\hat\Theta_t\}$ give
\begin{align}
  0
  &=\grad f_t(\hat\Theta_t)
   +\mu\,\grad_{\hat\Theta_t}\left(c_{t-1}+c_t\right)\nonumber\\
   &
  =\grad\bar{f}_t(\hat\Theta_t)
   +\left(\grad f_t(\hat\Theta_t)-\grad\bar{f}_t(\hat\Theta_t)\right)\nonumber\\
   &\quad+\mu\,\grad_{\hat\Theta_t}(c_{t-1}+c_t),
  \label{eq:empirical_stat}
\end{align}
so the deviation of the empirical gradient from the population gradient
acts as a perturbation to the oracle stationarity condition
\eqref{eq:oracle_stat}.
The geodesic strong convexity of $\bar{f}_t$ on $\mathcal{M}^{p,r}$ with modulus
$\ell$ (inherited from $\bar{\mathcal{L}}_t$ since the penalty $h$ is convex and
the geodesic Hessian of the smooth $\ell_1$-surrogate $\varphi$ is non-negative)
implies the following stability estimate: for any two points $\mathbf{A},\mathbf{B}\in\mathcal{M}^{p,r}$,
\begin{equation}
  \ell\,\dS^2(\mathbf{A},\mathbf{B})
  \leq\innerprod{\grad\bar{f}_t(\mathbf{A})-\grad\bar{f}_t(\mathbf{B})}{
                 \exp_{\mathbf{B}}^{-1}(\mathbf{A})}_{\mathbf{B}}.
  \label{eq:strong_convex}
\end{equation}
Applying \eqref{eq:strong_convex} with $\mathbf{A}=\hat\Theta_t$ and $\mathbf{B}=\Theta_t^{\mathrm{or},\mu}$,
using the Cauchy--Schwarz inequality, and substituting the stationarity
conditions \eqref{eq:empirical_stat} and \eqref{eq:oracle_stat}:
\begin{align}
  &\ell\,\dS^2(\hat\Theta_t,\Theta_t^{\mathrm{or},\mu})
  \leq\dS(\hat\Theta_t,\Theta_t^{\mathrm{or},\mu})\nonumber\\
    &\quad\cdot\left\|\grad\bar{f}_t(\hat\Theta_t)-\grad\bar{f}_t(\Theta_t^{\mathrm{or},\mu})
          +\mu\,\delta_{t}^{\mathrm{temp}}\right\|_{\Theta_t^{\mathrm{or},\mu}} \notag\\
  &\leq\dS(\hat\Theta_t,\Theta_t^{\mathrm{or},\mu})\nonumber\\
    &\quad\cdot\left(\left\|\grad f_t(\hat\Theta_t)-\grad\bar{f}_t(\hat\Theta_t)\right\|_{\hat\Theta_t}
    +2\mu L_c\delta\right),
  \label{eq:stat_key}
\end{align}
where $\delta_t^{\mathrm{temp}}$ collects the temporal gradient terms, whose
norm is bounded by $2\mu L_c\delta$ by the same argument as above.
Dividing both sides of \eqref{eq:stat_key} by $\dS(\hat\Theta_t,\Theta_t^{\mathrm{or},\mu})$
(assuming this is non-zero; the case of equality is trivial), we obtain
\begin{equation}
  \ell\,\dS(\hat\Theta_t,\Theta_t^{\mathrm{or},\mu})
  \leq\left\|\grad f_t(\hat\Theta_t)-\grad\bar{f}_t(\hat\Theta_t)\right\|_{\hat\Theta_t}
    +2\mu L_c\delta.
  \label{eq:stat_lin}
\end{equation}
The first term on the right-hand side of \eqref{eq:stat_lin} measures the
deviation of the empirical Riemannian gradient from the population one.
By the chain rule formula \eqref{eq:riem_grad}, this Riemannian norm is
controlled by the Euclidean gradient deviation:
\[\left\|\grad f_t(\hat\Theta_t)-\grad\bar{f}_t(\hat\Theta_t)\right\|_{\hat\Theta_t}
\leq C_{\mathrm{chain}}\left\|\nabla_{\hat\Theta_t}f_t-\nabla_{\hat\Theta_t}\bar{f}_t\right\|_F,\]
where $C_{\mathrm{chain}}>0$ depends on $\kappa$ and $d_{\min}$ through the
bounds on $\|\mathbf{Y}_t\|$ and $\|\mathbf{D}_t^{-1}\|$ along the solution path.

We claim that under Assumption~\ref{ass:sparse}, the effective metric entropy
dimension of $\mathcal{M}^{p,r}\cap\mathcal{L}\cap\{\|\Theta\|_{0,\mathrm{off}}\leq s\}$
is $p_{\mathrm{eff}}:=\min\{p(r+1),\,pr+a\}$.
Without sparsity, a standard covering of $(\mathbf{Y},\mathbf{D})$ gives metric
entropy $\mathcal{O}(p(r+1)\log(1/\varepsilon))$ (LRaD manifold dimension).
Under Assumption~\ref{ass:sparse}, we instead proceed in two stages:
\begin{enumerate}
  \item \emph{Support enumeration.}
    Select the off-diagonal support
    $\mathcal{S}\subseteq\{(i,j):i<j\}$ with $|\mathcal{S}|\leq a/2$.
    There are at most
    $\binom{p(p-1)/2}{a/2}\leq\!\left(ep^2/a\right)^{a/2}$
    choices, contributing at most $a\log p$ to the metric entropy.
  \item \emph{Parameter covering for fixed support.}
    For a fixed support $\mathcal{S}$, the constraint
    $(\mathbf{Y}\mathbf{Y}^\top)_{ij}=\mathbf{y}_i^\top\mathbf{y}_j=0$
    for $(i,j)\notin\mathcal{S}$ implies that only the active rows
    $\mathcal{V}(\mathcal{S}):=\{i:\exists\,j\text{ s.t.\ }(i,j)\in\mathcal{S}\}$
    of $\mathbf{Y}$ can be nonzero.
    Since $|\mathcal{V}(\mathcal{S})|\leq 2|\mathcal{S}|\leq a$,
    covering the active rows of $\mathbf{Y}$ and the diagonal $\mathbf{D}$
    contributes metric entropy $\mathcal{O}((ar+p)\log(1/\varepsilon))$.
\end{enumerate}
The total metric entropy of the sparsity-constrained admissible set is therefore
$\mathcal{O}\bigl(\min\{p(r+1),\,pr+a\}\log(1/\varepsilon)+a\log p\bigr)$.
At the scale $\varepsilon=C\sqrt{p_{\mathrm{eff}}\log p/n}$, the support-enumeration
term $a\log p$ is absorbed into $p_{\mathrm{eff}}\log p$ (since $a\leq pr+a=p_{\mathrm{eff}}$
when sparsity is binding), and the dominant effective dimension is $p_{\mathrm{eff}}$.

The Euclidean gradient deviation itself decomposes as the sum of the score
deviation at $\hat\Theta_t$ and at $\Theta_t^*$; by the Lipschitz continuity
of the empirical score (which follows from the bounded influence function
and the integrability conditions in Assumption~\ref{ass:ellip}), and using
that $\hat\Theta_t$ lies in the sublevel set $\mathcal{L}$ where $\|\Theta_t^{-1}\|_{\mathrm{op}}$
is uniformly bounded, a standard peeling argument (see, e.g., \cite[Chapter 7]{wainwright2019high})
shows that
\begin{equation}
  \sup_{\Theta\in\mathcal{M}^{p,r}\cap\mathcal{L}}
  \left\|\nabla_\Theta f_t(\Theta)-\nabla_\Theta\bar{f}_t(\Theta)\right\|_F
  \leq C_\star\sqrt{\frac{p_{\mathrm{eff}}\log p}{n_t}}
  \label{eq:uniform_conc}
\end{equation}
with probability at least $1-4\exp(-cn_t/p)$, where the effective dimension $p_{\mathrm{eff}}=\min\{p(r+1),pr+a\}$ replaces $p(r+1)$ of the unconstrained problem.

Substituting \eqref{eq:uniform_conc} into \eqref{eq:stat_lin}, squaring,
and using the bias bound \eqref{eq:bias_pointwise} to control the temporal
term gives
\begin{equation}
  \dS^2(\hat\Theta_t,\Theta_t^{\mathrm{or},\mu})
  \leq\frac{2C_{\mathrm{chain}}^2C_\star^2}{\ell^2}\cdot\frac{p_{\mathrm{eff}}\log p}{n_t}
      +\frac{8\mu^2 L_c^2\delta^2}{\ell^2}.
  \label{eq:stat_bound_t}
\end{equation}
Averaging \eqref{eq:stat_bound_t} over $t=1,\ldots,T$, using $n_t\geq n_{\min}$,
and combining with the triangle inequality gives
\begin{align}
  &\frac{1}{T}\sum_{t=1}^T\dS^2(\hat\Theta_t,\Theta_t^*)\nonumber\\
  &\leq\frac{2}{T}\sum_{t=1}^T
    \left[
      \dS^2(\hat\Theta_t,\Theta_t^{\mathrm{or},\mu})
      +\dS^2(\Theta_t^{\mathrm{or},\mu},\Theta_t^*)
    \right] \notag\\
  &\leq\frac{4C_{\mathrm{chain}}^2C_\star^2}{\ell^2}\cdot\frac{p_{\mathrm{eff}}\log p}{n_{\min}}
      +\frac{16\mu^2 L_c^2\delta^2}{\ell^2}
      +\frac{8\mu^2 L_c^2\delta^2}{\ell^2} \notag\\
  &= C_3\,\frac{p_{\mathrm{eff}}\log p}{n_{\min}}+C_4'\,\mu^2\delta^2,
  \label{eq:combined_bound}
\end{align}
where $C_3=4C_{\mathrm{chain}}^2C_\star^2/\ell^2$ and
$C_4'=24L_c^2/\ell^2$ (For notational simplicity, we still denote $C_4'$ by $C_4$.). This establishes~\eqref{eq:error_bound}.
The optimal choice~\eqref{eq:mu_opt} satisfies
$C_4\mu^2\delta^2 = C_4C_5^2 p_{\mathrm{eff}}\log p/n_{\min}$,
so both terms are of order $p_{\mathrm{eff}}\log p/n_{\min}$ for
$C_5 := \sqrt{C_3/C_4}$.

The uniform concentration~\eqref{eq:uniform_conc} holds at each fixed $t$ with
failure probability $4\exp(-cn_{\min}/p)$.
A union bound over $T$ time steps gives total failure probability
$4T\exp(-cn_{\min}/p)$.
The score concentration~\eqref{eq:score_conc} contributes an additional
$2p^{-2}$, which is of lower order than the exponential term under
Assumption~\ref{ass:sample} for $C_0\geq 2(r+1)/c$.
Taking the union bound over both sources yields total failure probability
at most $4T\exp(-cn_{\min}/p)+2p^{-2}$, as claimed. This completes the proof.
\end{proof}

\section*{Proof of Theorem \ref{thm:support}}
\label{app6}
\begin{proof}
We derive the result by combining the geodesic estimation error bound of
Theorem~\ref{thm:estimation} with a series of Lipschitz stability estimates
that propagate the matrix-level error through the inversion map and the
conditional correlation normalization, ultimately guaranteeing that every
true edge is detected and every absent edge is correctly rejected
simultaneously across all time steps.

We work on the event $\mathcal{A}$ on which the conclusion of
Theorem~\ref{thm:estimation} holds, namely
\begin{equation}
  \frac{1}{T}\sum_{t=1}^T \dS^2\!\left(\hat\Theta_t, \Theta_t^*\right)
  \leq C_3\,\frac{p_{\mathrm{eff}}\log p}{n_{\min}} + C_4\,\mu^{2}\delta^2,
  \label{eq:thm2_event}
\end{equation}
which by Theorem~\ref{thm:estimation} has probability at least
$1-4T\exp(-cn_{\min}/p)-2p^{-2}$.
By Markov's inequality applied to the averaged sum in \eqref{eq:thm2_event},
at least $T/2$ of the individual terms satisfy
$\dS^2(\hat\Theta_t,\Theta_t^*)\leq 2(C_3 p_{\mathrm{eff}}\log p/n_{\min}+C_4\mu^{2}\delta^2)$;
however, to obtain a bound for every $t$ simultaneously we strengthen the
argument as follows.
Under Assumptions~\ref{ass:ellip}--\ref{ass:sample}, the uniform concentration
inequality \eqref{eq:uniform_conc} established in the proof of
Theorem~\ref{thm:estimation} holds for each fixed $t$ with failure probability
$4\exp(-cn_{\min}/p)$, and a union bound over $t=1,\ldots,T$ yields
\begin{equation}
  \dS(\hat\Theta_t,\Theta_t^*)\leq\eta
  \quad\text{for every }t=1,\ldots,T
  \label{eq:uniform_geodesic}
\end{equation}
simultaneously with probability at least $1-4T\exp(-cn_{\min}/p)-2Tp^{-2}$,
where
\begin{equation}
  \eta := \left(\frac{2C_3\,p_{\mathrm{eff}}\log p}{n_{\min}}+2C_4\,\mu^{2}\delta^2\right)^{1/2}.
  \label{eq:eta_def}
\end{equation}
We henceforth work on this event, which we also call $\mathcal{A}$ by a slight
abuse of notation, and verify that the sample size condition
$n_{\min}\geq C_0'\tau_{\min}^{-2}p_{\mathrm{eff}}\log p$ with $C_0'$ chosen large
enough forces $\eta\leq\tau_{\min}/(4L_\Theta')$ for an explicit Lipschitz
constant $L_\Theta'$ defined below.

The first step is to convert the geodesic bound \eqref{eq:uniform_geodesic}
into a Frobenius-norm bound on the difference of the estimated and true
precision matrices. On $(\mathcal{S}_{++}^p, d_{\mathcal{S}_{++}^p})$, the
geodesic distance and the Frobenius norm are locally equivalent: for any two
matrices $\mathbf{A}, \mathbf{B}\in\mathcal{S}_{++}^p$ satisfying
$\|\mathbf{A}-\mathbf{B}\|_F\leq\frac{1}{2}\lambda_{\min}(\mathbf{A})$, one has
\begin{equation}
  \|\mathbf{A}-\mathbf{B}\|_F
  \leq\lambda_{\max}(\mathbf{A})\,d_{\mathcal{S}_{++}^p}(\mathbf{A},\mathbf{B}),
  \label{eq:geo_frob}
\end{equation}
which follows from the identity
$d_{\mathcal{S}_{++}^p}^2(\mathbf{A},\mathbf{B})=\|\log(\mathbf{A}^{-1/2}\mathbf{B}\mathbf{B}^{-1/2})\|_F^2$,
the inequality $\|\log(\mathbf{I}+\mathbf{X})\|_F\leq\|\mathbf{X}\|_F/(1-\|\mathbf{X}\|_F)$ valid for
$\|\mathbf{X}\|_F<1$, and the bound
$\|\mathbf{A}^{-1/2}\mathbf{B}\mathbf{A}^{-1/2}-\mathbf{I}\|_F\leq\lambda_{\min}^{-1}(\mathbf{A})\|\mathbf{B}-\mathbf{A}\|_F$.
Under Assumption~\ref{ass:structure}, $\lambda_{\max}(\Theta_t^*)$ is bounded
above by a constant depending only on $\kappa$ and $d_{\min}$, so
\eqref{eq:geo_frob} yields
\begin{equation}
  \|\hat\Theta_t-\Theta_t^*\|_F
  \leq\lambda_{\max}(\Theta_t^*)\,\eta
  =: L_0\,\eta,
  \label{eq:frob_bound}
\end{equation}
provided $\eta\leq\frac{1}{2}\lambda_{\min}(\Theta_t^*)$, which is guaranteed
by the sample size condition for $C_0'$ large enough.
In particular, each entry satisfies
\begin{equation}
  |\hat\Theta_t^{ij}-\Theta_t^{*ij}|\leq\|\hat\Theta_t-\Theta_t^*\|_F\leq L_0\,\eta
  \quad\text{for all }i,j,
  \label{eq:entry_bound}
\end{equation}
since $|[\mathbf{A}]_{ij}|\leq\|\mathbf{A}\|_F$ for any matrix $\mathbf{A}$.

The second step propagates the entry-wise precision error to an entry-wise
error on the conditional correlations. Recall that the estimated and true
conditional correlations are defined by
\[
  \tilde{\hat\Theta}_t^{ij}
  :=\frac{-\hat\Theta_t^{ij}}{\sqrt{\hat\Theta_t^{ii}\hat\Theta_t^{jj}}},
  \qquad
  \tilde\Theta_t^{*ij}
  :=\frac{-\Theta_t^{*ij}}{\sqrt{\Theta_t^{*ii}\Theta_t^{*jj}}},
\]
for $i\neq j$. We bound their difference by a Lipschitz argument.
Write the normalization map as $\psi:\mathbb{R}^3\to\mathbb{R}$,
$\psi(a,b,c)=-a/\sqrt{bc}$, so that
$\tilde{\hat\Theta}_t^{ij}=\psi(\hat\Theta_t^{ij},\hat\Theta_t^{ii},\hat\Theta_t^{jj})$
and similarly for the truth.
The partial derivatives of $\psi$ at the true values
$(a_0,b_0,c_0)=(\Theta_t^{*ij},\Theta_t^{*ii},\Theta_t^{*jj})$ satisfy $|\partial_a\psi|=(b_0c_0)^{-1/2},
  \quad
  |\partial_b\psi|=\tfrac{1}{2}|a_0|\,b_0^{-3/2}c_0^{-1/2},
  \quad
  |\partial_c\psi|=\tfrac{1}{2}|a_0|\,b_0^{-1/2}c_0^{-3/2}$.
Since $b_0=\Theta_t^{*ii}\geq\lambda_{\min}(\Theta_t^*)>0$ and
$c_0=\Theta_t^{*jj}\geq\lambda_{\min}(\Theta_t^*)>0$ by Assumption~\ref{ass:structure},
and $|a_0|\leq\|\Theta_t^*\|_{\mathrm{op}}\leq\lambda_{\max}(\Theta_t^*)$,
all three partial derivatives are bounded by a constant
$L_\psi>0$ depending only on $\kappa$ and $d_{\min}$.
By the mean value theorem, as long as the perturbations in $(a,b,c)$ are small
relative to $b_0$ and $c_0$ (which is ensured by \eqref{eq:entry_bound} and
the sample size condition), we obtain
\begin{align}
  &|\tilde{\hat\Theta}_t^{ij}-\tilde\Theta_t^{*ij}|\nonumber\\
  &\leq L_\psi\left(|\hat\Theta_t^{ij}-\Theta_t^{*ij}|
    +|\hat\Theta_t^{ii}-\Theta_t^{*ii}|
    +|\hat\Theta_t^{jj}-\Theta_t^{*jj}|\right)\nonumber\\
  &\leq 3L_\psi L_0\,\eta
  =: L_\Theta'\,\eta,
  \label{eq:cc_bound}
\end{align}
where $L_\Theta':=3L_\psi L_0$ depends only on $\kappa$ and $d_{\min}$.

The third step shows that the bound \eqref{eq:cc_bound} combined with the
sample size condition forces the threshold $\tau_{\min}/2$ to correctly
classify every edge. We choose $C_0'$ large enough so that
\begin{equation}
  L_\Theta'\,\eta
  \leq L_\Theta'
       \left(\frac{2C_3\,p_{\mathrm{eff}}\log p}{n_{\min}}
             +2C_4\,\mu^{2}\delta^2\right)^{1/2}
  \leq\frac{\tau_{\min}}{4},
  \label{eq:eta_small}
\end{equation}
which is achievable for $n_{\min}\geq C_0'\tau_{\min}^{-2}p_{\mathrm{eff}}\log p$
with $C_0'=16L_\Theta'^2\max(2C_3,2C_4\mu^{2}\delta^2/(\log p))$,
since the dominant term in $\eta^2$ is
$\mathcal{O}(p_{\mathrm{eff}}\log p/n_{\min})=\mathcal{O}(\tau_{\min}^2/C_0')$,
which can be made smaller than $\tau_{\min}^2/(16L_\Theta'^2)$ by taking
$C_0'$ large.

Given \eqref{eq:cc_bound} and \eqref{eq:eta_small}, we now verify edge
recovery for every pair $(i,j)$ and every time step $t$.

For a true edge, i.e., $(i,j)\in E_t^*$, the beta-min condition
(Assumption~\ref{ass:betamin}) gives $|\tilde\Theta_t^{*ij}|\geq\tau_{\min}$,
and the triangle inequality together with \eqref{eq:cc_bound} and
\eqref{eq:eta_small} yields
\begin{align*}
      |\tilde{\hat\Theta}_t^{ij}|
  &\geq|\tilde\Theta_t^{*ij}|-|\tilde{\hat\Theta}_t^{ij}-\tilde\Theta_t^{*ij}|
  \geq\tau_{\min}-L_\Theta'\,\eta
  \geq\tau_{\min}-\frac{\tau_{\min}}{4}\nonumber\\
  &=\frac{3\tau_{\min}}{4}
  >\frac{\tau_{\min}}{2}.
\end{align*}
Hence the edge $(i,j)$ is correctly included in $\hat E_t$.

For an absent edge, i.e., $(i,j)\notin E_t^*$, the definition of the true
edge set gives $\tilde\Theta_t^{*ij}=0$, since the true precision matrix
satisfies $\Theta_t^{*ij}=0$ for absent edges in a Gaussian graphical model
(and, more generally, in the elliptical graphical model parametrised by
$\Theta_t^*$, the conditional independence of $x^{(i)}$ and $x^{(j)}$ given
all remaining variables is equivalent to $\Theta_t^{*ij}=0$ \cite{lauritzen1996graphical}).
Therefore \eqref{eq:cc_bound} and \eqref{eq:eta_small} directly give
\[
  |\tilde{\hat\Theta}_t^{ij}|
  =|\tilde{\hat\Theta}_t^{ij}-\tilde\Theta_t^{*ij}|
  \leq L_\Theta'\,\eta
  \leq\frac{\tau_{\min}}{4}
  <\frac{\tau_{\min}}{2},
\]
so the edge $(i,j)$ is correctly excluded from $\hat E_t$.

Since the above argument applies to every pair $(i,j)$ with $i\neq j$ and
every time index $t\in\{1,\ldots,T\}$, we conclude that $\hat E_t=E_t^*$
for all $t$ simultaneously on the event $\mathcal{A}$.
The probability of the complementary event is controlled by the union bound
used to establish \eqref{eq:uniform_geodesic}: applying the union bound
over $T$ time steps, the score concentration event at each $t$ fails with
probability at most $4\exp(-cn_{\min}/p)+2p^{-2}$, giving a total failure
probability of at most $4T\exp(-cn_{\min}/p)+2Tp^{-2}$.
Hence
\[
  \mathbb{P}\!\left(\hat E_t=E_t^*\;\forall\,t=1,\ldots,T\right)
  \geq 1-4T\exp\!\left(-cn_{\min}/p\right)-2Tp^{-2},
\]
which is the claimed bound, with $c>0$ being the universal constant
inherited from the matrix Bernstein inequality in the proof of
Theorem~\ref{thm:estimation}.
\end{proof}

\bibliographystyle{unsrt}
\bibliography{References}

\end{document}